\documentclass{article}

    \PassOptionsToPackage{numbers, compress}{natbib}



\usepackage{amsmath,amsfonts,amssymb}
\usepackage{amsthm}  {
      \newtheorem{definition}{Definition}
      \newtheorem{lemma}{Lemma}
      \newtheorem{theorem}{Theorem}

      \newtheorem{observation}{Observation}
}

\usepackage{tikz}
\usetikzlibrary{shapes,backgrounds}
\usetikzlibrary{positioning}
\usetikzlibrary{arrows}
\usetikzlibrary{fit}

\DeclareMathOperator*{\argmax}{argmax}

\usepackage[final]{neurips_2020}

\usepackage[utf8]{inputenc} 
\usepackage[T1]{fontenc}    
\usepackage{hyperref}       
\usepackage{url}            
\usepackage{booktabs}       
\usepackage{amsfonts}       
\usepackage{nicefrac}       
\usepackage{microtype}      

\usepackage{algpseudocode}
\usepackage{algorithm}

\algnewcommand{\IfThen}[2]{
  \State \algorithmicif\ #1\ \algorithmicthen\ #2}

\usepackage{graphicx}
\usepackage{wrapfig}

\title{Multi-agent active perception with prediction rewards}

%

\author{%
  Mikko Lauri \\
  Department of Computer Science\\
  Universit\"{a}t Hamburg\\
  Hamburg, Germany \\
  \texttt{lauri@informatik.uni-hamburg.de} \\
  \And
  Frans A.~Oliehoek \\
  Department of Computer Science\\
  TU Delft\\
  Delft, the Netherlands \\
  \texttt{f.a.oliehoek@tudelft.nl} \\
}

\begin{document}

\maketitle

\begin{abstract}
Multi-agent active perception is a task where a team of agents cooperatively gathers observations to compute a joint estimate of a hidden variable.
The task is decentralized and the joint estimate can only be computed after the task ends by fusing observations of all agents.
The objective is to maximize the accuracy of the estimate.
The accuracy is quantified by a centralized prediction reward determined by a centralized decision-maker who perceives the observations gathered by all agents after the task ends.
In this paper, we model multi-agent active perception as a decentralized partially observable Markov decision process (Dec-POMDP) with a convex centralized prediction reward.
We prove that by introducing individual prediction actions for each agent, the problem is converted into a standard Dec-POMDP with a decentralized prediction reward.
The loss due to decentralization is bounded, and we give a sufficient condition for when it is zero.
Our results allow application of any Dec-POMDP solution algorithm to multi-agent active perception problems, and enable planning to reduce uncertainty without explicit computation of joint estimates.
We demonstrate the empirical usefulness of our results by applying a standard Dec-POMDP algorithm to multi-agent active perception problems, showing increased scalability in the planning horizon.
\end{abstract}

\section{Introduction}
\emph{Active perception}, collecting observations to reduce uncertainty about a hidden variable, is one of the fundamental capabilities of an intelligent agent~\citep{Bajcsy2018}.
In \emph{multi-agent active perception} a team of autonomous agents cooperatively gathers observations to infer the value of a hidden variable.
Application domains include search and rescue robotics, sensor networks, and distributed hypothesis testing.
A multi-agent active perception task often has a finite duration: after observations have been gathered, they are collected to a central database for inference.
While the inference phase is centralized, the observation gathering phase is decentralized: each agent acts independently, without knowing the observations collected by the other agents nor guaranteed communication to the other agents.

The key problem in multi-agent active perception is to determine how each agent should act during the decentralized phase to maximize the informativeness of the collected observations, evaluated afterwards during the centralized inference phase.
The problem can be formalized as a decentralized partially observable Markov decision process (Dec-POMDP)~\citep{Bernstein2002,Oliehoek2016}, a general model of sequential multi-agent decision-making under uncertainty.
At each time step in a Dec-POMDP, each agent in the team takes an individual action.
The next state is determined according to a Markov chain conditional on the current hidden state and all individual actions.
Each agent then perceives an individual observation correlated with the next state and the individual actions.
The agents should act so as to maximize the expected sum of shared rewards, accumulated at each time step over a finite horizon.

In the decentralized phase, the per-step reward depends on the hidden state and the individual actions of all agents.
In typical Dec-POMDPs, the reward on all time steps is of this form.
Active perception problems are modelled by a reward that is a convex function of the team's joint estimate of the hidden state, for example the negative entropy~\citep{Lauri:2019:IGD:3306127.3331815}.
This encourages agents to act in ways that lead to joint state estimates with low uncertainty.
In analogy to the centralized inference phase, the reward at the final time step can be thought of as a \emph{centralized prediction reward} where a centralized decision-maker perceives the individual action-observation histories of all agents and determines a reward based on the corresponding joint state estimate.
Due to the choice of reward function, algorithms for standard Dec-POMDPs are not applicable to such active perception problems.
Despite the pooling of observations after the end of the task, the problem we target is decentralized. 
We design a strategy for each agent to act independently during the observation gathering phase, without knowing for sure how the others acted or what they perceived.
Consequently, the joint state estimate is not available to any agent during the observation gathering phase.
Strategies executable in a centralized manner are available only if all-to-all communication during task execution is possible, which we do not assume.
As decentralized strategies are a strict subset of centralized strategies, the best decentralized strategy is at most as good as the best centralized strategy~\cite{oliehoek2008optimal}.

In this paper, we show that the convex centralized prediction reward can be converted to a \emph{decentralized prediction reward} that is a function of the hidden state and so-called \emph{individual prediction actions}.
This converts the Dec-POMDP with a convex centralized prediction reward into a standard Dec-POMDP with rewards that depend on the hidden state and actions only.
This enables solving multi-agent active perception problems without explicit computation of joint state estimates applying \emph{any} standard Dec-POMDP algorithm.
We show that the error induced when converting the centralized prediction reward into a decentralized one, the \emph{loss due to decentralization}, is bounded.
We also give a sufficient condition for when the loss is zero, meaning that the problems with the centralized, respectively decentralized, prediction rewards are equivalent.
We prove the empirical usefulness of our results by applying standard Dec-POMDP solution algorithms to active perception problems, demonstrating improved scalability over the state-of-the-art.

The remainder of the paper is organized as follows.
We review related work in Section~\ref{sec:related_work}, and give preliminary definitions for Dec-POMDPs in Section~\ref{sec:problem}.
In Section~\ref{sec:conversion}, we introduce our proposed conversion of a centralized prediction reward to a decentralized prediction reward.
We propose a method to apply standard Dec-POMDP algorithms to multi-agent active perception in Section~\ref{sec:adaptive_prediction_action_search}, and empirically evaluate the method in Section~\ref{sec:experiments}.
Section~\ref{sec:conclusion} concludes the paper.

\section{Related work} 
\label{sec:related_work}
We briefly review possible formulations of multi-agent active perception problems, and then focus on the Dec-POMDP model that provides the most general formulation.

Multi-agent active perception has been formulated as a distributed constraint optimization problem (DCOP), submodular maximization, or as a specialized variant of a partially observable Markov decision process (POMDP).
Probabilistic DCOPs with partial agent knowledge have been applied to signal source localization~\citep{Jain2009,Taylor2010}.
DCOPs with Markovian dynamics have been proposed for target tracking by multiple sensors~\citep{Nguyen2014}. 
DCOPs are a simpler model than Dec-POMDPs, as a fixed communication structure is assumed or the noise in the sensing process is not modelled.
Submodular maximization approaches assume the agents' reward can be stated as a submodular set function, and apply distributed greedy maximization to obtain an approximate solution~\citep{Singh2009,Gharesifard2017,Corah2019}.
Along with the structure of the reward function, inter-agent communication is typically assumed.
Specialized variants of POMDPs may also be applied.
If all-to-all communication without delay during task execution is available, centralized control is possible and the problem can be solved as a multi-agent POMDP~\citep{spaan2009decision}.
Auctioning of POMDP policies can facilitate multi-agent cooperation when agents can communicate~\citep{Capitan2013}.
Best et al.~\cite{best2019dec} propose a decentralized Monte Carlo tree search planner where agents periodically communicate their open-loop plans to each other.

Multi-agent active perception without implicit communication with uncertainty on state transitions and the agents' perception may be modelled as a Dec-$\rho$POMDP~\citep{Lauri2017,Lauri:2019:IGD:3306127.3331815}.
In contrast to standard Dec-POMDPs, the reward function in a Dec-$\rho$POMDP is a convex function of the joint state estimate, for example the negative entropy.
Unfortunately, because of the convex reward, standard Dec-POMDP solution algorithms are not applicable.
Furthermore, the heuristic algorithm proposed in~\citep{Lauri:2019:IGD:3306127.3331815,Lauri_JAAMAS2020} requires explicit computation of reachable joint state estimates.
Computing and storing these joint state estimates adds significant memory overhead to the already high computational burden of solving Dec-POMDPs~\citep{Bernstein2002}.
We address both shortcomings, showing that a Dec-$\rho$POMDP can be converted to a standard Dec-POMDP with linear rewards.
This in turn enables applying standard planning algorithms that do not require computation of reachable joint state estimates, leading to improved scalability.

Our work in the decentralized setting draws inspiration from approaches using prediction rewards in single-agent and centralized scenarios.
Araya-L{\'o}pez et al.~\cite{Araya2010} proposed to tackle single-agent active perception as a $\rho$POMDP with a convex reward.
The related POMDP with information rewards (POMDP-IR) was proposed in~\cite{spaan2015decision}.
The POMDP-IR model adds prediction actions that the agent selects in addition to the usual actions.
Active perception is facilitated by rewarding the agent for correctly predicting the true underlying state.
The equivalence of $\rho$POMDP and POMDP-IR model was later established~\cite{Satsangi2018}.
Recently, Satsangi et al.~\cite{Satsangi2020} propose a reinforcement learning method to solve $\rho$POMDPs taking advantage of the equivalence.
In this paper we prove an analogous equivalence for Dec-POMDPs, by converting a Dec-$\rho$POMDP to a standard Dec-POMDP with individual prediction actions and a decentralized prediction reward.
Unlike in the POMDP case, the conversion does not always result in a perfect equivalence, but is associated with a loss due to decentralization.


\section{Multi-agent active perception as a Dec-POMDP}
\label{sec:problem}
In this section, we review how active perception problems are modelled as Dec-POMDPs.
We also review plan-time sufficient statistics, which allow us to concisely express joint state estimates and value functions of a Dec-POMDP.
We concentrate on the practically relevant active perception problem where a prediction reward at the final time step depends on the joint state estimate.

\subsection{Decentralized POMDPs}
\label{subsec:problem_formulation}
We define a Dec-POMDP where the action and observation spaces along with the reward function are time-dependent, as this will be convenient for our subsequent introduction of prediction actions.
\begin{definition}
\label{def:decpomdp}
A Dec-POMDP is a tuple $\mathcal{M}=\langle h$, $I$, $S$, $b_0$, $\mathcal{A}$, $\mathcal{Z}$, $T$, $\mathcal{R}\rangle$, where
\begin{itemize}
	\item $h \in \mathbb{N}$ is the time horizon of the problem,
	\item $I = \{1, 2, \ldots, n\}$ is a set of $n$ agents,
	\item $S$ is the finite set of states $s$,
	\item $b_0 \in \Delta(S)$ is the initial state distribution at time step $t=0$,
	\item $\mathcal{A}$ is the collection of individual action spaces $A_{i,t}$ for each agent $i\in I$ and time step $t=0, \ldots, h-1$. The tuple $a_t = \langle a_{1,t}, a_{2,t}, \ldots, a_{n,t} \rangle$ of individual actions is called the joint action at time step $t$,
	\item $\mathcal{Z}$ is the collection of individual observation spaces $Z_{i,t}$ for each agent $i\in I$ and time step $t=1, \ldots, h$. The tuple $z_t =\langle z_{1,t}, z_{2,t}, \ldots, z_{n,t}\rangle$ of individual observations is called the joint observation at time step $t$,
	\item $T$ is the dynamics function specifying conditional probabilities $\mathbb{P}(z_{t+1}, s_{t+1} \mid s_t, a_t)$, and
	\item $\mathcal{R}$ is the collection of reward functions $R_t(s_t,a_t)$ for time steps $t=0, \ldots, h-1$.
\end{itemize}
\end{definition}

An admissible solution of a Dec-POMDP is a decentralized joint policy $\pi$, i.e., a tuple $\langle \pi_1, \ldots, \pi_n \rangle$ where the individual policy $\pi_i$ of each agent $i$ maps individual observation sequences $\vec{z}_{i,t} = (z_{i,1}, \ldots, z_{i,t})$ to an individual action.\footnote{$\vec{z}_{i,0} = \emptyset$ as there is no observation at time $t=0$.}
An individual policy is a sequence $\pi_i = (\delta_{i,0}, \ldots, \delta_{i,h-1})$ of individual decision rules that map length-$t$ individual observation sequences to an individual action $\delta_{i,t}(\vec{z}_{i,t})=a_{i,t}$.
A joint decision rule is a tuple $\delta_t = \langle \delta_{1,t}, \ldots, \delta_{n,t}\rangle$ that maps a length-$t$ joint observation sequence $\vec{z}_t = \langle \vec{z}_{1,t}, \ldots, \vec{z}_{n,t}\rangle$ to a joint action $\delta_t(\vec{z}_t) = a_t$.
We shall use notation $\vec{z}_{-i,t}$ to denote the individual observation sequences of all agents \emph{except} $i$.

The objective is to find an optimal joint policy $\pi^*$ that maximizes the expected sum of rewards, that is, $\pi^* = \argmax_{\pi}\mathbb{E}\left[\sum_{t=0}^{h-1}R_t(s_t,\delta_t(\vec{z}_t))\right]$, where the expectation is with respect to the distribution of states and joint observation sequences induced under $\pi$, i.e., $\mathbb{P}(s_0, \ldots, s_{h}, \vec{z}_h \mid b_0, \pi) \triangleq b_0(s_0)\prod_{t=0}^{h-1} T(z_{t+1}, s_{t+1} \mid s_t, \delta_t(\vec{z}_t))$.

In this paper, we are interested in cooperative active perception.
We assume that after the task terminates at time step $h$, the joint observation sequence $\vec{z}_{h}$ is used to compute the conditional distribution $b_h\in\Delta(S)$ over the final state $s_h$, that is, $b_h(s_h) \triangleq \mathbb{P}(s_{h} \mid \vec{z}_h, b_0, \pi)$.
We seek policies that, instead of only maximizing the expected sum of rewards $R_t(s_t, a_t)$, also maximize the informativeness of $b_h$.

\begin{wrapfigure}{r}{0.37\textwidth}
  \centering
  \includegraphics[width=0.32\textwidth]{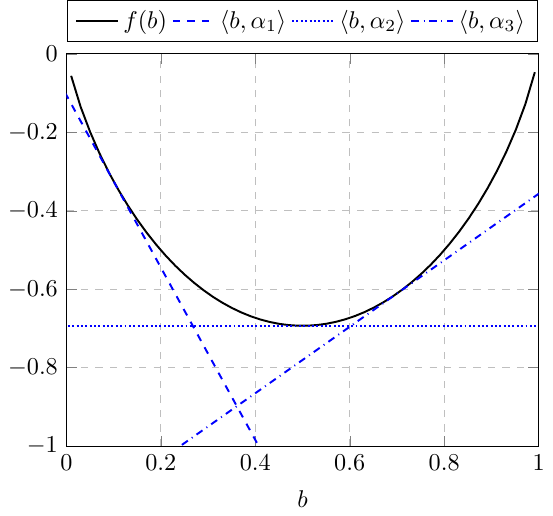}
  \caption{A convex function $f$ is approximated by $\max_{i} \langle b, \alpha_i\rangle$ where $\alpha_i$ are tangent hyperplanes and $\langle \cdot, \cdot\rangle$ is the inner product.
  The horizontal axis depicts $b(s)$ in a system with two states $s$ and $s'$, where $b(s')=1-b(s)$.
  A prediction action corresponds to selecting a single $\alpha_i$.
  }
  \label{fig:approx}
\end{wrapfigure}
The Dec-$\rho$POMDP~\citep{Lauri:2019:IGD:3306127.3331815} models active perception by maximizing a convex function of $b_h$, e.g., the negative entropy.
This convex function may be though of as a \emph{centralized prediction reward} that is independent of the agents' individual actions.
Conceptually, the centralized prediction reward is determined by a virtual centralized decision-maker that perceives the agents' individual action and observation sequences at time $h$, computes $b_h$, and then determines the final reward.
We approximate the centralized prediction reward using a piecewise linear and convex function as illustrated in Figure~\ref{fig:approx}.
Consider a bounded, convex, and differentiable function $f\colon\Delta(S) \to \mathbb{R}$.
The tangent hyperplane $\alpha\in\mathbb{R}^{|S|}$ of $f$ at $b\in \Delta(S)$ is $\alpha = \nabla f(b) - f^*(\nabla f(b))$, where $\nabla$ denotes the gradient and $f^*$ is the convex conjugate of $f$~\citep{boyd2004convex}.
We select a finite set of linearization points $b_j \in \Delta(S)$, and define $\Gamma$ as the set of corresponding tangent hyperplanes $\alpha_j$.\footnote{We defer discussion on how we select the linearization points to Section~\ref{sec:adaptive_prediction_action_search}.}
Then, we obtain a lower approximation as $f(b) \geq \max_{\alpha\in\Gamma} \sum_{s}b(s)\alpha(s)$.
This approximation is also used in $\rho$POMDPs, and has a bounded error~\citep{Araya2010}.
The approximation is also used in~\citep{Satsangi2020}, where error bounds for cases such as 0-1 rewards are provided.
The Dec-$\rho$POMDP problem we consider is as follows.
\begin{definition}[Dec-$\rho$POMDP]
\label{def:decrhopomdp}
A Dec-$\rho$POMDP is a pair $\langle \mathcal{M}, \Gamma\rangle$, where $\mathcal{M}$ is a Dec-POMDP and $\Gamma$ is a set of tangent hyperplanes that determine the centralized prediction reward $\rho\colon\Delta(S) \to \mathbb{R}$ defined as $\rho(b) \triangleq \max\limits_{\alpha \in \Gamma} \sum\limits_{s} b(s)\alpha(s)$.
\end{definition}
The standard Dec-POMDP is the special case where $\Gamma$ contains a single all-zero element.
The objective in the Dec-$\rho$POMDP is to find an optimal joint policy that maximizes the expected sum of rewards and the centralized prediction reward, i.e., $\mathbb{E}\left[\sum_{t=0}^{h-1}R_t(s_t,\delta_t(\vec{z}_t)) + \rho(b_h)\right]$.

\subsection{Sufficient plan-time statistics and the optimal value function} 
\label{sub:sufficient_plan_time_statistics}
Sufficient plan-time statistics are probability distributions over states and joint observation sequences given the past joint policy followed by the agents~\citep{Oliehoek13IJCAI}.
A past joint policy at time $t$ is a joint policy specified until time $t$, denoted $\varphi_t= \langle \varphi_{1,t}, \ldots, \varphi_{n,t} \rangle$, where each individual past policy is a sequence of individual decision rules: $\varphi_{i,t}=(\delta_{i,0}, \ldots, \delta_{i,t-1})$.
The sufficient plan-time statistic for initial state distribution $b_0$ and a past joint policy $\varphi_t$ is defined as
\begin{equation}
\sigma_t(s_t,\vec{z}_t) \triangleq \mathbb{P}(s_t, \vec{z}_t \mid b_0, \varphi_t),
\end{equation}
and at the starting time, $\sigma_0(s_0) \triangleq b_0(s_0)$.
The conditional $\sigma_t(\cdot \mid \vec{z}_t)$ is the state distribution after perceiving the joint observation sequence $\vec{z}_t$ when executing policy $\varphi_t$ with initial state distribution $b_0$.
The marginal $\sigma_t(\vec{z}_t)$ is the probability of the joint observation sequence $\vec{z}_t$ given $\varphi_t$ and $b_0$.

Sufficient plan-time statistics are updateable to any extension of the past joint policy $\varphi_t$.
The extension of an individual past policy $\varphi_{i,t}$ by an individual decision rule $\delta_{i,t}$ is defined $\varphi_{i,t+1} = \varphi_{i,t} \circ \delta_{i,t} \triangleq (\delta_{i,0}, \ldots, \delta_{i,t-1}, \delta_{i,t})$.
The extension of a past joint policy $\varphi_t$ by $\delta_t = \langle \delta_{1,t}, \ldots, \delta_{n,t}\rangle$ is defined $\varphi_{t+1} = \varphi_t \circ \delta_t \triangleq \langle \varphi_{1,t+1}, \ldots, \varphi_{n,t+1} \rangle$.
Let $\vec{z}_{t+1} = (\vec{z}_t, z_{t+1})$ be a joint observation sequence that extends $\vec{z}_t$ with $z_{t+1}$.
Given the statistic $\sigma_t$ for $\varphi_t$ and the next joint decision rule $\delta_t$, the updated statistic for $\varphi_{t+1} = \varphi_t \circ \delta_t$ is $\sigma_{t+1} = U_{ss}(\sigma_t, \delta_t)$, where the update operator $U_{ss}$ is defined as 
\begin{equation}
\label{eq:ss_update}
		[U_{ss}(\sigma_t, \delta_t)](s_{t+1}, \vec{z}_{t+1}) \triangleq 
		\sum\limits_{s_t} T(z_{t+1}, s_{t+1}\mid s_t, \delta_t(\vec{z}_t))\sigma_t(s_t,\vec{z}_t).
\end{equation}

The statistic $\sigma_t$ is sufficient to predict the expected immediate reward $\hat{R}_t(\sigma_t, \delta_t)$ of choosing the next joint decision rule $\delta_t$:
\begin{equation}
\label{eq:plan-time-reward}
\hat{R}_t(\sigma_t, \delta_t) = \sum\limits_{\vec{z}_t, s_t} \sigma_t(s_t, \vec{z}_t) R_t(s_t, \delta_t(\vec{z}_t)).
\end{equation}
Furthermore, $\sigma_h$ is sufficient to predict the expected centralized prediction reward $\hat{\rho}(\sigma_h)$:
\begin{equation}
\label{eq:plan-time-final-reward}
\hat{\rho}(\sigma_h) = \sum\limits_{\vec{z}_h}\sigma_h(\vec{z}_h) \max\limits_{\alpha \in \Gamma} \sum\limits_{s_h} \sigma_h(s_h \mid \vec{z}_h) \alpha(s_h).
\end{equation}
In contrast to the reward in earlier stages, $\hat{\rho}$ does not depend on decision rules at all.
We interpret $\hat{\rho}$ as the expectation of the prediction reward of a centralized decision-maker who selects the best $\alpha \in \Gamma$ after perceiving the full history of joint actions and observations at task termination.

Let $\pi$ be a joint policy consisting of the joint decision rules $\delta_0, \delta_1, \ldots, \delta_{h-1}$.
The value-to-go of $\pi$ starting from a sufficient plan-time statistic $\sigma_t$ at time step $t$ is
\begin{equation}
  Q_t^\pi(\sigma_t, \delta_t) = \begin{cases}
    \hat{R}_t(\sigma_t, \delta_t) + \hat{\rho}(\sigma_{t+1}) & \text{if }t = h-1,\\
    \hat{R}_t(\sigma_t, \delta_t) + Q_{t+1}^\pi(\sigma_{t+1}, \delta_{t+1}) & \text{if } 0 \leq t < h-1,
  \end{cases}
\end{equation}
with the shorthand $\sigma_{t+1} = U_{ss}(\sigma_t,\delta_t)$.
The value function $V^\pi$ of a joint policy $\pi$ is defined as the sum of expected rewards when the agents act according to $\pi$, $V^\pi(\sigma_0) \triangleq Q_0^\pi(\sigma_0, \delta_0)$.
The value function of an optimal policy $\pi^*$ is denoted $V^*$, and it satisfies $V^*(\sigma_0) \geq V^\pi(\sigma_0)$ for all $\pi$.
We write $V^\pi_{\mathcal{M}}$ and $V^{\pi}_{\langle\mathcal{M},\Gamma\rangle}$ for the value function in a standard Dec-POMDP and a Dec-$\rho$POMDP, respectively.

\section{Conversion to standard Dec-POMDP}
\label{sec:conversion}
In this section, we show that any Dec-$\rho$POMDP can be converted to a standard Dec-POMDP by adding \emph{individual prediction actions} for each agent, and by introducing a \emph{decentralized prediction reward}.
Each individual prediction action corresponds to a selection of a tangent hyperplane of the centralized prediction reward, as illustrated in Figure~\ref{fig:approx}.
As the individual prediction actions are chosen in a decentralized manner, the decentralized prediction reward never exceeds the centralized prediction reward.
Consequently, we prove that an optimal policy of the standard Dec-POMDP may be applied to the Dec-$\rho$POMDP with bounded error compared to the true optimal policy.
We give a sufficient condition for when this loss due to decentralization is zero and a Dec-$\rho$POMDP is equivalent to a standard Dec-POMDP.

A major implication of our results is that it is possible to apply any Dec-POMDP solver to a Dec-$\rho$POMDP problem.
We approximate a centralized prediction reward by a decentralized prediction reward that only depends on states and actions.
This allows planning for active perception problems without explicit computation of joint state estimates.

We first introduce our proposed conversion and prove its properties, including the error bound.
We conclude the section by giving a sufficient condition for when the two problems are equivalent.
All omitted proofs are found in the supplementary material.

\begin{definition}
\label{def:decpomdp_conversion}
	Given a Dec-$\rho$POMDP $\langle\mathcal{M}, \Gamma\rangle$ with $\mathcal{M}=\langle h$, $I$, $S$, $b_0$, $\mathcal{A}$, $\mathcal{Z}$, $T$, $\mathcal{R}\rangle$, convert it to a standard Dec-POMDP $\mathcal{M}^+=\langle h+1$, $I$, $S$, $b_0$, $\mathcal{A}^+$, $\mathcal{Z}^+$, $T^+$, $\mathcal{R}^+\rangle$ where the horizon is incremented by one and
	\begin{itemize}
		\item in $\mathcal{A}^+$, the individual action space $A_{i,h}$ for each agent $i\in I$ at time $h$ is a set of \textbf{individual prediction actions} $a_{i,h}$, with one individual prediction action for each tangent hyperplane $\alpha \in \Gamma$; for other time steps $A_{i,t}$ are as in $\mathcal{A}$,
		\item in $\mathcal{Z}^+$, the individual observation space $Z_{i,h+1}$ for each agent $i\in I$ at time $h+1$ contains a single null observation; for other time steps $Z_{i,t}$ are as in $\mathcal{Z}$,
		\item $T^+(z_{h+1}, s_{h+1} \mid s_h, a_h)$ has probability of one for the joint null observation and $s_{h+1} = s_h$, and is otherwise zero; for other time steps $T^+$ is equal to $T$, and
		\item in $\mathcal{R}^+$, the reward function $R_h$ at time step $h$ is a linear combination of \textbf{individual prediction rewards} $R_{i,h}$ of each agent, such that for a joint prediction action $a_h = \langle a_{1,h}, \ldots, a_{n,h} \rangle$ the \textbf{decentralized prediction reward} is
		\begin{equation}
		\label{eq:prediction_reward}
		R_h(s_h, a_h) = \frac{1}{n}\sum\limits_{i=1}^n R_{i,h}(s_h, a_{i,h}),	
		\end{equation}
		with $R_{i,h}(s_h, a_{i,h}) = \alpha_{a_{i,h}}(s_h)$, where $\alpha_{a_{i,h}} \in \Gamma$ is the tangent hyperplane corresponding to the individual prediction action $a_{i,h}$; for other time steps $R_t$ are as in $\mathcal{R}$.
	\end{itemize}
\end{definition}

As the action and observation sets in $\mathcal{M}$ and $\mathcal{M}^+$ are the same until $t=h$, past joint policies are interchangeable, and plan-time sufficient statistics are identical.
\begin{lemma}
\label{lemma:suff_det}
Let $\langle \mathcal{M}, \Gamma \rangle$ be a Dec-$\rho$POMDP and define $\mathcal{M}^+$ as above.
Then, for any past joint policy $\varphi_t$ with $t\leq h$, the respective plan-time sufficient statistics $\sigma_t$ in $\langle \mathcal{M}, \Gamma \rangle$ and $\sigma_t^+$ in $\mathcal{M}^+$ are equivalent, i.e., $\sigma_t \equiv \sigma_t^+$. 
\end{lemma}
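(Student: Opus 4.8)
The plan is to proceed by induction on $t$, exploiting that the only structural differences between $\mathcal{M}$ and $\mathcal{M}^+$ occur at the transition out of time step $h$, which never enters any statistic indexed by $t \le h$. For the base case $t = 0$, both statistics equal the initial distribution by definition, $\sigma_0(s_0) = b_0(s_0) = \sigma_0^+(s_0)$, so $\sigma_0 \equiv \sigma_0^+$.

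For the inductive step, I would assume $\sigma_t \equiv \sigma_t^+$ for some $0 \le t < h$ and consider extending the past joint policy by a joint decision rule $\delta_t$. First I would observe that $\delta_t$ is simultaneously a valid decision rule in both models: for $t < h$ the individual action spaces $A_{i,t}$ and the individual observation-sequence spaces assembled from $Z_{i,1}, \dots, Z_{i,t}$ (with $t \le h-1$) coincide in $\mathcal{M}$ and $\mathcal{M}^+$ by Definition~\ref{def:decpomdp_conversion}, so past joint policies are interchangeable. Applying the update operator \eqref{eq:ss_update} in each model gives $\sigma_{t+1} = U_{ss}(\sigma_t, \delta_t)$ and its counterpart in $\mathcal{M}^+$. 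Since $U_{ss}$ depends on the model only through the transition probabilities, and since $t+1 \le h$ means this update is governed by $T$ in $\mathcal{M}$ and by $T^+$ in $\mathcal{M}^+$ with $T^+ = T$ on all transitions into time steps up to $h$, the two update operators agree as maps. Together with the inductive hypothesis $\sigma_t \equiv \sigma_t^+$, this yields $\sigma_{t+1} \equiv \sigma_{t+1}^+$, carrying the induction through $t = h$.

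I expect no real obstacle here; the argument is essentially bookkeeping, and the remark preceding the lemma already asserts the conclusion informally. The single point needing care is the indexing: I must confirm that the modified dynamics (the identity transition to a joint null observation) and the prediction actions are introduced \emph{only} for the step from $h$ to $h+1$, and hence are never invoked when forming any $\sigma_t$ with $t \le h$. Making this explicit — that $T^+$ coincides with $T$ on precisely the transitions relevant to statistics up to time $h$, and that the action and observation spaces likewise coincide over this range — is what guarantees the two update operators match at every inductive step and thus justifies $\sigma_t \equiv \sigma_t^+$.
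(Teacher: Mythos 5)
Your proposal is correct and follows essentially the same route as the paper: induction on $t$ with the base case $\sigma_0 \equiv \sigma_0^+$ from the shared initial distribution, and the inductive step via the agreement of the update operators $U_{ss}$ because $T^+$ coincides with $T$ on all steps relevant to statistics up to time $h$. Your added care about the indexing (that the null observation, identity transition, and prediction actions only appear at the step from $h$ to $h+1$) simply makes explicit what the paper's shorter proof leaves implicit.
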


In $\mathcal{M}^+$ the horizon is incremented, so that each agent takes one more action than in $\langle \mathcal{M}, \Gamma\rangle$.
The additional action is one of the newly added individual prediction actions at time step $h$.
To select an individual prediction action, an agent may use 1) individual information, that is, the agent's individual observation sequence, and 2) plan-time information common to all agents, that is, the plan-time sufficient statistic.
An \emph{individual prediction rule} $\phi_i$ maps the individual observation sequence $\vec{z}_{i,h}$ and the plan-time sufficient statistic $\sigma_h$ to an individual prediction action.
A \emph{joint prediction rule} $\phi = \langle \phi_1, \ldots, \phi_n \rangle$ is a tuple of individual prediction rules, and maps a joint observation sequence $\vec{z}_h$ and $\sigma_h$ to a joint prediction action.
The \emph{expected decentralized prediction reward} given $\sigma_h$ and $\phi$ is defined analogously to Eq.~\eqref{eq:plan-time-reward} as $\hat{R}_h(\sigma_h, \phi) \triangleq \sum_{\vec{z}_h, s_h}\sigma_h(s_h, \vec{z}_h) R_h(s_h, \phi(\vec{z}_h, \sigma_h))$.

An optimal joint prediction rule maximizes the expected decentralized prediction reward.
We prove that an optimal joint prediction rule consists of individual prediction rules that maximize the expected individual prediction reward.
We then show that the expected decentralized prediction reward is at most equal to the centralized prediction reward, and that a similar relation holds between value functions in the respective Dec-POMDP and Dec-$\rho$POMDP.
The key property required is that the decentralized prediction reward is a sum of individual prediction rewards.

\begin{lemma}[Optimal joint prediction rule]
\label{lemma:predictionrule_optimality}
Let $\langle \mathcal{M}, \Gamma \rangle$ be a Dec-$\rho$POMDP and define $\mathcal{M}^+$ as above, and let $\sigma_h$ be a plan-time sufficient statistic for any past joint policy.
Then the joint prediction rule $\phi^* = \langle \phi_1^*, \ldots, \phi_n^* \rangle$ where each individual prediction rule $\phi_i^*$ is defined as
\begin{equation}
\label{eq:individuale_prediction_rule_definition}
\phi_{i}^*(\vec{z}_{i,h}, \sigma_h) \triangleq \argmax\limits_{a_{i,h} \in A_{i,h}} \sum\limits_{\vec{z}_{-i,h}, s_h}\sigma_h(s_h, \vec{z}_{-i,h}\mid \vec{z}_{i,h}) R_{i,h}(s_h, a_{i,h})
\end{equation}
maximizes expected decentralized prediction reward, that is, $\hat{R}_h(\sigma_h, \phi^*) = \max_{\phi} \hat{R}_h(\sigma_h, \phi)$.
\end{lemma}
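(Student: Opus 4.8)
The plan is to exploit the \emph{additive} structure of the decentralized prediction reward. Since $R_h(s_h, a_h) = \frac{1}{n}\sum_{i=1}^n R_{i,h}(s_h, a_{i,h})$ is a sum in which each individual prediction reward $R_{i,h}$ depends only on agent $i$'s own prediction action, the expected decentralized prediction reward $\hat{R}_h(\sigma_h, \phi)$ will separate into a sum of per-agent terms. The core insight is that each per-agent term depends only on that agent's individual prediction rule $\phi_i$, so the joint maximization decouples into $n$ independent individual maximizations; no coordination among agents is either possible or necessary.

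First, I would substitute the reward decomposition into the definition of $\hat{R}_h(\sigma_h, \phi)$ and exchange the (finite) order of summation to obtain $\hat{R}_h(\sigma_h, \phi) = \frac{1}{n}\sum_{i=1}^n g_i(\phi_i)$, where $g_i(\phi_i) = \sum_{\vec{z}_h, s_h}\sigma_h(s_h, \vec{z}_h) R_{i,h}(s_h, \phi_i(\vec{z}_{i,h}, \sigma_h))$. Because $\phi_i$ reads only $\vec{z}_{i,h}$ (and the common statistic $\sigma_h$), the chosen action is constant with respect to $\vec{z}_{-i,h}$ and $s_h$. This lets me marginalize over the other agents' observations and factor the statistic as $\sigma_h(s_h, \vec{z}_h) = \sigma_h(\vec{z}_{i,h})\,\sigma_h(s_h, \vec{z}_{-i,h}\mid\vec{z}_{i,h})$, yielding $g_i(\phi_i) = \sum_{\vec{z}_{i,h}} \sigma_h(\vec{z}_{i,h}) \sum_{\vec{z}_{-i,h},s_h}\sigma_h(s_h,\vec{z}_{-i,h}\mid\vec{z}_{i,h}) R_{i,h}(s_h,\phi_i(\vec{z}_{i,h},\sigma_h))$.

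Next, since $\sigma_h(\vec{z}_{i,h})\geq 0$ and the bracketed inner sum is maximized independently for each value of $\vec{z}_{i,h}$, the individual rule maximizing $g_i$ is exactly the one that, for every $\vec{z}_{i,h}$, selects the action $a_{i,h}$ maximizing $\sum_{\vec{z}_{-i,h},s_h}\sigma_h(s_h,\vec{z}_{-i,h}\mid\vec{z}_{i,h}) R_{i,h}(s_h,a_{i,h})$ --- precisely the definition of $\phi_i^*$ in Eq.~\eqref{eq:individuale_prediction_rule_definition}. Because the objective is separable and each $g_i$ is maximized by $\phi_i^*$ regardless of the other agents' rules, the joint rule $\phi^* = \langle\phi_1^*,\ldots,\phi_n^*\rangle$ attains $\max_\phi \hat{R}_h(\sigma_h,\phi)$, which is the claim.

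I expect the only genuinely delicate point to be the justification that a decentralized joint rule suffices: a priori a joint rule could exploit correlations between agents' choices, but the separability of the objective means each term is optimized in isolation, so nothing is gained from coordination. Everything else is bookkeeping --- interchanging finite sums and factoring the statistic into a marginal and a conditional --- so I would keep those steps brief.
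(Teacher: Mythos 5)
Your proposal is correct and follows essentially the same route as the paper's proof: decompose the reward into the sum of individual prediction rewards, observe that each term depends only on $\phi_i$ so the joint maximization separates, factor $\sigma_h(s_h,\vec{z}_h)$ into $\sigma_h(\vec{z}_{i,h})\sigma_h(s_h,\vec{z}_{-i,h}\mid\vec{z}_{i,h})$, and maximize pointwise over each $\vec{z}_{i,h}$ to recover $\phi_i^*$. The ``delicate point'' you flag --- that decentralization costs nothing here because the objective is separable --- is exactly the observation the paper relies on when exchanging $\max_{\phi_1,\ldots,\phi_n}$ with the sum over agents.
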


In the Dec-$\rho$POMDP $\langle \mathcal{M}, \Gamma \rangle$, the centralized prediction reward is determined by a centralized virtual agent that has access to the observation histories of all agents.
In our converted standard Dec-POMDP $\mathcal{M}^+$, each agent individually takes a prediction action, leading to the decentralized prediction reward.
The decentralized prediction reward never exceeds the centralized prediction reward as we show next.
\begin{lemma}
\label{lemma:lowerbound}
The expected decentralized prediction reward $\hat{R}_h(\sigma_h, \phi^*)$ in $\mathcal{M}^+$ is at most equal to the expected centralized prediction reward $\hat{\rho}(\sigma_h)$ in $\langle \mathcal{M}, \Gamma \rangle$, i.e., $\hat{R}_h(\sigma_h, \phi^*) \leq \hat{\rho}(\sigma_h)$.
\end{lemma}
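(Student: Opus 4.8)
The plan is to expand $\hat{R}_h(\sigma_h, \phi^*)$ using the linearity of the decentralized prediction reward, reduce it to a sum of per-agent quantities, and then bound each per-agent term by $\hat{\rho}(\sigma_h)$ via an ``expectation of the max dominates max of the expectation'' argument. First I would substitute the definition $R_h(s_h, a_h) = \frac{1}{n}\sum_i R_{i,h}(s_h, a_{i,h})$ into $\hat{R}_h(\sigma_h,\phi^*) = \sum_{\vec{z}_h, s_h}\sigma_h(s_h,\vec{z}_h)R_h(s_h,\phi^*(\vec{z}_h,\sigma_h))$ and exchange the finite sums, writing
\begin{equation*}
\hat{R}_h(\sigma_h, \phi^*) = \frac{1}{n}\sum_{i=1}^n \sum_{\vec{z}_h, s_h}\sigma_h(s_h,\vec{z}_h)\, R_{i,h}(s_h, \phi_i^*(\vec{z}_{i,h},\sigma_h)).
\end{equation*}

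For each agent $i$, I would split the joint observation sequence as $\vec{z}_h = (\vec{z}_{i,h}, \vec{z}_{-i,h})$ and factor $\sigma_h(s_h, \vec{z}_{i,h}, \vec{z}_{-i,h}) = \sigma_h(\vec{z}_{i,h})\,\sigma_h(s_h, \vec{z}_{-i,h}\mid \vec{z}_{i,h})$. Because $\phi_i^*$ depends on $\vec{z}_h$ only through $\vec{z}_{i,h}$, its action is constant across the sum over $\vec{z}_{-i,h}$, so plugging in the argmax definition of Eq.~\eqref{eq:individuale_prediction_rule_definition} collapses the inner sum to its maximum. Using $R_{i,h}(s_h, a_{i,h}) = \alpha_{a_{i,h}}(s_h)$ and the bijection between individual prediction actions and $\Gamma$, this leaves the per-agent term $\sum_{\vec{z}_{i,h}} \sigma_h(\vec{z}_{i,h})\, T_i(\vec{z}_{i,h})$, where $T_i(\vec{z}_{i,h}) = \max_{\alpha\in\Gamma}\sum_{\vec{z}_{-i,h}, s_h}\sigma_h(s_h,\vec{z}_{-i,h}\mid \vec{z}_{i,h})\,\alpha(s_h)$.

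The crux is to bound $T_i(\vec{z}_{i,h})$. For any fixed $\alpha \in \Gamma$ I would refactor $\sigma_h(s_h,\vec{z}_{-i,h}\mid \vec{z}_{i,h}) = \sigma_h(\vec{z}_{-i,h}\mid \vec{z}_{i,h})\,\sigma_h(s_h\mid \vec{z}_h)$, so that the inner sum over $s_h$ is pointwise bounded by $g(\vec{z}_h) \triangleq \max_{\alpha'\in\Gamma}\sum_{s_h}\sigma_h(s_h\mid\vec{z}_h)\,\alpha'(s_h)$. Since this bound does not depend on $\alpha$, taking the maximum over $\alpha$ on the left gives $T_i(\vec{z}_{i,h}) \le \sum_{\vec{z}_{-i,h}}\sigma_h(\vec{z}_{-i,h}\mid\vec{z}_{i,h})\,g(\vec{z}_h)$. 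Multiplying by $\sigma_h(\vec{z}_{i,h})$, summing over $\vec{z}_{i,h}$, and collapsing $\sigma_h(\vec{z}_{i,h})\,\sigma_h(\vec{z}_{-i,h}\mid\vec{z}_{i,h}) = \sigma_h(\vec{z}_h)$ recovers exactly $\hat{\rho}(\sigma_h) = \sum_{\vec{z}_h}\sigma_h(\vec{z}_h)\,g(\vec{z}_h)$, cf.~Eq.~\eqref{eq:plan-time-final-reward}. Hence each of the $n$ per-agent terms is at most $\hat{\rho}(\sigma_h)$, and averaging over $i$ yields $\hat{R}_h(\sigma_h,\phi^*) \le \hat{\rho}(\sigma_h)$.

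I expect the main obstacle to be the single inequality where the maximum over $\Gamma$ is pulled outside the expectation over $\vec{z}_{-i,h}$: this is precisely the point at which decentralization costs value, since agent $i$ must commit to one hyperplane using only $\vec{z}_{i,h}$, whereas the centralized reward selects the best hyperplane separately for each full $\vec{z}_h$. Everything else is bookkeeping with conditional-probability factorizations, requiring only care that the marginal and conditional splits of $\sigma_h$ are applied consistently. It is worth noting that the bound $T_i(\vec{z}_{i,h}) \le \sum_{\vec{z}_{-i,h}}\sigma_h(\vec{z}_{-i,h}\mid\vec{z}_{i,h})\,g(\vec{z}_h)$ is a form of Jensen's inequality for the convex map $\max_{\alpha\in\Gamma}$, which also foreshadows the zero-loss condition discussed next: equality holds exactly when a single hyperplane is simultaneously optimal for every $\vec{z}_h$ consistent with $\vec{z}_{i,h}$.
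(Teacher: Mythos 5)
Your proof is correct and follows essentially the same route as the paper's: both reduce $\hat{R}_h(\sigma_h,\phi^*)$ to the per-agent form $\frac{1}{n}\sum_i\sum_{\vec{z}_{i,h}}\sigma_h(\vec{z}_{i,h})\max_{a_{i,h}}\sum_{\vec{z}_{-i,h},s_h}\sigma_h(s_h,\vec{z}_{-i,h}\mid\vec{z}_{i,h})R_{i,h}(s_h,a_{i,h})$ (the paper cites the intermediate step of Lemma~\ref{lemma:predictionrule_optimality}, whereas you re-derive it from the argmax definition of $\phi_i^*$), and both hinge on the same exchange of the maximum over $\Gamma$ with the expectation over $\vec{z}_{-i,h}$ before collapsing $\sigma_h(\vec{z}_{i,h})\sigma_h(\vec{z}_{-i,h}\mid\vec{z}_{i,h})=\sigma_h(\vec{z}_h)$ to recover $\hat{\rho}(\sigma_h)$. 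Your closing remark correctly identifies that single inequality as the sole source of the decentralization loss.
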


\begin{lemma}
\label{lemma:valuerelation}
Let $\langle \mathcal{M}, \Gamma\rangle$ and $\mathcal{M}^+$ be as defined above.
Let $\varphi_h$ be a past joint policy for $\mathcal{M}^+$, and let $\phi^*$ be the optimal joint prediction rule.
Then, the value of $\varphi_h \circ \phi^*$ in $\mathcal{M}^+$ is at most equal to the value of $\varphi_h$ in $\langle \mathcal{M}, \Gamma\rangle$, i.e., $V_{\mathcal{M}^+}^{\varphi_h \circ \phi^*}(\sigma_0) \leq V_{\langle \mathcal{M}, \Gamma \rangle}^{\varphi_h}(\sigma_0)$.
\end{lemma}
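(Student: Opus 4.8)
The plan is to unroll both value functions explicitly, recognize that they agree term-by-term on every stage before the prediction step, and then reduce the entire inequality to the single-step comparison already established in Lemma~\ref{lemma:lowerbound}. Concretely, write $\varphi_h = (\delta_0, \ldots, \delta_{h-1})$; note this is simultaneously a complete policy for $\langle \mathcal{M}, \Gamma\rangle$ (whose horizon is $h$) and the non-prediction part of the policy $\varphi_h \circ \phi^*$ for $\mathcal{M}^+$ (whose horizon is $h+1$).

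First I would expand $V_{\langle \mathcal{M}, \Gamma\rangle}^{\varphi_h}(\sigma_0)$ by iterating the value-to-go recursion from $t=0$ down to the base case at $t=h-1$, obtaining
\begin{equation}
V_{\langle \mathcal{M}, \Gamma\rangle}^{\varphi_h}(\sigma_0) = \sum_{t=0}^{h-1}\hat{R}_t(\sigma_t, \delta_t) + \hat{\rho}(\sigma_h),
\end{equation}
with $\sigma_{t+1} = U_{ss}(\sigma_t, \delta_t)$. Next I would expand $V_{\mathcal{M}^+}^{\varphi_h \circ \phi^*}(\sigma_0)$ the same way. Since $\mathcal{M}^+$ is a standard Dec-POMDP, its terminal prediction term (a single all-zero tangent hyperplane) vanishes, and because its base case now falls at $t=h$ the unrolled value is
\begin{equation}
V_{\mathcal{M}^+}^{\varphi_h \circ \phi^*}(\sigma_0) = \sum_{t=0}^{h-1}\hat{R}_t(\sigma_t^+, \delta_t) + \hat{R}_h(\sigma_h^+, \phi^*),
\end{equation}
where $\hat{R}_h(\cdot, \phi^*)$ is the expected decentralized prediction reward.

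The key step is to argue the two summations coincide. By Lemma~\ref{lemma:suff_det} the plan-time sufficient statistics satisfy $\sigma_t \equiv \sigma_t^+$ for all $t \leq h$, since $\mathcal{A}$, $\mathcal{Z}$, and $T$ are identical in $\mathcal{M}$ and $\mathcal{M}^+$ up to time $h$ and the statistics are generated by the same update operator $U_{ss}$ on the shared $\varphi_h$. The reward functions $R_t$ are likewise unchanged for $t < h$, so each expected immediate reward $\hat{R}_t(\sigma_t, \delta_t)$ in Eq.~\eqref{eq:plan-time-reward} is evaluated on identical statistics and identical rewards, hence $\hat{R}_t(\sigma_t, \delta_t) = \hat{R}_t(\sigma_t^+, \delta_t)$ for every $t=0,\ldots,h-1$. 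Subtracting the two expanded values then cancels the summations entirely and leaves
\begin{equation}
V_{\mathcal{M}^+}^{\varphi_h \circ \phi^*}(\sigma_0) - V_{\langle \mathcal{M}, \Gamma\rangle}^{\varphi_h}(\sigma_0) = \hat{R}_h(\sigma_h, \phi^*) - \hat{\rho}(\sigma_h),
\end{equation}
which is nonpositive by Lemma~\ref{lemma:lowerbound}, completing the proof.

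I do not expect a genuine obstacle here: once Lemmas~\ref{lemma:suff_det} and~\ref{lemma:lowerbound} are in hand, the argument is a telescoping cancellation. The only point demanding care is bookkeeping around the horizon increment, namely verifying that the base case of the recursion for $\mathcal{M}^+$ lands on the prediction step at $t=h$ and that its own terminal $\hat{\rho}$-term is identically zero; overlooking either would misalign the two sums. Making the statistic equivalence $\sigma_t \equiv \sigma_t^+$ explicit at each intermediate stage, rather than asserting it only at $t=h$, is what licenses the term-by-term cancellation.
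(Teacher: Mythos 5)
Your proof is correct and follows essentially the same route as the paper's: expand both values as sums of expected rewards, invoke Lemma~\ref{lemma:suff_det} to identify the statistics (and hence the stage rewards) for $t<h$, and apply Lemma~\ref{lemma:lowerbound} to the remaining terminal terms. Your extra bookkeeping about the vanishing terminal $\hat{\rho}$-term of $\mathcal{M}^+$ is a point the paper leaves implicit, but the argument is the same.
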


We now show that the difference between the optimal values of the standard Dec-POMDP $\mathcal{M}^+$ and the Dec-$\rho$POMDP $\langle \mathcal{M}, \Gamma \rangle$ is bounded.
We call the error the \emph{loss due to decentralization}.
\begin{theorem}[Loss due to decentralization]
\label{thm:deterministic_equivalent}
Consider a Dec-$\rho$POMDP $\langle \mathcal{M}, \Gamma \rangle$ with the optimal value function $V_{\langle \mathcal{M}, \Gamma \rangle}^*$.
Let $\pi$ be an optimal policy for the standard Dec-POMDP $\mathcal{M}^+$ created as in Definition~\ref{def:decpomdp_conversion}, and denote by $\varphi_h$ the past joint policy consisting of the first $h$ decision rules of $\pi$. 
Then the difference of $V_{\langle \mathcal{M}, \Gamma \rangle}^*$ and the value function $V_{\langle \mathcal{M}, \Gamma \rangle}^{\varphi_h}$ of applying $\varphi_h$ to $\langle \mathcal{M}, \Gamma \rangle$ is bounded by
\begin{equation}
|V_{\langle \mathcal{M}, \Gamma \rangle}^{*}(\sigma_0) - V_{\langle \mathcal{M}, \Gamma \rangle}^{\varphi_h}(\sigma_0)| \leq 2 \max\limits_{\sigma_h} |\hat{\rho}(\sigma_h) - \hat{R}_h(\sigma_h, \phi^*)|,  
\end{equation}
where $\phi^*$ is the optimal joint prediction rule.
\end{theorem}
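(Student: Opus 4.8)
The plan is to localize the comparison to the single time step, $t=h$, at which $\langle\mathcal{M},\Gamma\rangle$ and $\mathcal{M}^+$ differ, and then apply a three-term add-and-subtract comparison between two objectives that are uniformly close. First I would unroll both value functions. For any length-$h$ past joint policy $\varphi$, the recursive definition of the value-to-go yields $V_{\langle \mathcal{M}, \Gamma \rangle}^{\varphi}(\sigma_0) = G(\varphi) + \hat{\rho}(\sigma_h)$ and $V_{\mathcal{M}^+}^{\varphi \circ \phi^*}(\sigma_0) = G(\varphi) + \hat{R}_h(\sigma_h, \phi^*)$, where $G(\varphi) \triangleq \sum_{t=0}^{h-1}\hat{R}_t(\sigma_t, \delta_t)$ is the reward accumulated over the first $h$ stages and $\sigma_t$ (including $\sigma_h$) are the statistics induced by $\varphi$. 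The crucial observation is that $G(\varphi)$ is model-independent: the reward functions $R_t$ and dynamics $T$ for $t<h$ coincide in $\mathcal{M}$ and $\mathcal{M}^+$ by Definition~\ref{def:decpomdp_conversion}, and the statistics $\sigma_t$ agree by Lemma~\ref{lemma:suff_det}. Hence the two value functions differ only through their terminal term, so with $\epsilon \triangleq \max_{\sigma_h}|\hat{\rho}(\sigma_h) - \hat{R}_h(\sigma_h, \phi^*)|$ we obtain, for every $\varphi$, the pointwise bound $|V_{\langle \mathcal{M}, \Gamma \rangle}^{\varphi}(\sigma_0) - V_{\mathcal{M}^+}^{\varphi \circ \phi^*}(\sigma_0)| \leq \epsilon$.

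Second I would identify the two maximizers. Writing $J_1(\varphi) \triangleq V_{\langle \mathcal{M}, \Gamma \rangle}^{\varphi}(\sigma_0)$ and $J_2(\varphi) \triangleq V_{\mathcal{M}^+}^{\varphi \circ \phi^*}(\sigma_0)$, let $\varphi_h^*$ maximize $J_1$, so that $V_{\langle \mathcal{M}, \Gamma\rangle}^{*}(\sigma_0) = J_1(\varphi_h^*)$. By Lemma~\ref{lemma:predictionrule_optimality} the optimal prediction rule appended to any past policy is $\phi^*$, so the optimal $\mathcal{M}^+$ policy is exactly $\pi = \varphi_h \circ \phi^*$ and its prefix $\varphi_h$ maximizes $J_2$, i.e.\ $V_{\mathcal{M}^+}^{*}(\sigma_0) = J_2(\varphi_h) = \max_{\varphi} J_2(\varphi)$. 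This places us in the standard situation where the same family of past policies is evaluated under two objectives that are everywhere $\epsilon$-close, and where each objective's own maximizer is known.

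The main step is then the decomposition
\begin{align}
J_1(\varphi_h^*) - J_1(\varphi_h) &= [J_1(\varphi_h^*) - J_2(\varphi_h^*)] + [J_2(\varphi_h^*) - J_2(\varphi_h)] + [J_2(\varphi_h) - J_1(\varphi_h)] \nonumber \\
&\leq \epsilon + 0 + \epsilon = 2\epsilon,
\end{align}
where the first and third bracketed terms are each at most $\epsilon$ by the pointwise bound of the first step, and the middle term is nonpositive because $\varphi_h$ maximizes $J_2$. Finally, since $V_{\langle \mathcal{M}, \Gamma\rangle}^{*}$ is optimal we have $J_1(\varphi_h^*) \geq J_1(\varphi_h) = V_{\langle \mathcal{M}, \Gamma\rangle}^{\varphi_h}(\sigma_0)$, so the difference is already nonnegative and equals its own absolute value, which gives the claimed bound.

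I expect the only delicate point to be the bookkeeping in the first step: checking that the first-$h$ reward contribution $G(\varphi)$ is genuinely model-independent (where Lemma~\ref{lemma:suff_det} and the clause-by-clause agreement of $\mathcal{M}^+$ with $\mathcal{M}$ for $t<h$ are used) and that the optimal $\mathcal{M}^+$ policy really appends $\phi^*$ rather than some other prediction rule (Lemma~\ref{lemma:predictionrule_optimality}). Once the comparison is reduced to the terminal stage, the decomposition is routine. I would also remark that Lemma~\ref{lemma:lowerbound} makes the terminal gap one-sided, namely $\hat{R}_h(\sigma_h, \phi^*) \leq \hat{\rho}(\sigma_h)$, which forces the third term to be $\leq 0$ and thereby sharpens the bound to $\epsilon$; the symmetric argument above is what produces the stated, slightly looser factor of $2$.
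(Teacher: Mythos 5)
Your proof is correct and follows essentially the same route as the paper's: both reduce the comparison to the terminal term via Lemma~\ref{lemma:suff_det}, use Lemma~\ref{lemma:predictionrule_optimality} to identify the optimal $\mathcal{M}^+$ policy as $\varphi_h\circ\phi^*$, and then compare the two maximizers, with your symmetric three-term add-and-subtract playing the role of the paper's triangle inequality around the anchor $V_{\mathcal{M}^+}^{\varphi_h^*\circ\phi^*}$. Your closing remark that Lemma~\ref{lemma:lowerbound} makes the terminal gap one-sided and hence sharpens the bound to a factor of $1$ is also correct, and corresponds to the paper's own observation that the stated factor-of-$2$ bound can be pessimistic.
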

\begin{proof}
For clarity, in the following we omit the argument $\sigma_0$ from the value functions.
Suppose the optimal policy $\pi^*$ in $\langle \mathcal{M}, \Gamma\rangle$ consists of the joint decision rules $\delta_0^*, \ldots, \delta_{h-1}^*$.
For $\mathcal{M}^+$, define the partial joint policy $\varphi_h^* = (\delta_0^*, \ldots, \delta_{h-1}^*)$ and its extension $\varphi_h^* \circ \phi^*$.
By Lemma~\ref{lemma:predictionrule_optimality}, an optimal joint policy $\pi$ of $\mathcal{M}^+$ is an extension of some past joint policy $\varphi_h$ by $\phi^*$, i.e., $\pi = \varphi_h \circ \phi^*$.
Because $\varphi_h \circ \phi^*$ is optimal in $\mathcal{M}^+$, we have that $V_{\mathcal{M}^+}^{\varphi_h^*\circ\phi^*} \leq V_{\mathcal{M}^+}^{\varphi_h \circ \phi^*}$.
By Lemma~\ref{lemma:valuerelation}, $V_{\mathcal{M}^+}^{\varphi_h \circ \phi^*} \leq V_{\langle \mathcal{M}, \Gamma \rangle}^{\varphi_h}$.
Finally, because $\pi^*$ is optimal in $\langle\mathcal{M}, \Gamma\rangle$, we have $V_{\langle \mathcal{M}, \Gamma \rangle}^{\varphi_h} \leq V_{\langle \mathcal{M}, \Gamma \rangle}^*$.
Now,
\begin{subequations}
\begin{align}
|V_{\langle \mathcal{M}, \Gamma \rangle}^{*} - V_{\mathcal{M}^+}^{\varphi_h}| &\leq |V_{\langle \mathcal{M}, \Gamma \rangle}^* - V_{\mathcal{M}^+}^{\varphi_h^*\circ\phi^*}| + |V_{\mathcal{M}^+}^{\varphi_h^*\circ\phi^*} - V_{\mathcal{M}^+}^{\varphi_h}| \\
&\leq |V_{\langle \mathcal{M}, \Gamma \rangle}^* - V_{\mathcal{M}^+}^{\varphi_h^*\circ\phi^*}| + |V_{\mathcal{M}^+}^{\varphi_h^*\circ\phi^*} - V_{\langle \mathcal{M}, \Gamma \rangle}^*| =  2\cdot|V_{\langle \mathcal{M}, \Gamma \rangle}^* - V_{\mathcal{M}^+}^{\varphi_h^*\circ\phi^*}| \\
&= 2\cdot \left|\left(\sum\limits_{t=0}^{h-1}\hat{R}_t(\sigma_t^*, \delta_t^*) + \hat{\rho}(\sigma_h^*)\right) - \left(\sum\limits_{t=0}^{h-1}\hat{R}_t(\sigma_t^*, \delta_t^*) + \hat{R}_h(\sigma_h^*, \phi^*) \right) \right|\\
& = 2 \cdot |\hat{\rho}(\sigma_h^*) - \hat{R}_h(\sigma_h^*, \phi^*)|\leq 2 \cdot \max\limits_{\sigma_h} |\hat{\rho}(\sigma_h) - \hat{R}_h(\sigma_h, \phi^*)|. \label{eq:proof_final}
\end{align}
\end{subequations}
We first apply the triangle inequality.
The second inequality holds as $V_{\mathcal{M}^+}^{\varphi_h^*\circ \phi^*} \leq V_{\langle \mathcal{M}, \Gamma \rangle}^{\varphi_h} \leq V_{\langle \mathcal{M}, \Gamma \rangle}^{*}$.
The next equality is by symmetry of absolute difference.
The third line follows from the definition of the value function and Lemma~\ref{lemma:suff_det}.
Note that we refer by $\sigma_t^*$ to the sufficient plan-time statistics reached under partial joint policies $\varphi_t^*$ of $\pi^*$.
The final inequality follows by maximizing the difference of the expected centralized and decentralized prediction rewards.
\end{proof}
The theorem shows that given a Dec-$\rho$POMDP $\langle \mathcal{M}, \Gamma \rangle$, we may solve the standard Dec-POMDP $\mathcal{M}^+$ and apply its optimal policy to the Dec-$\rho$POMDP with bounded error.
If there is only a single agent, that agent is equivalent to the conceptualized centralized agent, which implies $\hat{\rho} \equiv \hat{R}_h$ and that the loss is zero.
The equivalence of a $\rho$POMDP with a convex prediction reward and a POMDP with information rewards first shown in~\citep{Satsangi2018} is therefore obtained as a special case.

The proof indicates that the error is at most twice the difference of the expected centralized prediction reward and the expected decentralized prediction reward at the sufficient plan-time statistic $\sigma_h^*$ at time $h$ under an optimal policy $\pi^*$ of $\langle \mathcal{M}, \Gamma \rangle$.
This suggests that the final bound given as maximum over all sufficient plan-time statistics may be overly pessimistic for some cases.
While a complete characterization of such cases is beyond the scope of this paper, we give below a sufficient condition for when the error is zero in the multi-agent case.
The proof is in the supplementary material.

\begin{observation}
\label{obs:suff}
Consider the setting of Theorem~\ref{thm:deterministic_equivalent}.
Assume that the observation sequence of each agent is conditionally independent of the observation sequences of all other agents given the past joint policy and initial state distribution, i.e., for every agent $i$, $\sigma_h(\vec{z}_h) = \sigma_h(\vec{z}_{i,h})\sigma_h(\vec{z}_{-i,h})$.
Then $\pi^*$ is an optimal joint policy for $\langle \mathcal{M}, \Gamma\rangle$ if and only if $\pi^* \circ \phi^*$ is an optimal joint policy for $\mathcal{M}^+$.
\end{observation}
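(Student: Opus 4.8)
The plan is to recast the claimed equivalence as an equality of \emph{argmax sets} over past joint policies, and then to use the independence assumption to show that the objectives optimized by the two problems rank those policies identically.

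First I would reduce the statement to a comparison of two functionals. Since the horizon of $\mathcal{M}^+$ exceeds that of $\langle\mathcal{M},\Gamma\rangle$ by exactly the prediction step, every policy for $\mathcal{M}^+$ has the form $\varphi_h\circ\phi$, with $\varphi_h$ a past joint policy (its first $h$ decision rules) and $\phi$ a joint prediction rule. By Lemma~\ref{lemma:predictionrule_optimality}, for each $\sigma_h$ the optimal choice is $\phi^*$, so the optimal values are obtained by maximizing
\begin{equation}
J_+(\varphi_h) = W(\varphi_h) + \hat{R}_h(\sigma_h,\phi^*) \quad\text{and}\quad J_\rho(\varphi_h) = W(\varphi_h) + \hat{\rho}(\sigma_h),
\end{equation}
where $W(\varphi_h)=\sum_{t=0}^{h-1}\hat{R}_t(\sigma_t,\delta_t)$ is the common pre-prediction return and the statistics $\sigma_t,\sigma_h$ induced by $\varphi_h$ are identical in both models by Lemma~\ref{lemma:suff_det}. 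The two optimality conditions in the statement are then exactly $\pi^*\in\argmax_{\varphi_h}J_\rho$ and $\pi^*\in\argmax_{\varphi_h}J_+$, so the whole claim reduces to proving $\argmax J_\rho = \argmax J_+$. By Lemmas~\ref{lemma:lowerbound} and~\ref{lemma:valuerelation} I already have $J_+\le J_\rho$ pointwise.

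The core step is to exploit the factorization $\sigma_h(\vec{z}_h)=\sigma_h(\vec{z}_{i,h})\sigma_h(\vec{z}_{-i,h})$. Marginalizing $\vec{z}_{-i,h}$ out of the individual prediction rule in Eq.~\eqref{eq:individuale_prediction_rule_definition} shows that $\phi_i^*$ selects $\argmax_{\alpha\in\Gamma}\sum_{s_h}\sigma_h(s_h\mid\vec{z}_{i,h})\alpha(s_h)$, i.e.\ the best hyperplane for agent $i$'s \emph{marginal} posterior, so that
\begin{equation}
\hat{R}_h(\sigma_h,\phi^*) = \frac{1}{n}\sum_{i=1}^n\sum_{\vec{z}_{i,h}}\sigma_h(\vec{z}_{i,h})\max_{\alpha\in\Gamma}\sum_{s_h}\sigma_h(s_h\mid\vec{z}_{i,h})\alpha(s_h).
\end{equation}
This mirrors $\hat{\rho}$ in Eq.~\eqref{eq:plan-time-final-reward} with the joint posterior $\sigma_h(\cdot\mid\vec{z}_h)$ replaced by the marginal posteriors $\sigma_h(\cdot\mid\vec{z}_{i,h})$. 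I would then use the independence factorization to relate the joint posterior to the marginal posteriors and argue that the \emph{loss due to decentralization} $\hat{\rho}(\sigma_h)-\hat{R}_h(\sigma_h,\phi^*)$ vanishes on the reachable statistics; feeding this into the bound of Theorem~\ref{thm:deterministic_equivalent} gives zero loss, hence $J_+\equiv J_\rho$ and $\argmax J_\rho=\argmax J_+$, which yields both directions of the iff at once.

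The main obstacle is precisely this last equality. Decentralization is in general \emph{strictly} lossy: a marginal posterior $\sigma_h(\cdot\mid\vec{z}_{i,h})$ is coarser than the joint posterior $\sigma_h(\cdot\mid\vec{z}_h)$, and by convexity of $\rho$ predicting from the coarser posterior yields a smaller expected value, so the factorization of the observation marginals must be leveraged very carefully to certify that the per-agent predictions reconstruct the centralized value. I expect this to be the delicate heart of the argument, and I would isolate the exact use of the assumption there. Should pointwise equality of the two prediction rewards turn out to be unavailable, the fallback plan is to argue directly that the nonnegative gap $J_\rho-J_+$ induced under independence does not reorder past policies, which already suffices for $\argmax J_\rho=\argmax J_+$ and hence for the equivalence.
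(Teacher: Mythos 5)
Your reduction is sound and matches the paper's implicit structure: every policy of $\mathcal{M}^+$ decomposes as $\varphi_h\circ\phi$, Lemma~\ref{lemma:predictionrule_optimality} fixes $\phi^*$, and Lemmas~\ref{lemma:suff_det} and~\ref{lemma:valuerelation} make the pre-prediction return $W(\varphi_h)$ common to both objectives, so the iff follows once the loss $\hat{\rho}(\sigma_h)-\hat{R}_h(\sigma_h,\phi^*)$ is shown to vanish. The genuine gap is that you never prove this vanishing: you correctly flag it as ``the delicate heart,'' observe that Jensen's inequality points the wrong way, and then defer to a fallback (``the nonnegative gap does not reorder past policies'') that is itself unsupported, since the gap depends on $\varphi_h$ through $\sigma_h$ and there is no a priori reason it should be constant across past policies. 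Moreover, your rewriting of $\hat{R}_h(\sigma_h,\phi^*)$ in terms of the \emph{marginal} posteriors $\sigma_h(\cdot\mid\vec{z}_{i,h})$ — which is correct but holds with or without independence, by marginalizing $\vec{z}_{-i,h}$ out of Eq.~\eqref{eq:prediction_optimality_proof_e} — actually obscures where the assumption must enter: independence of the observation sequences, $\sigma_h(\vec{z}_h)=\sigma_h(\vec{z}_{i,h})\sigma_h(\vec{z}_{-i,h})$, does \emph{not} make the marginal posterior equal to the joint posterior, so comparing your expression directly against $\hat{\rho}$ in Eq.~\eqref{eq:plan-time-final-reward} leaves exactly the convexity gap you identified, with no tool to close it.

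The paper closes it by a different manipulation: it keeps the joint posterior $\sigma_h(s_h\mid\vec{z}_{i,h},\vec{z}_{-i,h})$ explicit, substitutes $\sigma_h(\vec{z}_{-i,h}\mid\vec{z}_{i,h})=\sigma_h(\vec{z}_{-i,h})$ into the expression of Eq.~\eqref{eq:reward_lowerbound_b}, and then exchanges $\max_{a_{i,h}}$ with the now-unconditional expectation over $\vec{z}_{-i,h}$, on the grounds that under independence the averaging over the other agents' observations acts identically for every $\vec{z}_{i,h}$. After the exchange the weights recombine as $\sigma_h(\vec{z}_{i,h})\sigma_h(\vec{z}_{-i,h})=\sigma_h(\vec{z}_h)$ and the expression collapses to $\sum_{\vec{z}_h}\sigma_h(\vec{z}_h)\max_{\alpha\in\Gamma}\sum_{s_h}\sigma_h(s_h\mid\vec{z}_h)\alpha(s_h)=\hat{\rho}(\sigma_h)$, turning Lemma~\ref{lemma:lowerbound}'s inequality into an equality and making the bound of Theorem~\ref{thm:deterministic_equivalent} zero. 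That single max/expectation exchange is where the hypothesis is consumed, and it is precisely the step your proposal leaves open.
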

The sufficient condition above is restrictive.
Informally, it requires that each agent executes its own independent active perception task.
However, the loss due to decentralization may be small if a multi-agent active perception task \emph{almost} satisfies the requirement.
It might be possible to derive less restrictive sufficient conditions in weakly-coupled problems by building on influence-based abstractions~\cite{Oliehoek2012influence}
Such weakly-coupled cases may arise, e.g., in distributed settings where two agents deployed in different regions far away from each other have limited influence on each other.

\section{Adaptive prediction action search} 
\label{sec:adaptive_prediction_action_search}
Recall that the centralized prediction reward function $\rho$ is obtained by approximating a continuous convex function $f$ by a set of $\alpha$-vectors.
The approximation is more accurate the more $\alpha$-vectors are used: for error bounds, see~\citep{Araya2010,Satsangi2020}.
We propose a planning algorithm for Dec-$\rho$POMDPs called Adaptive Prediction Action Search (APAS) that dynamically updates the $\alpha$-vectors during planning.
This avoids the need to a priori create a large number of $\alpha$-vectors some of which may turn out to not be useful.
Moreover, APAS allows application of any standard Dec-POMDP algorithm to solve Dec-$\rho$POMDPs by using the conversion proposed in Section~\ref{sec:conversion}.
This can avoid the computation of all reachable state estimates required by earlier algorithms for Dec-$\rho$POMDPs~\citep{Lauri:2019:IGD:3306127.3331815,Lauri_JAAMAS2020}.

\begin{algorithm}[t]
\footnotesize
\caption{Adaptive prediction action search (APAS) for Dec-$\rho$POMDP planning}
\label{alg:apas}
\begin{algorithmic}[1]
\Require{Dec-$\rho$POMDP $\langle \mathcal{M}, \Gamma \rangle$, convex function $f\colon\Delta(S)\to\mathbb{R}$, number of linearization points $K$}
\Ensure{Best joint policy found, $\pi_{best}$}
  \State $V_{best} \gets -\infty, \pi_{best} \gets \emptyset$
  \Repeat
    \State \textit{// Policy optimization phase}
    \State $\mathcal{M}^+ \gets$ \Call{ConvertDecPOMDP}{$\mathcal{M}, \Gamma$} \Comment{Apply Definition~\ref{def:decpomdp_conversion}}
    \State $\pi \gets $ \Call{Plan}{$\mathcal{M}^+$} \Comment{Use any Dec-POMDP planner}
    \State $V \gets $ \Call{Evaluate}{$\pi$} \label{line:evaluate}
    \IfThen{$V > V_{best}$}{$V_{best} \gets V, \pi_{best} \gets \pi$}
    \State \textit{// Adaptation phase}
    \State $\Gamma \gets \emptyset$ \label{line:adapt_start}
    \For{$k = 1, \ldots, K$} 
      \State $\vec{z}_h \sim \sigma_h(\vec{z}_h)$ \Comment{Sample joint observation sequence $\vec{z}_h$ using $\pi_{best}$} \label{line:joint_obs_seq}
      \State $b_k \gets \sigma_h(\cdot \mid \vec{z}_h)$ \Comment{Final state estimate corresponding to $\vec{z}_h$} \label{line:state_estimate}
      \State $\alpha_k \gets \nabla f(b_k) - f^*(\nabla f(b_k))$ \Comment{Tangent hyperplane of $f$ at $b_k$} \label{line:alpha_vector}
      \State $\Gamma \gets \Gamma \cup \{\alpha_k\}$
    \EndFor \label{line:adapt_end} 
  \Until{converged}
  \State \Return $\pi_{best}$
\end{algorithmic}
\end{algorithm}

The pseudocode for APAS is shown in Algorithm~\ref{alg:apas}.
APAS consists of two phases that are repeated until convergence: the policy optimization phase and the $\alpha$-vector adaptation phase.
In the policy optimization phase, the best joint policy for the current set $\Gamma$ of $\alpha$-vectors is found.
On the first iteration we initialize $\Gamma$ randomly as explained in the next section.
In the adaptation phase, the $\alpha$-vectors are then modified such that they best approximate the final reward for joint state estimates that are most likely reached by the current joint policy.
In the optimization phase the Dec-$\rho$POMDP is converted to a standard Dec-POMDP as described in Section~\ref{sec:conversion}.
We then apply any standard Dec-POMDP algorithm to plan a joint policy $\pi$ for the converted Dec-POMDP.
If the value of $\pi$ exceeds the value of the best policy so far, $\pi_{best}$, the best policy is updated.
In the adaptation phase, we sample a final state distribution $b_k$ under the currently best policy, and insert the tangent hyperplane at $b_k$ into $\Gamma$.
This corresponds to sampling a point on the horizontal axis in Figure~\ref{fig:approx} and adding the corresponding tangent hyperplane to $\Gamma$.
This is repeated $K$ times.
Specifically, we simulate a trajectory under the current best policy $\pi_{best}$ to sample a joint observation sequence (Line~\ref{line:joint_obs_seq}), use Bayesian filtering to compute the corresponding state estimate $b_k$ (Line~\ref{line:state_estimate}), and compute corresponding $\alpha$-vector (Line~\ref{line:alpha_vector}).
The updated set $\Gamma$ of $\alpha$-vectors provides the best approximation of the final reward for the sampled state estimates.
The time complexity of APAS is determined by the complexity of the Dec-POMDP planner called in $\textsc{Plan}$.
A reference implementation is available at \url{https://github.com/laurimi/multiagent-prediction-reward}.

\section{Experiments} 
\label{sec:experiments}
The Dec-$\rho$POMDP we target is computationally more challenging (NEXP-complete~\citep{Bernstein2002}) than centralized POMDP, POMDP-IR, or  $\rho$POMDP (PSPACE-complete~\citep{papadimitriou1987complexity}).
Immediate all-to-all communication during task execution would be required to solve the problem we target as a centralized problem, and the resulting optimal value would be higher~\citep{oliehoek2008optimal}.
Therefore, a comparison to centralized methods is neither fair nor necessary.
We compare APAS to the NPGI algorithm of~\cite{Lauri:2019:IGD:3306127.3331815} that solves a Dec-$\rho$POMDP by iterative improvement of a fixed-size policy graph.
As the \textsc{Plan} subroutine of APAS, we use the finite-horizon variant of the policy graph improvement method of~\cite{Pajarinen2011}.
This method is algorithmically similar to NPGI which helps isolate the effect of applying the proposed conversion to a standard Dec-POMDP from the effect of algorithmic design choices to the extent possible.
Any other Dec-POMDP algorithm may also be used with APAS.
Details on the algorithms and parameter settings are provided in the supplementary material.

We evaluate on the Dec-$\rho$POMDP domains from~\cite{Lauri:2019:IGD:3306127.3331815}: the micro air vehicle (MAV) domain and information gathering rovers domain.
In the MAV domain two agents cooperatively track a moving target to determine if the target is friendly or hostile.
In the rovers domain, two agents collect samples of scientific data in a grid environment.
The reward at the final time step is the negative entropy of the joint state estimate $b_h$, that is, $f(b_h) = \sum_{s_h}b_h(s_h)\ln b_h(s_h)$.
For the MAV problem, we use $K=2$, and for the rovers problem $K=5$ individual prediction actions.
For APAS, we initialize $\Gamma$ by randomly sampling linearization points $b_k \in \Delta(S)$, $k=1,\ldots, K$ using~\cite{smith2004sampling}.
The corresponding $\alpha$-vectors for the negative entropy are $\alpha_k(s) = \ln b_k(s)$.
We only observed minor benefits from using a greater number of $\alpha$-vectors and only for long horizons, details are reported in the supplementary material.
We run 100 repetitions using APAS and report the average policy value and its standard error.
Results for NPGI are as reported in~\cite{Lauri_JAAMAS2020}.
To ensure comparability, all reported policy values are computed using $f$ as the final reward.
As a baseline, we report the optimal value if known.

\begin{table}[t]
\scriptsize
\caption{Average policy values $\pm$ standard error in the MAV (left) and the rovers domains (right).}
\label{tab:results}
\begin{tabular}{@{}lllll@{}}
\multicolumn{5}{c}{MAV}                                        \\ 
\toprule
$h$ & APAS (ours) & APAS (no adapt.) & NPGI~\citep{Lauri_JAAMAS2020} & Optimal \\ \midrule
2         & -2.006 $\pm$ 0.005 & -2.112 $\pm$ 0.002 & -1.931     & -1.919        \\
3         & -1.936 $\pm$ 0.004 & -2.002 $\pm$ 0.002 & -1.833     & -1.831        \\
4         & -1.879 $\pm$ 0.004 & -1.943 $\pm$ 0.002 & -1.768     & ---        \\
5         & -1.842 $\pm$ 0.005 & -1.918 $\pm$ 0.002 & -1.725     & ---        \\
6         & -1.814 $\pm$ 0.004 & -1.898 $\pm$ 0.002 &  ---    &  ---       \\
7         & -1.789 $\pm$ 0.004 & -1.892 $\pm$ 0.003 &  ---    &  ---       \\
8         & -1.820 $\pm$ 0.005 & -1.885 $\pm$ 0.006 & ---     &  ---       \\
\bottomrule
\end{tabular}
\quad
\begin{tabular}{@{}lllll@{}}
\multicolumn{5}{c}{Rovers}                                        \\ 
\toprule
$h$ & APAS (ours) & APAS (no adapt.) & NPGI~\citep{Lauri_JAAMAS2020} & Optimal \\ \midrule
2         & -3.484 $\pm$ 0.002 & -3.800 $\pm$ 0.006 & -3.495     & -3.479        \\
3         & -3.402 $\pm$ 0.008 & -3.680 $\pm$ 0.006 & -3.192     & -3.189        \\
4         & -3.367 $\pm$ 0.011 & -3.640 $\pm$ 0.006 & -3.036     &  ---       \\
5         & -3.293 $\pm$ 0.012 & -3.591 $\pm$ 0.006 & -2.981     &  ---       \\
6         & -3.333 $\pm$ 0.012 & -3.631 $\pm$ 0.007 & ---     &  ---       \\
7         & -3.375 $\pm$ 0.014 & -3.673 $\pm$ 0.008 & ---     &  ---       \\
8         & -3.496 $\pm$ 0.014 & -3.860 $\pm$ 0.010 & ---     &  ---       \\
\bottomrule
\end{tabular}
\end{table}

\paragraph{Comparison to the state-of-the-art.}
The results are shown in Table~\ref{tab:results}.
While APAS does not exceed state-of-the-art performance in terms of policy quality, the major advantage of APAS is that it is able to scale to greater horizons than NPGI.
The number of joint state distributions reachable under the current policy grows exponentially in the horizon $h$.
Computation of these state distributions is required by NPGI, causing it to run out of memory with $h>5$.

While running APAS, we compute the value of the policies exactly (Alg.~\ref{alg:apas}, Line~\ref{line:evaluate}).
For long horizons, exact evaluation requires a significant fraction of the computation budget.
We expect that further scaling in terms of planning horizon is possible by switching to approximate evaluation, at the cost of noisy value estimates.
Results on computation time as well as results for horizons up to 10 for the Rovers problem are reported in the supplementary material.

\paragraph{Benefit of adaptive prediction action selection.}
The adaptation phase of APAS is disabled by not running Lines~\ref{line:adapt_start}-\ref{line:adapt_end}.
Instead, at the start of each iteration of the while loop we randomly sample $K$ linearization points using~\cite{smith2004sampling} to create $\Gamma$ and solve the corresponding Dec-POMDP.
We repeat this procedure 1000 times and report the results in the column ``APAS (no adapt.)'' of Table~\ref{tab:results}.
We conclude that the $\alpha$-vector adaptation clearly improves performance in both domains.


\section{Conclusion} 
\label{sec:conclusion}
We showed that multi-agent active perception modelled as a Dec-$\rho$POMDP can be reduced to a standard Dec-POMDP by introducing individual prediction actions.
The difference between the optimal solution of the standard Dec-POMDP and the Dec-$\rho$POMDP is bounded.
Our reduction enables application of any standard Dec-POMDP solver to multi-agent active perception problems, as demonstrated by our proposed APAS algorithm.

Our results allow transferring advances in scalability for standard Dec-POMDPs to multi-agent active perception tasks.
In multi-agent reinforcement learning, rewards typically depend on the underlying state of the system.
Therefore, our reduction result also enables further investigation into learning for multi-agent active perception.
An investigation of the necessary conditions for when the loss due to decentralization is zero is another future direction.


\section*{Broader Impact}
This work is theoretical in nature, and therefore does not present any foreseeable societal consequence.

\section*{Acknowledgments}

\begin{wrapfigure}[4]{r}{0.4\columnwidth}
  \vspace{-2\baselineskip}
  \centering{\includegraphics[width=0.39\columnwidth]{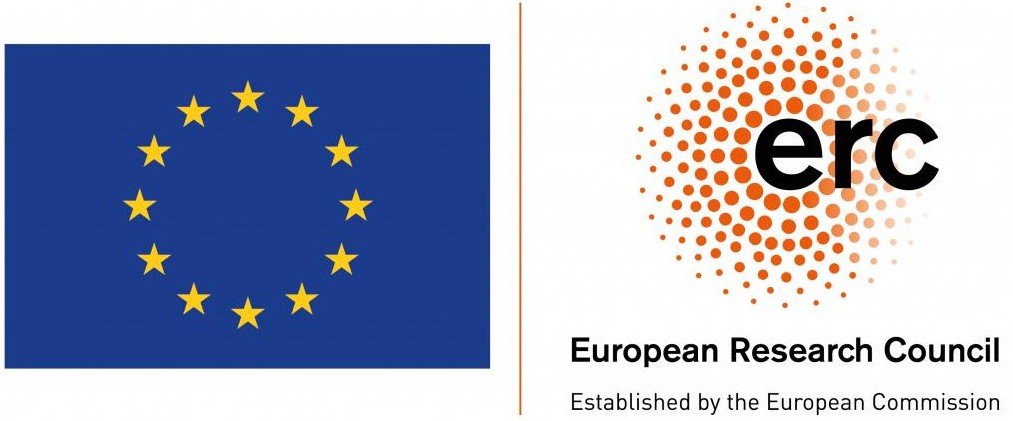}}
\end{wrapfigure}
This project had received funding from the European Research Council (ERC) under the European Union's Horizon 2020 research and innovation programme (grant agreement No.~758824 \textemdash INFLUENCE).

\appendix
\section{Overview of supplementary appendix sections} 
\label{sec:introduction}
The supplementary appendix sections are structured as follows.
We present proofs of the lemmas and the observation in Section~\ref{sec:proofs}.
Additional details on the APAS algorithm and computation of the $\alpha$-vectors are provided in Section~\ref{sec:further_details_on_the_apas_algorithm}.
We give details of our experimental setup in Section~\ref{sec:experimental_setup}.
Finally, additional experimental results are provided in Section~\ref{sec:experimental_results}.

\makeatletter
\@addtoreset{lemma}{section}
\@addtoreset{theorem}{section}
\@addtoreset{observation}{section}
\makeatother
\section{Proofs} 
\label{sec:proofs}
In this section, we present the proofs omitted from the main paper.
Lemmas~\ref{lemma:suff_det}, \ref{lemma:predictionrule_optimality}, \ref{lemma:lowerbound}, and~\ref{lemma:valuerelation} are proven in Subsections~\ref{sub:proof_of_lemma_1}, \ref{sub:proof_of_lemma_2}, \ref{sub:proof_of_lemma_3}, and~\ref{sub:proof_of_lemma_4}, respectively.
Finally, Observation~\ref{obs:suff} is proven in Subsection~\ref{sub:proof_of_corollary_1}.

\subsection{Proof of Lemma~1} 
\label{sub:proof_of_lemma_1}
\begin{lemma}
Let $\langle \mathcal{M}, \Gamma \rangle$ be a Dec-$\rho$POMDP and define $\mathcal{M}^+$ as above.
Then, for any past joint policy $\varphi_t$ with $t\leq h$, the respective plan-time sufficient statistics $\sigma_t$ in $\langle \mathcal{M}, \Gamma \rangle$ and $\sigma_t^+$ in $\mathcal{M}^+$ are equivalent: $\sigma_t \equiv \sigma_t^+$. 
\end{lemma}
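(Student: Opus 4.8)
The plan is to prove the equivalence by induction on $t$, exploiting that the conversion in Definition~\ref{def:decpomdp_conversion} leaves untouched all of the ingredients that determine the plan-time statistic for time steps up to $h$. Concretely, the statistic $\sigma_t$ is fixed entirely by the initial state distribution $b_0$, the dynamics function, and the past joint policy $\varphi_t$ through the recursion $\sigma_0 = b_0$ and $\sigma_{t+1} = U_{ss}(\sigma_t, \delta_t)$; the reward functions play no role whatsoever. Since $\mathcal{M}$ and $\mathcal{M}^+$ share the same $b_0$, and since their individual action and observation spaces coincide for all time steps strictly before $h$ (with observation spaces coinciding through time $h$), every past joint policy $\varphi_t$ with $t \leq h$ is admissible and identical in both models, so the same recursion is applied in each.

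For the base case $t=0$, both statistics equal the initial state distribution, $\sigma_0(s_0) = b_0(s_0) = \sigma_0^+(s_0)$, hence $\sigma_0 \equiv \sigma_0^+$. For the inductive step, suppose $\sigma_t \equiv \sigma_t^+$ for some $t < h$ and fix any joint decision rule $\delta_t$. Applying the update operator~\eqref{eq:ss_update} in each model gives, on the one hand, $\sigma_{t+1}(s_{t+1}, \vec{z}_{t+1}) = \sum_{s_t} T(z_{t+1}, s_{t+1} \mid s_t, \delta_t(\vec{z}_t))\, \sigma_t(s_t, \vec{z}_t)$, and on the other hand the analogous expression with $T^+$ and $\sigma_t^+$. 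The crucial point is that $T^+ = T$ for every transition that produces an observation at a time step $t+1 \leq h$; the null-observation identity transition introduced by the conversion governs only the step from $h$ to $h+1$, which lies beyond the range in question. Thus the two update operators coincide, and combined with the induction hypothesis $\sigma_t = \sigma_t^+$ this yields $\sigma_{t+1} = \sigma_{t+1}^+$, completing the induction up to and including $t = h$.

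I anticipate no substantive obstacle, as the argument is a direct induction. The only subtlety worth flagging explicitly is the interchangeability of past joint policies between the two models, which holds because the action and observation sets agree up to time $h$, so a decision rule in one model is a valid decision rule in the other. All the modifications introduced by the conversion---the extra prediction actions at time $h$, the null observation at $h+1$, and the deterministic identity transition from $h$ to $h+1$---act strictly after the horizon over which $\sigma_t$ with $t \leq h$ is defined, and therefore cannot influence the statistic on the claimed range.
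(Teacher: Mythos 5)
Your proof is correct and follows essentially the same route as the paper's: induction on $t$ with the base case $\sigma_0 = b_0 = \sigma_0^+$ and an inductive step relying on the equivalence of $T$ and $T^+$ (hence of the update operators $U_{ss}$) for all transitions up to time $h$. The extra care you take in noting that the conversion's modifications act only strictly after time $h$ and that past joint policies are interchangeable is a slightly more explicit rendering of the same argument.
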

\begin{proof}
  By induction.
  As the initial state distributions are equal, clearly $\sigma_0 \equiv \sigma_0^+$.
  Now assume $\sigma_t \equiv \sigma_t^+$ for some $0 \leq t < h$.
  By Definition~\ref{def:decpomdp_conversion} the transition functions $T$ and $T^+$ are equivalent.
  Therefore, the respective sufficient statistic update operators $U_{ss}$ and $U_{ss}^+$ as defined in Eq.~\eqref{eq:ss_update} are equivalent for the next decision rule $\delta_t$.
  We conclude that $\sigma_{t+1} \equiv \sigma_{t+1}^+$.
\end{proof}

\subsection{Proof of Lemma~2} 
\label{sub:proof_of_lemma_2}
\begin{lemma}[Optimal joint prediction rule]
Let $\langle \mathcal{M}, \Gamma \rangle$ be a Dec-$\rho$POMDP and define $\mathcal{M}^+$ as above, and let $\sigma_h$ be a plan-time sufficient statistic for any past joint policy.
Then the joint prediction rule $\phi^* = \langle \phi_1^*, \ldots, \phi_n^* \rangle$ where each individual prediction rule $\phi_i^*$ is defined as
\begin{equation}
\phi_{i}^*(\vec{z}_{i,h}, \sigma_h) \triangleq \argmax\limits_{a_{i,h} \in A_{i,h}} \sum\limits_{\vec{z}_{-i,h}, s_h}\sigma_h(s_h, \vec{z}_{-i,h}\mid \vec{z}_{i,h}) R_{i,h}(s_h, a_{i,h})
\end{equation}
maximizes the expected decentralized prediction reward, i.e, $\hat{R}_h(\sigma_h, \phi^*) = \max_{\phi} \hat{R}_h(\sigma_h, \phi)$.
\end{lemma}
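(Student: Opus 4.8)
The plan is to exploit the additive separability of the decentralized prediction reward, which lets the single joint maximization over $\phi$ break into $n$ independent per-agent maximizations. First I would substitute the definition of $R_h$ from Eq.~\eqref{eq:prediction_reward} into the expected decentralized prediction reward $\hat{R}_h(\sigma_h, \phi)$ and use linearity to interchange the sum over agents with the sum over $(\vec{z}_h, s_h)$, obtaining $\hat{R}_h(\sigma_h, \phi) = \frac{1}{n}\sum_{i=1}^n \sum_{\vec{z}_h, s_h}\sigma_h(s_h, \vec{z}_h) R_{i,h}(s_h, \phi_i(\vec{z}_{i,h}, \sigma_h))$, so that the objective is a sum of per-agent terms.

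Next I would show that the $i$-th term depends on the individual prediction rule $\phi_i$ only. The crucial observation is that $R_{i,h}(s_h, \phi_i(\vec{z}_{i,h}, \sigma_h))$ depends on the joint observation sequence $\vec{z}_h$ solely through agent $i$'s own sequence $\vec{z}_{i,h}$. I would therefore factor $\sigma_h(s_h, \vec{z}_h) = \sigma_h(\vec{z}_{i,h}) \sigma_h(s_h, \vec{z}_{-i,h} \mid \vec{z}_{i,h})$ and sum out $\vec{z}_{-i,h}$, rewriting the $i$-th term as $\sum_{\vec{z}_{i,h}} \sigma_h(\vec{z}_{i,h}) \sum_{\vec{z}_{-i,h}, s_h} \sigma_h(s_h, \vec{z}_{-i,h} \mid \vec{z}_{i,h}) R_{i,h}(s_h, \phi_i(\vec{z}_{i,h}, \sigma_h))$.

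With this form in hand I would argue pointwise optimality. For each fixed $\vec{z}_{i,h}$, the inner sum is exactly the quantity that $\phi_i^*$ maximizes by its definition in Eq.~\eqref{eq:individuale_prediction_rule_definition}; since the weights $\sigma_h(\vec{z}_{i,h}) \geq 0$, setting $\phi_i = \phi_i^*$ maximizes the entire $i$-th term. Because no cross terms couple $\phi_i$ to $\phi_j$ for $j \neq i$, the joint rule $\phi^* = \langle \phi_1^*, \ldots, \phi_n^* \rangle$ simultaneously maximizes every summand, and hence $\hat{R}_h(\sigma_h, \phi^*) = \max_\phi \hat{R}_h(\sigma_h, \phi)$.

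Rather than a genuine obstacle, the delicate point is the bookkeeping of the conditional decomposition: ensuring the common statistic $\sigma_h$ supplies the marginal $\sigma_h(\vec{z}_{i,h})$ and the conditional $\sigma_h(s_h, \vec{z}_{-i,h} \mid \vec{z}_{i,h})$ consistently, and confirming that each $\phi_i$ legitimately ranges over all maps from $(\vec{z}_{i,h}, \sigma_h)$ to actions so that per-$\vec{z}_{i,h}$ maximization is admissible. The essential structural reason the decentralized $\argmax$ is optimal, despite the decentralized setting, is that each agent's reward contribution is influenced by its own action alone, which removes any inter-agent coupling at the prediction step.
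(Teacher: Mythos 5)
Your proposal is correct and follows essentially the same route as the paper's proof: expand $R_h$ as the average of individual prediction rewards, factor $\sigma_h(s_h,\vec{z}_h)=\sigma_h(\vec{z}_{i,h})\sigma_h(s_h,\vec{z}_{-i,h}\mid\vec{z}_{i,h})$, and observe that the maximization decouples across agents and, for each agent, across individual observation sequences, so the pointwise $\argmax$ defining $\phi_i^*$ attains the maximum. The only difference is presentational (you expand $\hat{R}_h$ first and then argue optimality, while the paper carries $\max_\phi$ through the chain of equalities), which does not change the substance.
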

\begin{proof}
We proceed from the definition of the maximum expected decentralized prediction reward:
\begin{subequations}
\label{eq:prediction_optimality_proof}
\begin{align}
\max\limits_{\phi} \hat{R}_h(\sigma_h, \phi) &\triangleq \max\limits_{\phi} \sum\limits_{\vec{z}_h, s_h}\sigma_h(s_h, \vec{z}_h) R_h(s_h, \phi(\vec{z}_h, \sigma_h)) \label{eq:prediction_optimality_proof_a}\\
&= \max\limits_{\phi_1, \ldots, \phi_n} \sum\limits_{\vec{z}_h, s_h}\sigma_h(s_h, \vec{z}_h) \frac{1}{n}\sum\limits_{i=1}^{n}R_{i,h}(s_h, \phi_{i}(\vec{z}_{i,h}, \sigma_h)) \label{eq:prediction_optimality_proof_b}\\
&= \frac{1}{n}\sum\limits_{i=1}^{n} \max\limits_{\phi_i}\sum\limits_{\vec{z}_h, s_h}\sigma_h(s_h, \vec{z}_h)R_{i,h}(s_h, \phi_{i}(\vec{z}_{i,h}, \sigma_h)) \label{eq:prediction_optimality_proof_c} \\
&=\frac{1}{n}\sum\limits_{i=1}^{n} \max\limits_{\phi_i}\sum\limits_{\vec{z}_{i,h}, \vec{z}_{-i,h} s_h} \sigma_h(\vec{z}_{i,h})\sigma_h(s_h, \vec{z}_{-i,h}\mid \vec{z}_{i,h})R_{i,h}(s_h, \phi_{i}(\vec{z}_{i,h}, \sigma_h)) \label{eq:prediction_optimality_proof_d} \\
&=\frac{1}{n}\sum\limits_{i=1}^{n} \sum\limits_{\vec{z}_{i,h}} \sigma_h(\vec{z}_{i,h}) \max\limits_{a_{i,h} \in A_{i,h}} \sum\limits_{\vec{z}_{-i,h},s_h}\sigma_h(s_h, \vec{z}_{-i,h}\mid \vec{z}_{i,h})R_{i,h}(s_h, a_{i,h}) \label{eq:prediction_optimality_proof_e} \\
&=\frac{1}{n}\sum\limits_{i=1}^{n} \sum\limits_{\vec{z}_{i,h}} \sigma_h(\vec{z}_{i,h}) \sum\limits_{\vec{z}_{-i,h},s_h}\sigma_h(s_h, \vec{z}_{-i,h}\mid \vec{z}_{i,h})R_{i,h}(s_h, \phi_i^*(\vec{z}_{i,h}, \sigma_h)) \label{eq:prediction_optimality_proof_f} \\
&= \hat{R}_h(\sigma_h, \phi^*).
\end{align}
\end{subequations}
Above, \ref{eq:prediction_optimality_proof_b} follows by definition of $R_h$ as sum of individual prediction rewards (Eq.~\eqref{eq:prediction_reward}) and since $\phi$ is decentralized.
Then \ref{eq:prediction_optimality_proof_c} and~\ref{eq:prediction_optimality_proof_d} follow by rearranging terms and by law of conditional probability, respectively.
Then~\ref{eq:prediction_optimality_proof_e} follows since maximizing over $\phi_i$ is equivalent to finding an individual prediction action for each $\vec{z}_{i,h}$ that maximizes the expected individual prediction reward.
Equality~\ref{eq:prediction_optimality_proof_f} follows from the definition of $\phi_i^*$, completing the proof.
\end{proof}

\subsection{Proof of Lemma~3} 
\label{sub:proof_of_lemma_3}
\begin{lemma}
The expected decentralized prediction reward $\hat{R}_h(\sigma_h, \phi^*)$ in $\mathcal{M}^+$ is at most equal to the expected centralized prediction reward $\hat{\rho}(\sigma_h)$ in $\langle \mathcal{M}, \Gamma \rangle$, i.e., $\hat{R}_h(\sigma_h, \phi^*) \leq \hat{\rho}(\sigma_h)$.
\end{lemma}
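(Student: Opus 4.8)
The plan is to start from the definition of the expected decentralized prediction reward under $\phi^*$ and exploit the fact that $R_h$ is an \emph{average} of individual prediction rewards, each of which equals a single tangent hyperplane from $\Gamma$ evaluated at the state. First I would substitute Eq.~\eqref{eq:prediction_reward} together with $R_{i,h}(s_h, a_{i,h}) = \alpha_{a_{i,h}}(s_h)$ into the definition of $\hat{R}_h(\sigma_h, \phi^*)$ and pull the average over agents to the front:
\begin{equation}
\hat{R}_h(\sigma_h, \phi^*) = \frac{1}{n}\sum_{i=1}^{n}\sum_{\vec{z}_h, s_h}\sigma_h(s_h, \vec{z}_h)\,\alpha_{\phi_i^*(\vec{z}_{i,h}, \sigma_h)}(s_h).
\end{equation}

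Next, I would treat each agent's term separately. Grouping the inner summation by $\vec{z}_h$ and factoring $\sigma_h(s_h, \vec{z}_h) = \sigma_h(\vec{z}_h)\,\sigma_h(s_h \mid \vec{z}_h)$, agent $i$'s term becomes $\sum_{\vec{z}_h}\sigma_h(\vec{z}_h)\sum_{s_h}\sigma_h(s_h \mid \vec{z}_h)\,\alpha_{\phi_i^*(\vec{z}_{i,h}, \sigma_h)}(s_h)$. The crucial observation is that for each fixed $\vec{z}_h$, the individual prediction rule $\phi_i^*$ commits to exactly one hyperplane $\alpha_{\phi_i^*(\vec{z}_{i,h}, \sigma_h)} \in \Gamma$, chosen using only the local component $\vec{z}_{i,h}$. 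Since selecting a single element of $\Gamma$ can never exceed the best element,
\begin{equation}
\sum_{s_h}\sigma_h(s_h \mid \vec{z}_h)\,\alpha_{\phi_i^*(\vec{z}_{i,h}, \sigma_h)}(s_h) \leq \max_{\alpha \in \Gamma}\sum_{s_h}\sigma_h(s_h \mid \vec{z}_h)\,\alpha(s_h).
\end{equation}

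The right-hand side is precisely the integrand of the centralized prediction reward $\hat{\rho}$ from Eq.~\eqref{eq:plan-time-final-reward}. Therefore, after weighting by $\sigma_h(\vec{z}_h)$ and summing over $\vec{z}_h$, each agent's term is bounded above by $\hat{\rho}(\sigma_h)$, independently of $i$. Since $\hat{R}_h(\sigma_h, \phi^*)$ is the average over the $n$ agents of these terms, and every term is at most $\hat{\rho}(\sigma_h)$, the average is at most $\hat{\rho}(\sigma_h)$ as well, yielding $\hat{R}_h(\sigma_h, \phi^*) \leq \hat{\rho}(\sigma_h)$.

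The main obstacle is conceptual rather than computational: one must recognize that each agent commits to a \emph{single} hyperplane using only its own observation history, whereas the centralized reward is free to choose the best hyperplane against the full-information posterior $\sigma_h(\cdot \mid \vec{z}_h)$. The averaging structure of $R_h$ imposed in Definition~\ref{def:decpomdp_conversion} is exactly what lets the per-agent upper bound survive aggregation into the final inequality; any other combination rule (for instance, summing rather than averaging) would not in general preserve the bound, so the choice of the $\tfrac{1}{n}$ weighting is doing essential work here.
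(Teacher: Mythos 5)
Your proof is correct, and it reaches the same essential inequality as the paper's --- a committed hyperplane can never beat the best hyperplane against the full-information posterior $\sigma_h(\cdot\mid\vec{z}_h)$ --- but it gets there by a slightly different and arguably cleaner route. The paper continues from the intermediate expression derived in the proof of Lemma~\ref{lemma:predictionrule_optimality}, where $\hat{R}_h(\sigma_h,\phi^*)$ is written with an explicit $\max_{a_{i,h}}$ sitting outside the expectation over $\vec{z}_{-i,h}$, and then invokes the inequality ``expectation of a maximum is at least the maximum of the expectation'' to push that max inside, after which the one-to-one correspondence between prediction actions and elements of $\Gamma$ yields $\hat{\rho}(\sigma_h)$. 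You instead work pointwise from the definition of $\hat{R}_h(\sigma_h,\phi)$ for the given rule, bounding the single selected $\alpha_{\phi_i^*(\vec{z}_{i,h},\sigma_h)}$ by $\max_{\alpha\in\Gamma}$ for each fixed $\vec{z}_h$. These are two faces of the same fact, but your version has two small advantages: it does not depend on Lemma~\ref{lemma:predictionrule_optimality} at all (the optimality of $\phi^*$ is never used), and it actually establishes the stronger statement $\hat{R}_h(\sigma_h,\phi)\leq\hat{\rho}(\sigma_h)$ for \emph{every} joint prediction rule $\phi$, from which the claimed bound for $\phi^*$ is an immediate special case. Your closing remark about the $\tfrac{1}{n}$ weighting is also apt: it is precisely what lets the per-agent bound $\hat{\rho}(\sigma_h)$, which is independent of $i$, survive the aggregation over agents.
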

\begin{proof}
We continue from Eq.~\eqref{eq:prediction_optimality_proof_e}:
\begin{subequations}
\label{eq:reward_lowerbound}
\begin{align}
&\hat{R}_h(\sigma_h, \phi^*) = \frac{1}{n}\sum\limits_{i=1}^{n}\sum\limits_{\vec{z}_{i,h}} \sigma_h(\vec{z}_{i,h}) \max\limits_{a_{i,h} \in A_{i,h}}\sum\limits_{\vec{z}_{-i,h}, s_h} \sigma_h(s_h,\vec{z}_{-i,h}\mid \vec{z}_{i,h})R_{i,h}(s_h, a_{i,h})\\
&= \frac{1}{n}\sum\limits_{i=1}^{n}\sum\limits_{\vec{z}_{i,h}} \sigma_h(\vec{z}_{i,h}) \max\limits_{a_{i,h} \in A_{i,h}}\sum\limits_{\vec{z}_{-i,h}} \sigma_h(\vec{z}_{-i,h}\mid \vec{z}_{i,h})\sum\limits_{s_h}\sigma_h(s_h\mid \vec{z}_{-i,h}, \vec{z}_{i,h})R_{i,h}(s_h, a_{i,h}) \label{eq:reward_lowerbound_a}\\
&\leq \frac{1}{n}\sum\limits_{i=1}^{n}\sum\limits_{\vec{z}_{i,h},\vec{z}_{-i,h}} \sigma_h(\vec{z}_{i,h})\sigma_h(\vec{z}_{-i,h}\mid \vec{z}_{i,h}) \max\limits_{a_{i,h} \in A_{i,h}}\sum\limits_{s_h} \sigma_h(s_h\mid \vec{z}_{-i,h}, \vec{z}_{i,h})R_{i,h}(s_h, a_{i,h}) \label{eq:reward_lowerbound_b}\\
&=\frac{1}{n}\sum\limits_{i=1}^{n}\sum\limits_{\vec{z}_{h}} \sigma_h(\vec{z}_{h}) \max\limits_{a_{i,h} \in A_{i,h}}\sum\limits_{s_h} \sigma_h(s_h\mid \vec{z}_{h})R_{i,h}(s_h, a_{i,h}) \label{eq:reward_lowerbound_c}\\
&=\frac{1}{n}\sum\limits_{i=1}^{n}\sum\limits_{\vec{z}_{h}} \sigma_h(\vec{z}_{h}) \max\limits_{\alpha \in \Gamma}\sum\limits_{s_h} \sigma_h(s_h\mid \vec{z}_{h})\alpha(s_h) \label{eq:reward_lowerbound_d}\\
&=\frac{1}{n}\sum\limits_{i=1}^{n}\hat{\rho}(\sigma_h) = \hat{\rho}(\sigma_h).
\end{align}
\end{subequations}
Equality~\ref{eq:reward_lowerbound_a} follows by law of conditional probability and rearranging the terms.
Inequality~\ref{eq:reward_lowerbound_b} follows since the expectation of a maximum is greater than or equal to the maximum of the expectation.
Then, \ref{eq:reward_lowerbound_c} follows by rearranging terms.
Finally, \ref{eq:reward_lowerbound_d} follows due to the one-to-one correspondence between the individual prediction actions $a_{i,h} \in A_{i,h}$ and the tangent hyperplanes $\alpha \in \Gamma$ from Definition~\ref{def:decpomdp_conversion}.
\end{proof}

\subsection{Proof of Lemma~4} 
\label{sub:proof_of_lemma_4}
\begin{lemma}
Let $\langle \mathcal{M}, \Gamma\rangle$ and $\mathcal{M}^+$ be as defined above.
Let $\varphi_h$ be a past joint policy for $\mathcal{M}^+$, and let $\phi^*$ be the optimal joint prediction rule.
Then, the value of $\varphi_h \circ \phi^*$ in $\mathcal{M}^+$ is at most equal to the value of $\varphi_h$ in $\langle \mathcal{M}, \Gamma\rangle$, i.e., $V_{\mathcal{M}^+}^{\varphi_h \circ \phi^*} \leq V_{\langle \mathcal{M}, \Gamma \rangle}^{\varphi_h}$.
\end{lemma}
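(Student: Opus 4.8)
The plan is to expand both value functions through the recursive definition of the value-to-go and observe that they differ in only a single term, which is then controlled by Lemma~\ref{lemma:lowerbound}. The guiding intuition is that $\mathcal{M}^+$ and $\langle \mathcal{M}, \Gamma \rangle$ share identical dynamics and identical stage rewards up to time $h-1$; the sole difference is the terminal step, where the Dec-$\rho$POMDP collects the centralized prediction reward $\hat{\rho}(\sigma_h)$ whereas $\mathcal{M}^+$ collects the decentralized prediction reward $\hat{R}_h(\sigma_h, \phi^*)$ obtained from the individual prediction actions.

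First I would write out $V_{\langle \mathcal{M}, \Gamma \rangle}^{\varphi_h}(\sigma_0)$. Unrolling the recursion for $Q_t^{\varphi_h}$ gives $V_{\langle \mathcal{M}, \Gamma \rangle}^{\varphi_h}(\sigma_0) = \sum_{t=0}^{h-1}\hat{R}_t(\sigma_t, \delta_t) + \hat{\rho}(\sigma_h)$, where $\delta_0, \ldots, \delta_{h-1}$ are the decision rules of $\varphi_h$ and $\sigma_0, \ldots, \sigma_h$ are the statistics generated along the way via $U_{ss}$. Next I would write out $V_{\mathcal{M}^+}^{\varphi_h \circ \phi^*}(\sigma_0)$. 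Since $\mathcal{M}^+$ is a standard Dec-POMDP of horizon $h+1$ whose terminal term vanishes, its value is the sum of expected plan-time stage rewards over $t = 0, \ldots, h$. For $t < h$ these stage rewards are exactly $\hat{R}_t(\sigma_t^+, \delta_t)$, and the added stage $t = h$ contributes the expected decentralized prediction reward $\hat{R}_h(\sigma_h^+, \phi^*)$. Hence $V_{\mathcal{M}^+}^{\varphi_h \circ \phi^*}(\sigma_0) = \sum_{t=0}^{h-1}\hat{R}_t(\sigma_t^+, \delta_t) + \hat{R}_h(\sigma_h^+, \phi^*)$.

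Then I would invoke Lemma~\ref{lemma:suff_det}: because the past joint policy $\varphi_h$ is admissible in both models and the dynamics agree up to time $h$, the statistics coincide, $\sigma_t^+ \equiv \sigma_t$ for all $t \leq h$. Consequently the two sums $\sum_{t=0}^{h-1}\hat{R}_t(\cdot, \delta_t)$ are identical across the models, and $\hat{R}_h$ is evaluated at the same statistic $\sigma_h$. Subtracting the two value expressions, every stage-reward term up to $h-1$ cancels and the value functions differ only by the terminal contributions: $V_{\langle \mathcal{M}, \Gamma \rangle}^{\varphi_h}(\sigma_0) - V_{\mathcal{M}^+}^{\varphi_h \circ \phi^*}(\sigma_0) = \hat{\rho}(\sigma_h) - \hat{R}_h(\sigma_h, \phi^*)$. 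Applying Lemma~\ref{lemma:lowerbound}, which states $\hat{R}_h(\sigma_h, \phi^*) \leq \hat{\rho}(\sigma_h)$, shows this difference is nonnegative, yielding the claim.

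The argument is largely bookkeeping once the two value functions are aligned, so the only real subtlety — and the step I would treat most carefully — is the terminal-step accounting: making explicit that in $\mathcal{M}^+$ the former centralized prediction reward has been replaced by a genuine stage reward $R_h$ collected at the extra time step $h$, with no residual $\hat{\rho}$-type terminal term, and confirming via Lemma~\ref{lemma:suff_det} that this prediction reward is evaluated at precisely the statistic $\sigma_h$ reached under $\varphi_h$. With that alignment in place, Lemma~\ref{lemma:lowerbound} supplies the single inequality that closes the proof.
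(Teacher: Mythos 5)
Your proof is correct and follows essentially the same route as the paper's: unroll both value functions into sums of expected stage rewards, use Lemma~\ref{lemma:suff_det} to identify the plan-time statistics across the two models, and apply Lemma~\ref{lemma:lowerbound} to the single differing terminal term. The careful terminal-step accounting you highlight is exactly what the paper's one-line chain of equalities and the inequality encodes.
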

\begin{proof}
Suppose that $\varphi_h$ consists of the joint decision rules $\delta_0, \ldots, \delta_{h-1}$.
Now $\varphi_h$ can be applied as a full joint policy in $\langle\mathcal{M}, \Gamma\rangle$ with value function $V_{\langle\mathcal{M}, \Gamma\rangle}^{\varphi_h}$.
Since the value function of a policy is defined as the sum of expected rewards, for any initial plan-time sufficient statistic $\sigma_0$,
\begin{equation}
\label{eq:valueproof_1}
V_{\mathcal{M}^+}^{\varphi_h \circ \phi^*}(\sigma_0) = \sum\limits_{t=0}^{h-1}\hat{R}_t(\sigma_t, \delta_t) + \hat{R}_h(\sigma_h, \phi^*) \leq \sum\limits_{t=0}^{h-1}\hat{R}_t(\sigma_t, \delta_t) + \hat{\rho}(\sigma_h) = V_{\langle \mathcal{M}, \Gamma \rangle}^{\varphi_h}(\sigma_0),
\end{equation}
where the inequality follows since by Lemma~\ref{lemma:suff_det} the sufficient plan-time statistics are equivalent and by Lemma~\ref{lemma:lowerbound} the decentralized prediction reward lower bounds the centralized prediction reward.
\end{proof}


\subsection{Proof of Observation~1} 
\label{sub:proof_of_corollary_1}
\begin{observation}
Consider the setting of Theorem~\ref{thm:deterministic_equivalent}.
Assume that the observation sequence of each agent is conditionally independent of the observation sequences of all other agents given the past joint policy and initial state distribution, i.e., for every agent $i$, $\sigma_h(\vec{z}_h) = \sigma_h(\vec{z}_{i,h})\sigma_h(\vec{z}_{-i,h})$.
Then $\pi^*$ is an optimal joint policy for $\langle \mathcal{M}, \Gamma\rangle$ if and only if $\pi^* \circ \phi^*$ is an optimal joint policy for $\mathcal{M}^+$.
\end{observation}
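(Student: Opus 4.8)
The plan is to show that, under the stated independence assumption, the single inequality used in the proof of Lemma~\ref{lemma:lowerbound} becomes an equality, so that $\hat{R}_h(\sigma_h, \phi^*) = \hat{\rho}(\sigma_h)$ for every plan-time sufficient statistic $\sigma_h$ reachable under a past joint policy. Once this equality is in hand the loss bound of Theorem~\ref{thm:deterministic_equivalent} collapses to zero, and the claimed equivalence of optimal policies follows almost immediately; consequently the real work is concentrated in establishing the equality, and the surrounding argument is bookkeeping built from the lemmas already proved.

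First I would recall from the derivation \eqref{eq:reward_lowerbound_a}--\eqref{eq:reward_lowerbound_c} that the only place an inequality enters the proof of Lemma~\ref{lemma:lowerbound} is the expectation-of-a-maximum step \eqref{eq:reward_lowerbound_b}, which is tight precisely when, for each agent $i$ and each individual observation sequence $\vec{z}_{i,h}$, a single individual prediction action simultaneously maximizes $\sum_{s_h}\sigma_h(s_h \mid \vec{z}_{-i,h}, \vec{z}_{i,h}) R_{i,h}(s_h, a_{i,h})$ over all $\vec{z}_{-i,h}$ of positive probability. The plan is to use $\sigma_h(\vec{z}_h) = \sigma_h(\vec{z}_{i,h})\sigma_h(\vec{z}_{-i,h})$ to argue that conditioning additionally on $\vec{z}_{-i,h}$ does not move the arg-maximizing hyperplane. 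Marginalizing $\vec{z}_{-i,h}$ reduces agent $i$'s expected individual prediction reward to $\sum_{\vec{z}_{i,h}}\sigma_h(\vec{z}_{i,h})\max_{\alpha}\sum_{s_h}\sigma_h(s_h\mid\vec{z}_{i,h})\alpha(s_h)$, so each agent optimally predicts against its local posterior $\sigma_h(\cdot\mid\vec{z}_{i,h})$; under the assumption this local choice is to coincide with the centralized choice made against $\sigma_h(\cdot\mid\vec{z}_h)$. Summing the per-agent terms and invoking the one-to-one correspondence between individual prediction actions and tangent hyperplanes from Definition~\ref{def:decpomdp_conversion} then recovers $\hat{\rho}(\sigma_h)$ exactly. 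I expect this matching of the locally optimal and centralized hyperplanes to be the main obstacle: it is the one step where independence is genuinely used, and it must be handled carefully, since in general the extra information carried by $\vec{z}_{-i,h}$ could shift an agent's optimal prediction.

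Finally I would assemble the equivalence. By Lemma~\ref{lemma:predictionrule_optimality}, every optimal policy of $\mathcal{M}^+$ has the form $\varphi_h \circ \phi^*$, so it suffices to compare each past joint policy $\varphi_h$ through its value in $\langle\mathcal{M},\Gamma\rangle$ against the value of its extension by $\phi^*$ in $\mathcal{M}^+$. By Lemma~\ref{lemma:suff_det} the first $h$ reward terms agree, and by the equality established above the final terms $\hat{\rho}(\sigma_h)$ and $\hat{R}_h(\sigma_h,\phi^*)$ agree as well, so that $V_{\mathcal{M}^+}^{\varphi_h\circ\phi^*}(\sigma_0) = V_{\langle\mathcal{M},\Gamma\rangle}^{\varphi_h}(\sigma_0)$ for every $\varphi_h$; this is exactly the equality case of Lemma~\ref{lemma:valuerelation}. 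Since the two value functions coincide on a common space of past joint policies, their maximizers coincide, which is precisely the statement that $\pi^*$ is optimal for $\langle\mathcal{M},\Gamma\rangle$ if and only if $\pi^*\circ\phi^*$ is optimal for $\mathcal{M}^+$.
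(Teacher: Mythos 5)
Your overall route is the same as the paper's: both arguments reduce the claim to showing that, under the independence assumption, the single inequality in the proof of Lemma~\ref{lemma:lowerbound} (the expectation-of-a-maximum step \eqref{eq:reward_lowerbound_b}) becomes an equality, so that $\hat{R}_h(\sigma_h,\phi^*)=\hat{\rho}(\sigma_h)$ and the bound of Theorem~\ref{thm:deterministic_equivalent} collapses to zero; your closing assembly via Lemmas~\ref{lemma:suff_det}, \ref{lemma:predictionrule_optimality} and \ref{lemma:valuerelation} is in fact spelled out more carefully than in the paper, which simply writes $V_{\langle\mathcal{M},\Gamma\rangle}^*=V_{\mathcal{M}^+}^{\pi^*\circ\phi^*}$ and concludes.

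The gap is exactly where you predicted it would be, and you do not close it. You write that ``under the assumption this local choice is to coincide with the centralized choice made against $\sigma_h(\cdot\mid\vec{z}_h)$,'' but marginal independence of the observation sequences, $\sigma_h(\vec{z}_h)=\sigma_h(\vec{z}_{i,h})\sigma_h(\vec{z}_{-i,h})$, does not by itself imply that the arg-maximizing hyperplane for the local posterior $\sigma_h(\cdot\mid\vec{z}_{i,h})$ agrees with the one for the full posterior $\sigma_h(\cdot\mid\vec{z}_h)$: what is needed for tightness of \eqref{eq:reward_lowerbound_b} is that $\sigma_h(\cdot\mid\vec{z}_{i,h},\vec{z}_{-i,h})$ is (effectively) independent of $\vec{z}_{-i,h}$, which is a statement about the state posterior, not about the observation marginal. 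Concretely, take $S=\{0,1\}$ with $b_0$ uniform, two agents, $z_{1}$ a fair coin independent of $s$, and $z_{2}=s\oplus z_{1}$: then $\sigma_h(z_1,z_2)=\sigma_h(z_1)\sigma_h(z_2)$ holds, yet $\sigma_h(\cdot\mid z_1)$ is uniform while $\sigma_h(\cdot\mid z_1,z_2)$ is a point mass, so with $\Gamma$ tangent to the negative entropy one gets $\hat{R}_h(\sigma_h,\phi^*)<\hat{\rho}(\sigma_h)$. The paper's own proof passes over this same step with the one-line remark that ``the effect of the expectation over $\vec{z}_{-i,h}$ is the same for every $\vec{z}_{i,h}$,'' so you are faithfully reproducing its structure; but as a standalone argument your proposal (like the paper's) needs either the stronger hypothesis $\sigma_h(s_h\mid\vec{z}_h)=\sigma_h(s_h\mid\vec{z}_{i,h})$ for all $i$, or an explicit derivation of that property from whatever independence structure is actually assumed, before the exchange of $\max_{a_{i,h}}$ with the sum over $\vec{z}_{-i,h}$ can be asserted as an equality.
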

\begin{proof}
We show that $\hat{\rho}(\sigma_h) = \hat{R}_h(\sigma_h, \phi^*)$ under the independence condition, which makes the error bound in Theorem~\ref{thm:deterministic_equivalent} zero.
Let $\sigma_h$ be the plan-time sufficient statistic that maximizes the error.
Continue from Eq.~\eqref{eq:reward_lowerbound_b}, and apply the fact that $\sigma_h(\vec{z}_{-i,h}\mid\vec{z}_{i,h}) = \sigma_h(\vec{z}_{-i,h})$ under the independence condition:
\begin{subequations}
\begin{align}
\hat{R}_h&(\sigma_h, \phi^*) \!\!= \!\!\frac{1}{n}\sum\limits_{i=1}^{n}\sum\limits_{\vec{z}_{i,h}} \sigma_h(\vec{z}_{i,h}) \!\!\!\!\max\limits_{a_{i,h} \in A_{i,h}}\!\!\sum\limits_{\vec{z}_{-i,h}} \sigma_h(\vec{z}_{-i,h})\sum\limits_{s_h}\sigma_h(s_h\!\mid\! \vec{z}_{-i,h}, \vec{z}_{i,h})R_{i,h}(s_h, a_{i,h})\\
&=\frac{1}{n}\sum\limits_{i=1}^{n}\sum\limits_{\vec{z}_{i,h}, \vec{z}_{-i,h}} \sigma_h(\vec{z}_{i,h})\sigma_h(\vec{z}_{-i,h}) \max\limits_{a_{i,h} \in A_{i,h}}\sum\limits_{s_h}\sigma_h(s_h\mid \vec{z}_{-i,h}, \vec{z}_{i,h})R_{i,h}(s_h, a_{i,h})\\
&=\frac{1}{n}\sum\limits_{i=1}^{n}\sum\limits_{\vec{z}_{h}} \sigma_h(\vec{z}_{h}) \max\limits_{a_{i,h} \in A_{i,h}}\sum\limits_{s_h}\sigma_h(s_h\mid \vec{z}_{h})R_{i,h}(s_h, a_{i,h})\\
&=\frac{1}{n}\sum\limits_{i=1}^{n}\sum\limits_{\vec{z}_{h}} \sigma_h(\vec{z}_{h}) \max\limits_{\alpha \in \Gamma}\sum\limits_{s_h}\sigma_h(s_h\mid \vec{z}_{h})\alpha(s_h)\\
&=\frac{1}{n}\sum\limits_{i=1}^{n} \hat{\rho}(\sigma_h) = \hat{\rho}(\sigma_h),
\end{align}
\end{subequations}
where the second equality follows since the effect of the expectation over $\vec{z}_{-i,h}$ is the same for every $\vec{z}_{i,h}$.
Therefore $V_{\langle \mathcal{M}, \Gamma \rangle}^* = V_{\mathcal{M}^+}^{\pi^* \circ \phi^*}$, and the claim follows.
\end{proof}

\section{Further details on the APAS algorithm} 
\label{sec:further_details_on_the_apas_algorithm}

The APAS algorithm from the main paper is reproduced in Algorithm~\ref{alg:apas}.
We provide additional details on how we draw samples from the plan-time sufficient statistic, and how the $\alpha$-vectors are computed.

To avoid explicitly computing the plan-time sufficient statistic $\sigma_h$ in the adaptation phase (Lines~\ref{line:adapt_start}-\ref{line:adapt_end}), we instead apply rollouts to sample $\vec{z}_h \sim \sigma_h$ as follows.
We sample an initial state $s_0 \sim b_0$, and then simulate the Dec-POMDP by taking actions prescribed by $\pi_{best}$, sampling the next states $s_{t+1}$ and joint observations $z_{t+1}$ from the dynamics function $T$ until $t=h-1$.
We obtain a sampled sequence $\vec{z}_h$ of joint observations, and a sequence $\vec{a}_h$ of joint actions taken.
Then, we apply Bayesian filtering to compute the joint state estimate $b_k$ that is equal to $\sigma_h(\cdot \mid \vec{z}_h)$, i.e., $b_k(s_h) \triangleq \mathbb{P}(s_h \mid \vec{z}_h, \vec{a}_h, b_0)$.

The computation of the tangent hyperplane $\alpha_k$ of $f$ at the joint state estimate $b_k$ is based on the convex conjugate or Fenchel conjugate $f^*$.
We present here a brief overview, details are found in~\cite[Sect.~3.3.1.]{boyd2004convex}.
We derive the special case where $f$ is the negative entropy, but the procedure can be applied to any bounded, convex and differentiable function $f$.
Fix a linearization point $b_k\in \Delta(S)$.
Because $f$ is convex and differentiable, the following inequality holds for any $b\in \Delta(S)$:
\begin{equation}
\label{eq:approx}
f(b) \geq b^T \left[\nabla f(b_k) - f^*(\nabla f(b_k)) \right].
\end{equation}
In the problems we consider in the main paper, $f$ is the negative entropy: $f(b) = \sum_{s}b(s)\ln b(s)$ with $\nabla f(b) = \ln b + 1$.\footnote{For a vector $b$, expressions such as $\ln b$ denote the vector obtained taking the element-wise log of $b$.}
The Fenchel conjugate of $f$ is the log-sum-exp function $f^*(b) = \ln\left(\sum_{s} e^{b(s)} \right)$.
We see that $f^*(\nabla f(b_k)) = \ln\left(\sum_{s} e^{\ln b(s) + 1} \right) = \ln\left(e \sum_{s} b(s)  \right) = \ln (\sum_{s} b(s)) + \ln e = 1$, since $\sum_s b(s) = 1$.
By plugging these values to Eq.~\eqref{eq:approx} we obtain
\begin{equation}
f(b) \geq b^T \left[\nabla f(b_k) - f^*(\nabla f(b_k)) \right] = b^T \left( \ln b_k + 1 - 1 \right) = b^T \ln b_k = \sum\limits_{s} b(s) \ln b_k(s),
\end{equation}
from which we identify the $\alpha$-vector $\alpha_k(s) = \ln b_k(s)$.

A reference implementation of APAS as described here is available in a public repository hosted at \url{https://github.com/laurimi/multiagent-prediction-reward}.

\section{Details of the experimental setup} 
\label{sec:experimental_setup}
We present additional details on the algorithms, parameter settings, and the experimental setup we apply.

\subsection{Overview of the solution algorithms} 
\label{sub:overview_of_the_solution_algorithms}
We briefly review the NPGI solution algorithm for Dec-$\rho$POMDPs~\citep{Lauri:2019:IGD:3306127.3331815}, and the policy graph improvement method of~\cite{Pajarinen2011} that we use to implement the planning subroutine of our proposed APAS method.
The two solution algorithms are closely related, motivating selecting them for our comparison.

\paragraph{NPGI.}
NPGI~\citep{Lauri:2019:IGD:3306127.3331815} represents each agent's policy using a finite state controller (FSC) with a fixed number of controller states.
Since NPGI solves a finite horizon problem, each node is identified with a particular time step $t$ in the problem.
A conceptual example of a FSC policy is represented in Figure~\ref{fig:fsc}.
Each controller state is labelled by an individual action $a_{i,t}$ to be taken.
At each controller state, a transition function determines the next controller node for each individual observation $z_{i,t+1}$.
The transition function is represented by the edges of the directed graph shown in the figure.

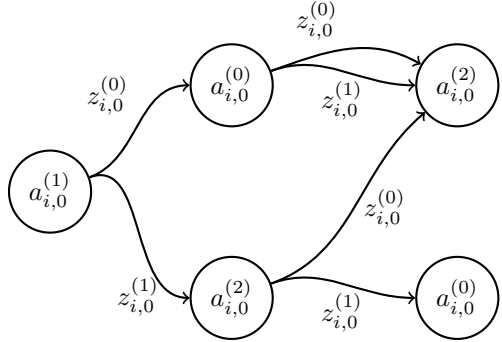
\begin{figure}[t]
  \centering 
  \begin{tikzpicture}[auto, node distance=2cm]
    \node[circle, draw, thick] (start) {$a_{i,0}^{(1)}$};
    \node[circle, draw, thick, above right of=start, xshift=1cm] (q1) {$a_{i,0}^{(0)}$};
    \node[circle, draw, thick, below right of=start, xshift=1cm] (q2) {$a_{i,0}^{(2)}$};

    \draw[thick, ->] (start) to[out=20,in=180] node {$z_{i,0}^{(0)}$} (q1);
    \draw[thick, ->] (start) to[out=20,in=180] node [below, yshift=-0.5cm] {$z_{i,0}^{(1)}$} (q2);
    \node[circle, draw, thick, right of=q1, xshift=1cm] (q3) {$a_{i,0}^{(2)}$};
    \node[circle, draw, thick, right of=q2, xshift=1cm] (q4) {$a_{i,0}^{(0)}$};

    \draw[thick, ->] (q1) to[out=20,in=150] node {$z_{i,0}^{(0)}$} (q3);
    \draw[thick, ->] (q1) to[out=20,in=180] node [below] {$z_{i,0}^{(1)}$} (q3);

    \draw[thick, ->] (q2) to[out=20,in=220] node [right] {$z_{i,0}^{(0)}$} (q3);
    \draw[thick, ->] (q2) to[out=20,in=180] node [below] {$z_{i,0}^{(1)}$} (q4);
  \end{tikzpicture}
  \caption{An illustration of a finite state controller representing an individual policy $\pi_i$ of agent $i$ for horizon $h=3$. Controller states represented by the circular nodes. There are 2 controller states per time step, except for $t=0$ where only the leftmost starting node is present. There are three possible individual actions $a_{i,t}^{(j)}$ at $t=0, 1, 2$.
  There are two possible individual observations $z_{i,t}^{(j)}$ at $t=1, 2$.}
    \label{fig:fsc}
\end{figure}

NPGI optimizes the actions to take and the transition function of the FSC of each agent by repeating two phases.
First, for each controller state, NPGI computes the expected joint state estimate, marginalizing over the possible controller states of the other agents $-i$.
Secondly, NPGI solves a local optimization problem at each controller state, finding an improved action and an improved transition function.

The first step of NPGI requires computing \emph{all joint state estimates reachable under the current policy}, increasing the complexity of the algorithm.
NPGI solves Dec-$\rho$POMDPs with reward functions that are convex functions of the joint state estimate.
This means that it is difficult to adapt it to use sampling-based approximations or rollouts instead of explicit computation of joint state estimates.
In our experiments, we use the implementation of NPGI provided by the authors of~\cite{Lauri:2019:IGD:3306127.3331815}.

\paragraph{Policy graph improvement.}
NPGI is a generalization of the policy graph improvement algorithm of~\cite{Pajarinen2011}.
The policy graph improvement algorithm also represents each agent's policy as a FSC with a fixed number of nodes, and operates using the same two phases as NPGI.

For the \textsc{Plan} subroutine of APAS, we implement the finite-horizon policy graph improvement algorithm based on the description provided in~\cite{Pajarinen2011}.
We modify the algorithm to use sampling and rollouts to estimate values and expected joint state estimates.

\subsection{Conversion to standard Dec-POMDP: implementation details}
The conversion from Dec-$\rho$POMDP to a Dec-POMDP increases the horizon by one, and on the newly added time step modifies the action and observation spaces, the transition and observation models, and the reward function.
The actions on the newly added time step are the individual prediction actions.
The transition and observation models and the reward function on the newly added time step are trivial.

We found it easiest to handle the conversion implicitly, that is, we load the horizon $h$ Dec-$\rho$POMDP description into memory, and then instruct our solution algorithm to solve the horizon $(h+1)$ Dec-POMDP while implementing the modifications mentioned above on the final time step directly in the solver.
This implicit conversion is fast and its effect on the overall solution time is negligible.

We also experimented with first loading the problem from disk in the \texttt{.dpomdp} format~\cite{oliehoek2017madp}, then modifying the description to include the changes required before writing it back to disk, thereby explicitly creating the converted standard Dec-POMDP.
However, we found this approach to be infeasible, as the \texttt{.dpomdp} format is not straightforward to use with time-dependent action and observation spaces and reward functions.

\subsection{Parameter settings} 
\label{sub:implementation_details_and_hyperparameter_settings}
As described in the previous subsection, the planning algorithms used in our experiments are closely related.
We therefore share the parameters for both of them.
In all our experiments, we use FSCs with 2 nodes per time step (see Fig.~\ref{fig:fsc} for an example of the resulting FSCs).
20 policy improvement iterations are executed.
We escape local maxima by assigning a random action and a random transition function for a node under optimization with probability 0.1.

\subsection{Experimental settings} 
\label{sub:experimental_settings}
For APAS and NPGI, we execute 10 runs using each and record in each run the value of the best joint policy found.
For APAS without the adaptation phase, using only a single set of randomly sampled $\alpha$-vectors, we instead run 100 runs.
We terminate any run that does not finish within a timeout of 2 hours.
Since all algorithms we use are anytime algorithms, we sometimes can obtain results from a partially finished run as well.
All experiments were run on a computer with an Intel Core i7-5930K CPU, with 32 GB of memory.

\section{Additional experimental results} 
\label{sec:experimental_results}
In this section, we present additional experimental results omitted from the main paper.

\paragraph{Adaptation phase of APAS significantly improves performance.}
\begin{figure}
    \includegraphics[width=0.33\columnwidth]{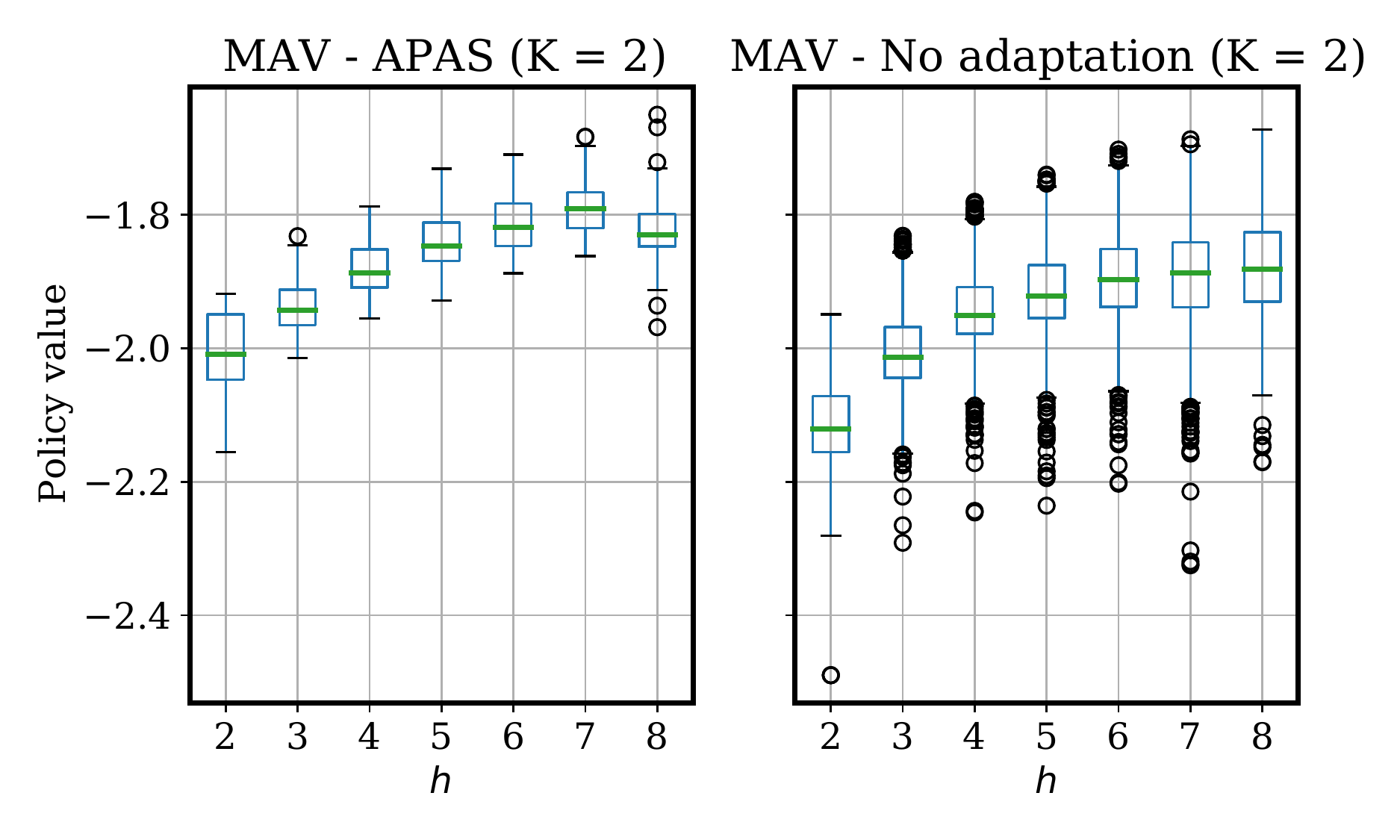}~
    \includegraphics[width=0.33\columnwidth]{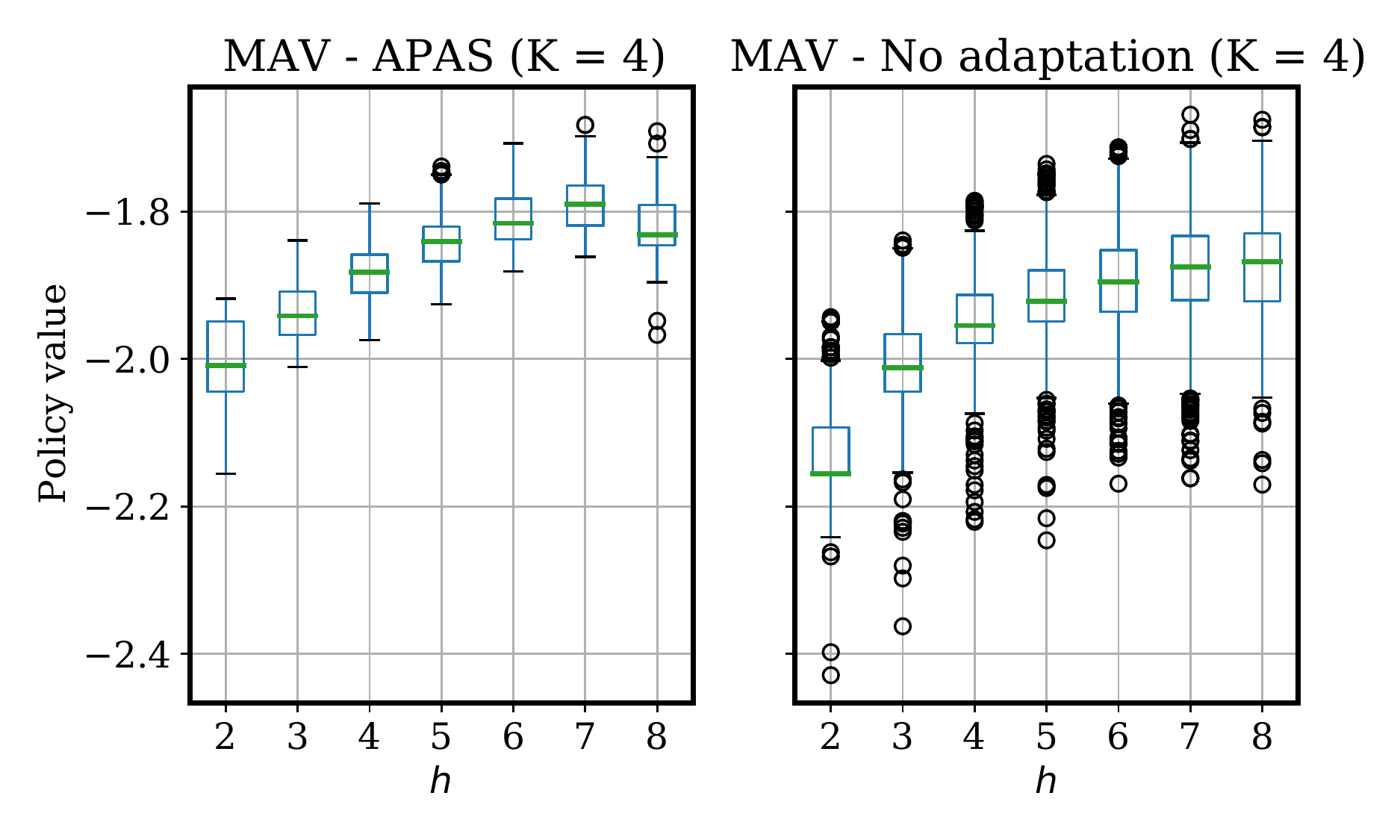}~
    \includegraphics[width=0.33\columnwidth]{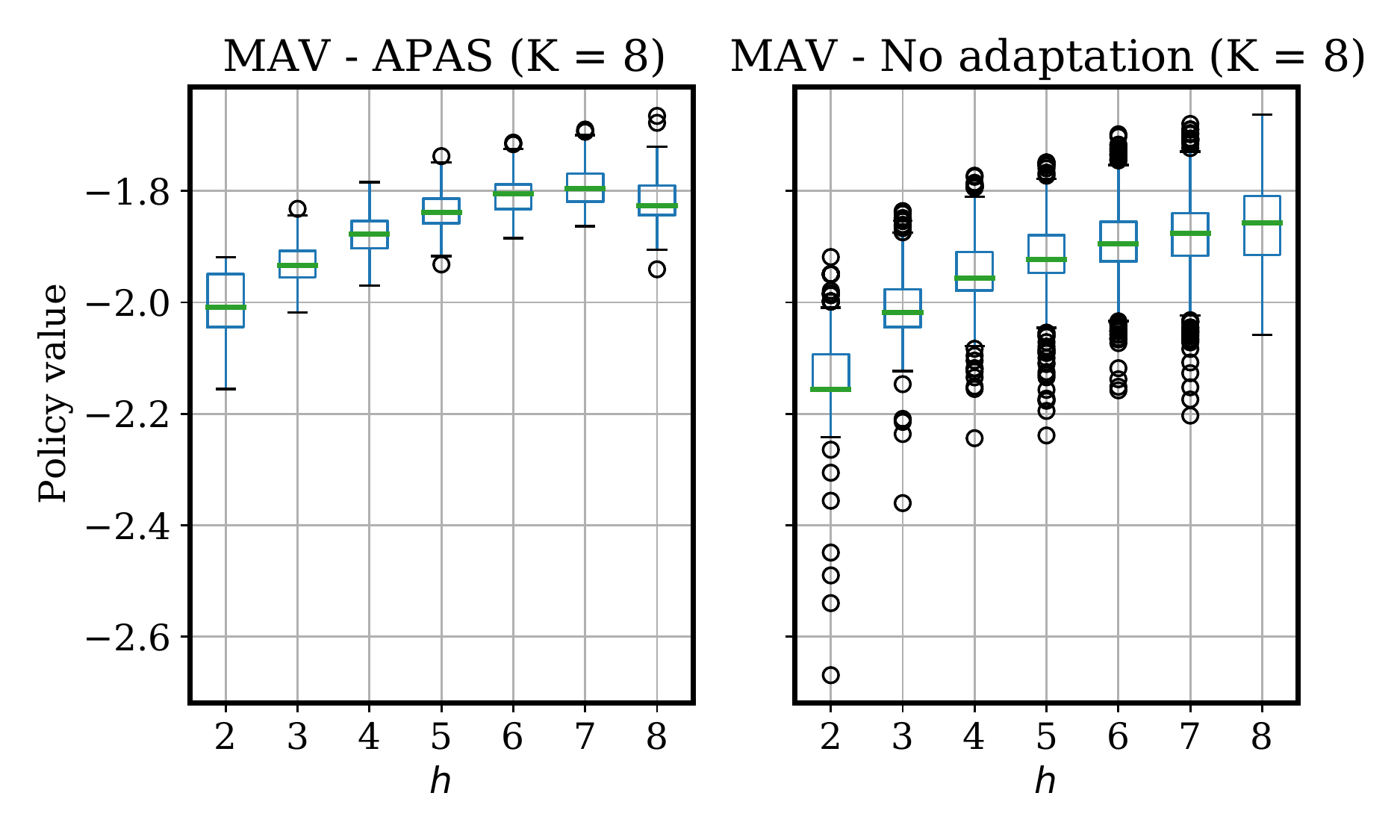}~
  \caption{Boxplots of policy values in the MAV domain found by APAS and APAS without the adaptation phase as a function of the horizon $h$. The plots are arranged in three groups of two plots.
  From left to right, the groups of two plots report results for $K=2$, 4, or 8 individual prediction actions.
  Within each group, the left plot reports the result for APAS, and the right plot the result for APAS without adaptation.}
  \label{fig:adaptation_comparison_mav}
\end{figure}
\begin{figure}
    \includegraphics[width=0.5\columnwidth]{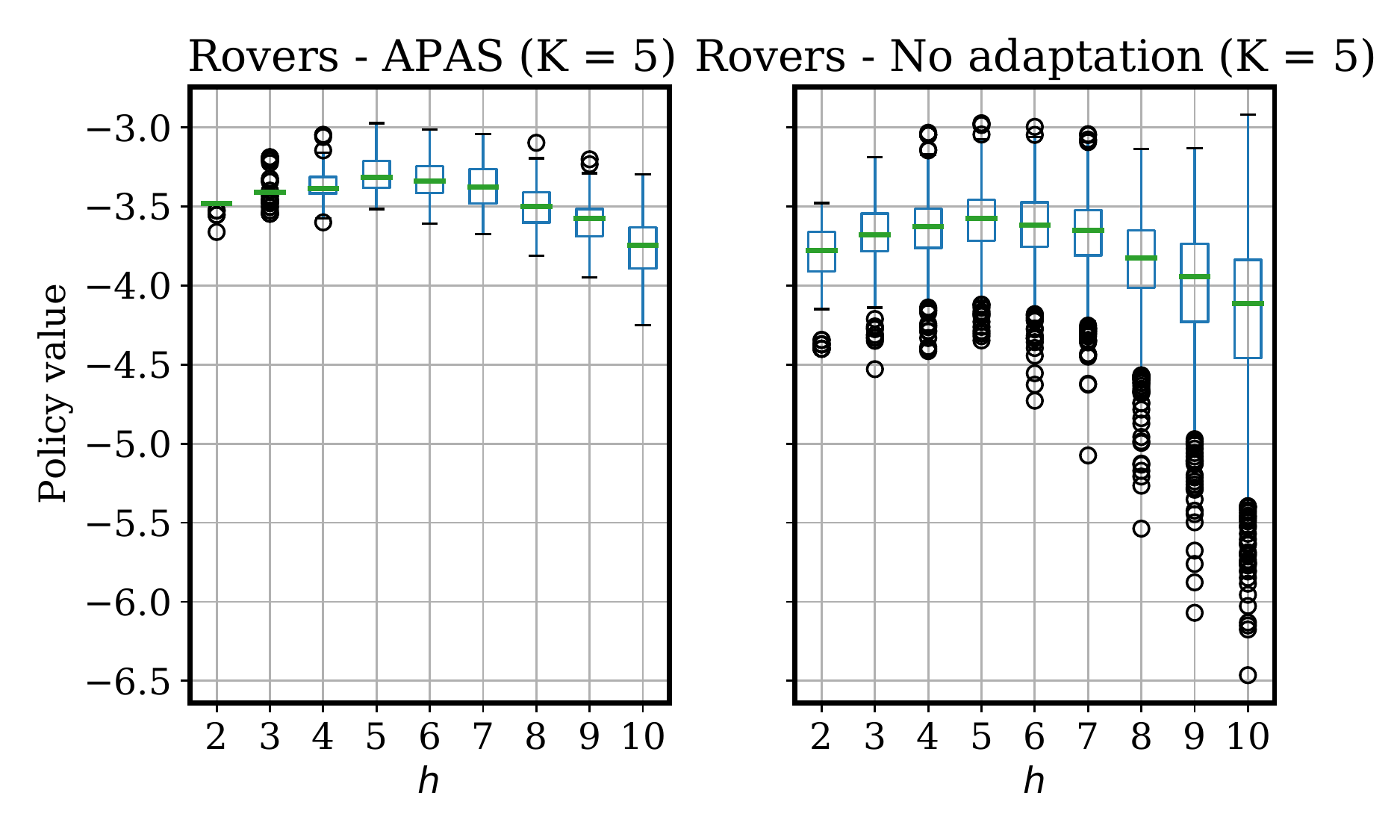}~
    \includegraphics[width=0.5\columnwidth]{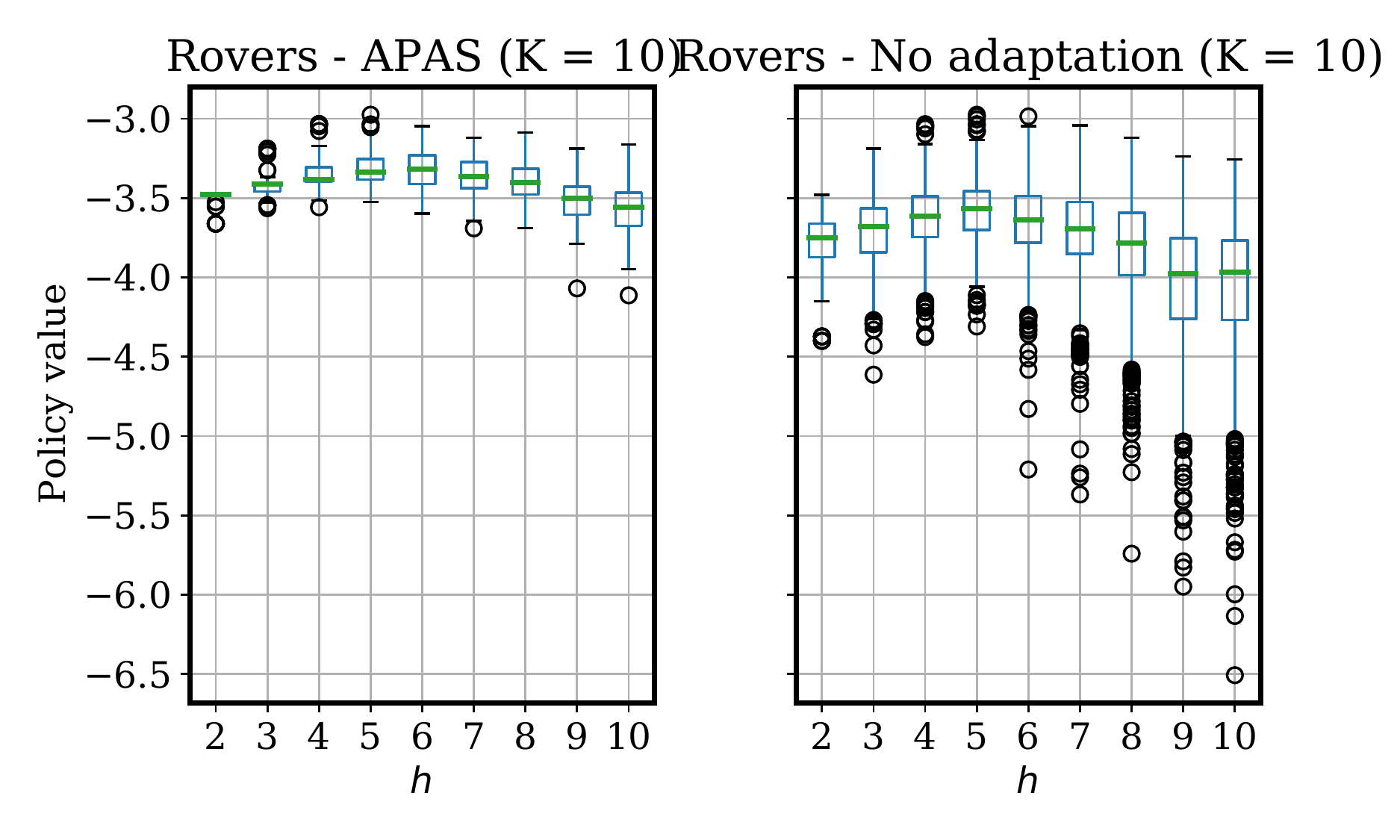}
  \caption{Boxplots of policy values in the Rovers domain found by APAS and APAS without the adaptation phase as a function of the horizon $h$.
  From left to right, the groups of two plots report results for $K=5$ or 10 individual prediction actions.
  Within each group, the left plot reports the result for APAS, and the right plot the result for APAS without adaptation.}
  \label{fig:adaptation_comparison_rovers}
\end{figure}
We present here more detailed results on removing the adaptation phase from APAS (Algorithm~\ref{alg:apas}).
Instead of executing the adaptation phase, we randomly sample the linearization points and corresponding $\alpha$-vectors.
Figures~\ref{fig:adaptation_comparison_mav} and~\ref{fig:adaptation_comparison_rovers} shows a comparison of policy values between APAS and APAS without the adaptation phase for the MAV and Rovers domains, respectively.
We see that including the adaptation phase consistently improves the average value of policies found in both problem domains.
Generally policy values are higher for APAS, while the variance is also lower, indicating usefulness of the adaptation phase.
Notably, the worst case performance is much better with the adaptation phase than without it.

We also note that as the horizon increases, both methods experience a decrease in average performance.
The greatest policy values found without adaptation sometimes exceed the values of policies found by APAS, e.g., for horizon $h=10$ with $K=5$ in the Rovers domain.
This suggests that further improvements to APAS might be possible by improving the adaptation phase.

\paragraph{Effect of the number of $\alpha$-vectors.}
The effect of the number $K$ of $\alpha$-vectors (number of individual prediction actions) on the performance of APAS is shown in Figure~\ref{fig:apas_k_mav} for the MAV domain and in Figure~\ref{fig:apas_k_rovers} for the rovers domain.
Each subplot shows for a particular planning horizon $h$ boxplots of the values of policies found by APAS as a function of $K$.
We observe that only in the Rovers domain for $h=10$ using $K=10$ individual prediction actions compared to $K=5$ results in slightly improved performance, although the difference is not very significant.

\begin{figure}
  \begin{tabular}{ccc}
    \includegraphics[width=0.3\textwidth]{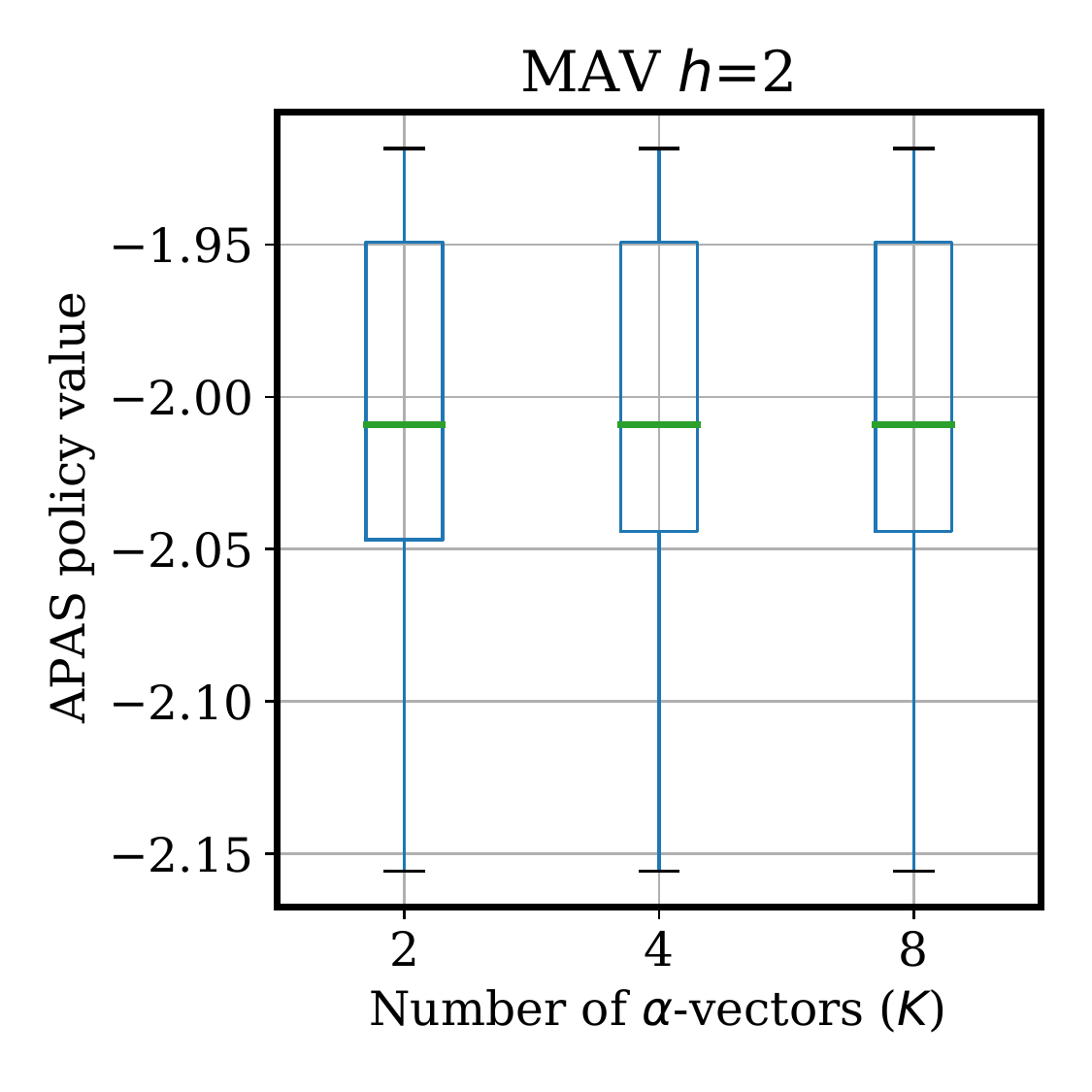} & \includegraphics[width=0.3\textwidth]{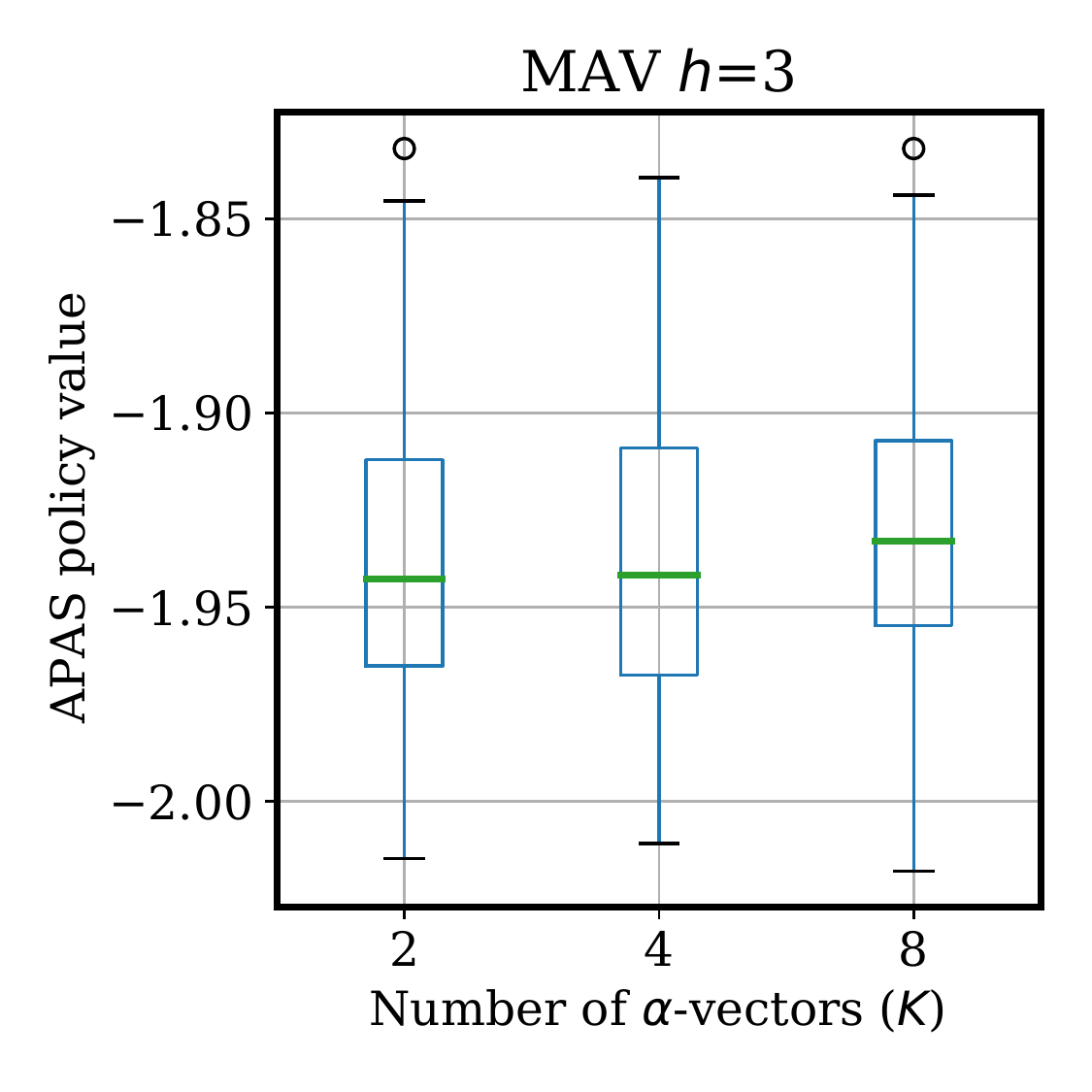} & \includegraphics[width=0.3\textwidth]{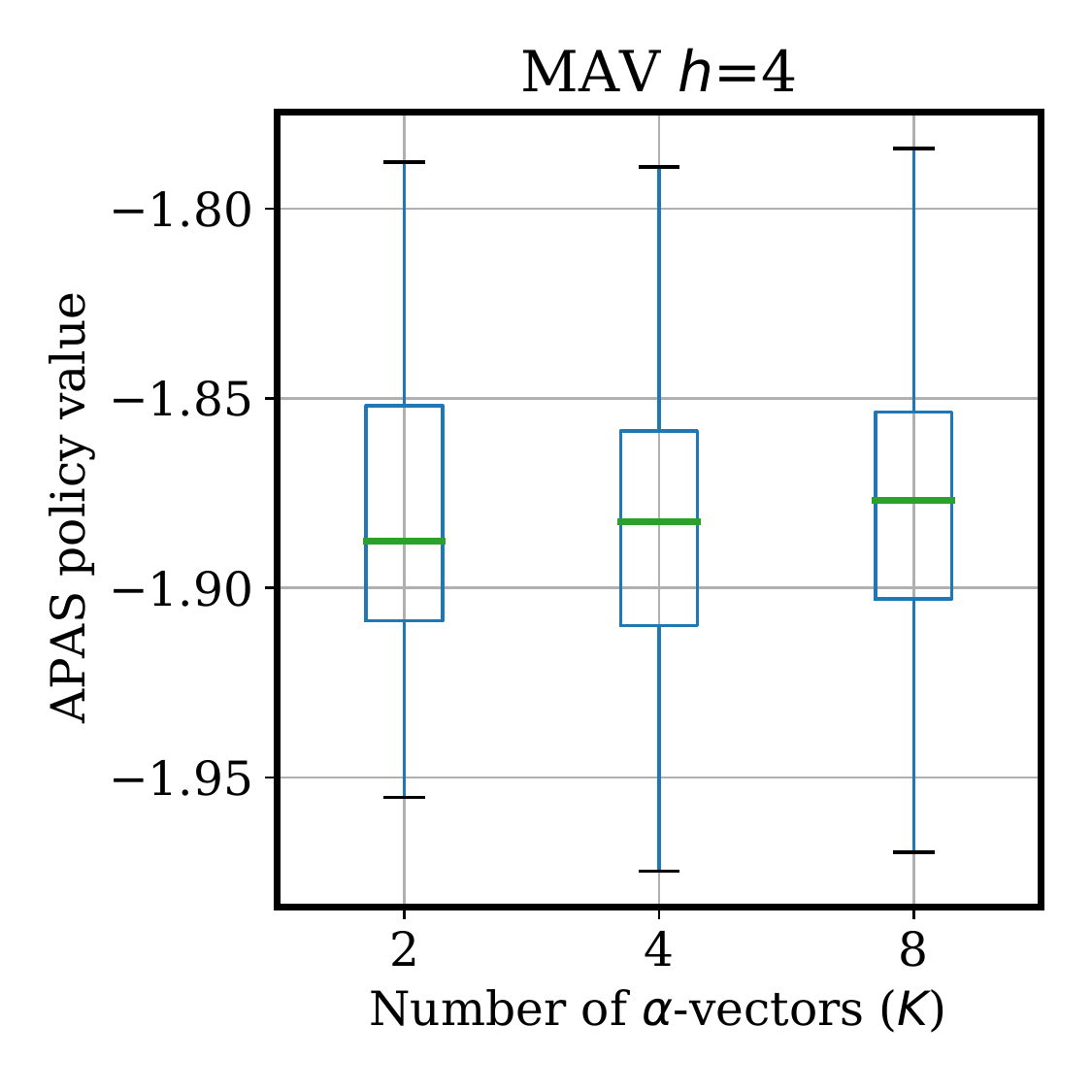} \\
    \includegraphics[width=0.3\textwidth]{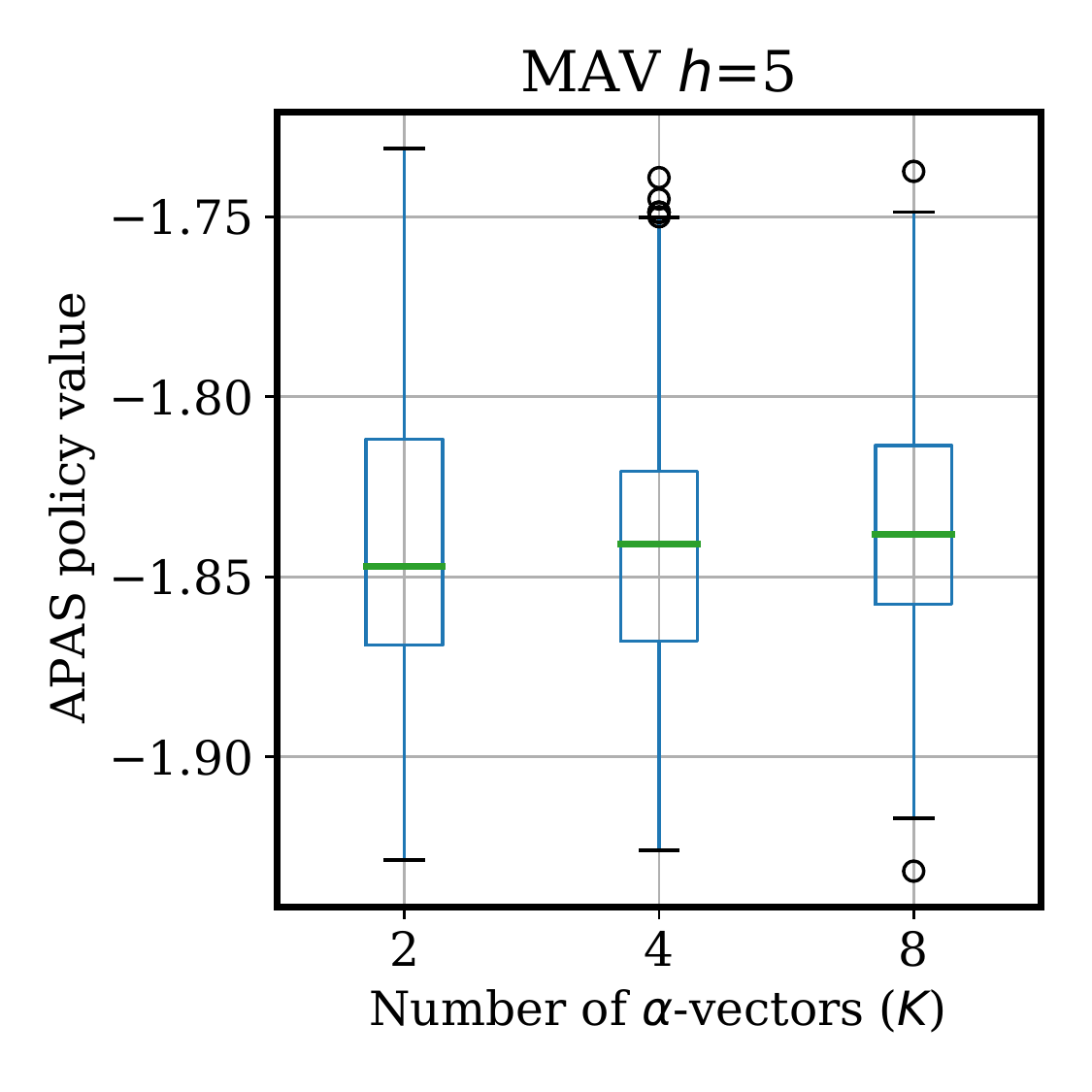}& \includegraphics[width=0.3\textwidth]{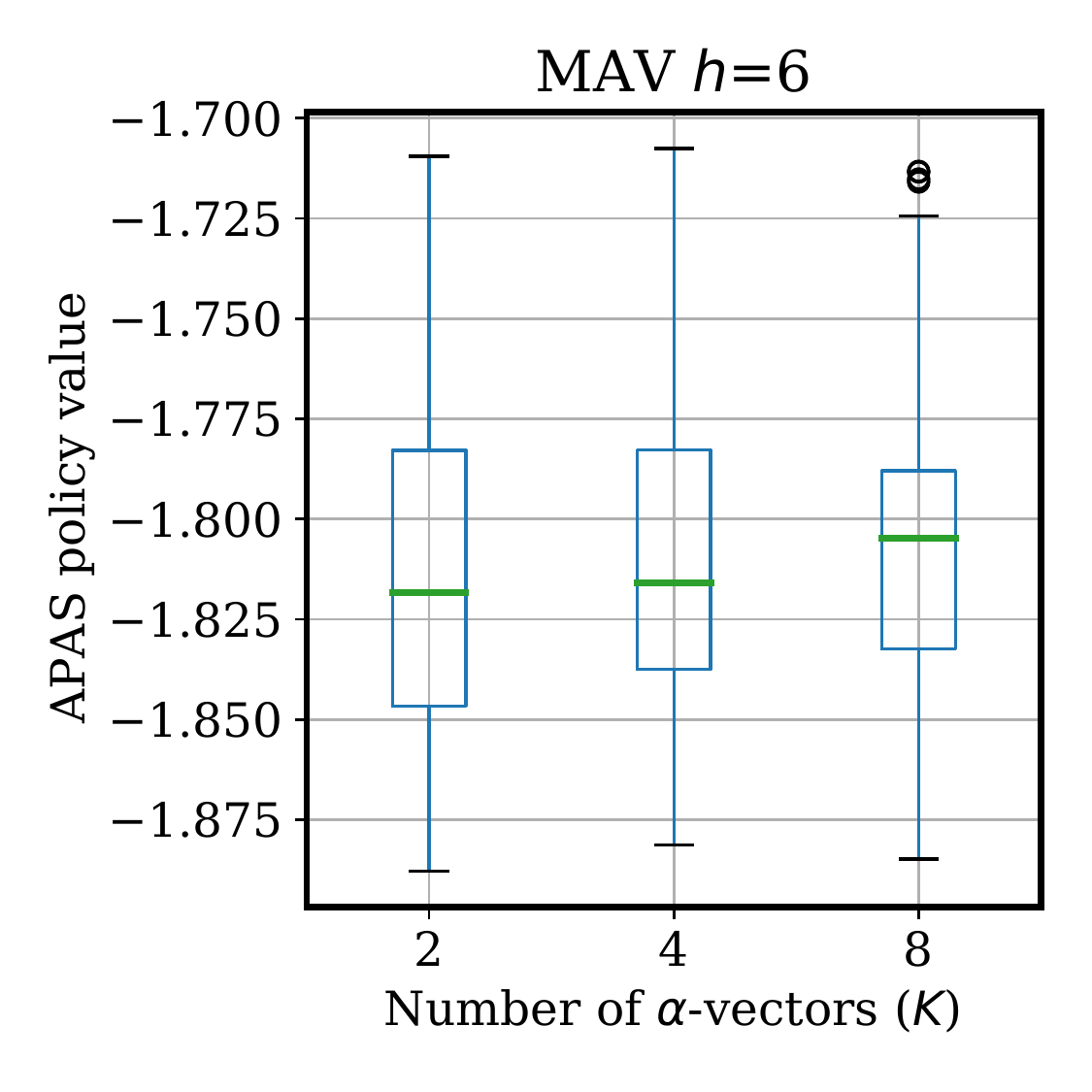}& \includegraphics[width=0.3\textwidth]{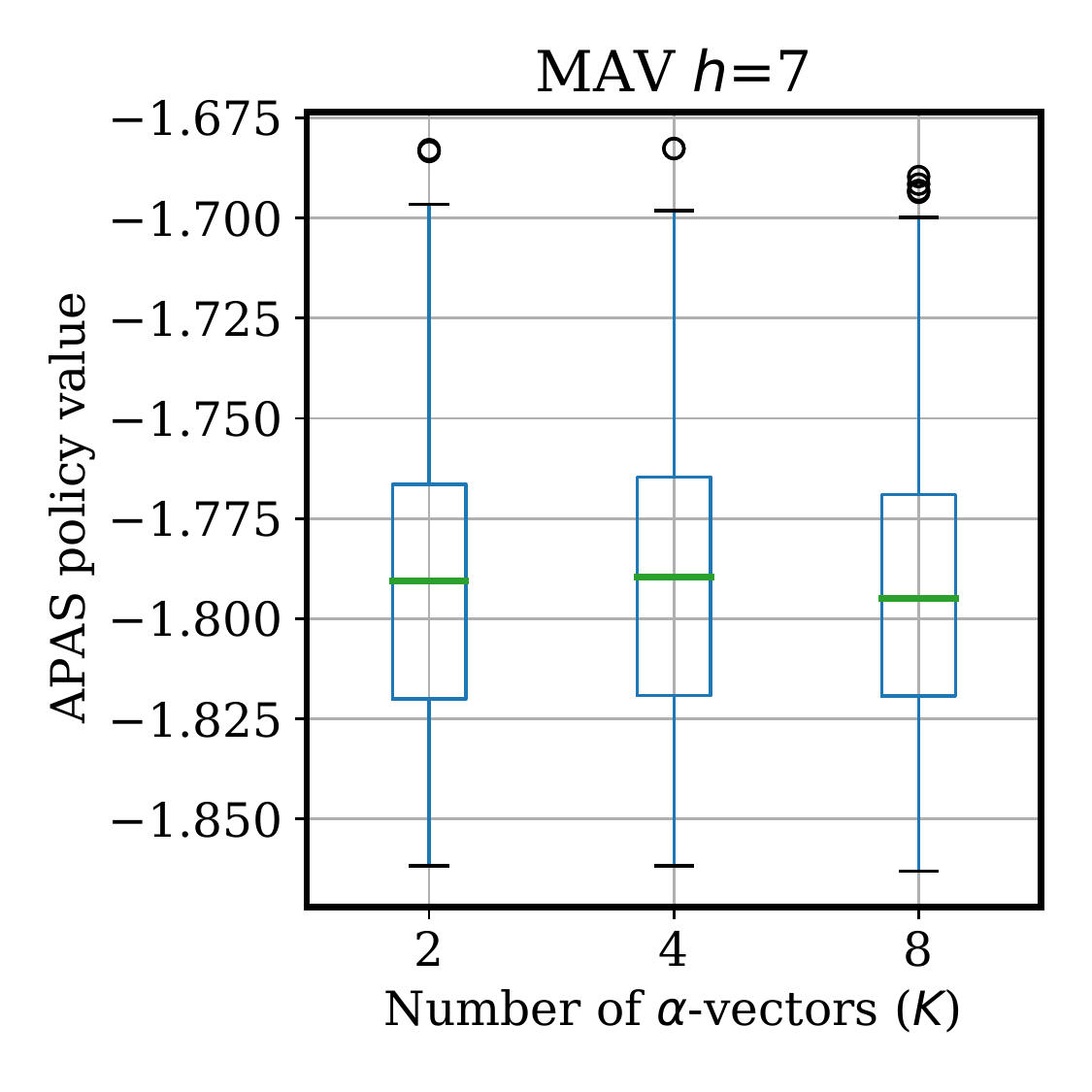} \\
    & \includegraphics[width=0.3\textwidth]{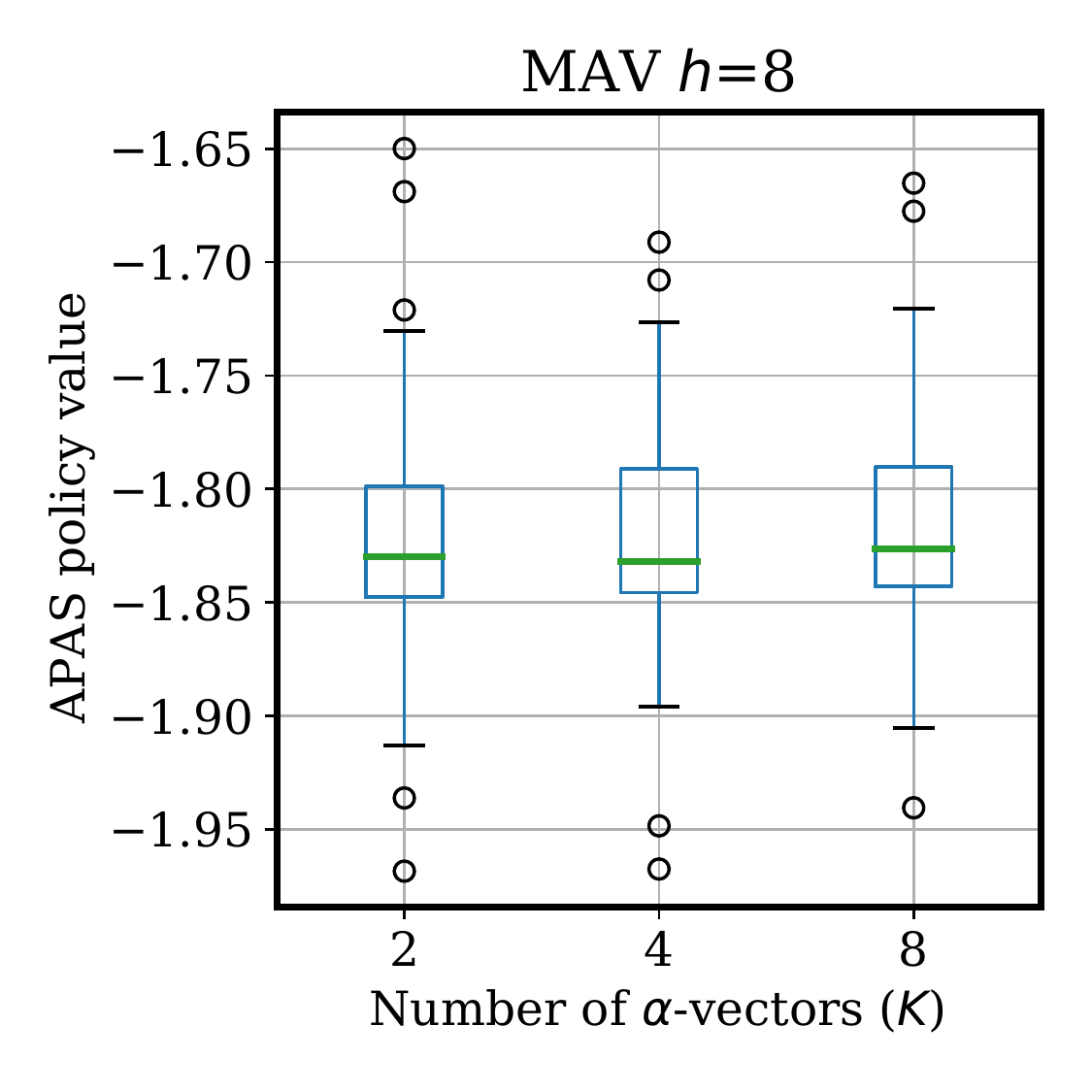} & 
  \end{tabular}
  \caption{Boxplots of values of policies found by APAS in the MAV domain as a function of the number $K$ of $\alpha$-vectors.}
  \label{fig:apas_k_mav}
\end{figure}

\begin{figure}
  \begin{tabular}{ccc}
    \includegraphics[width=0.3\textwidth]{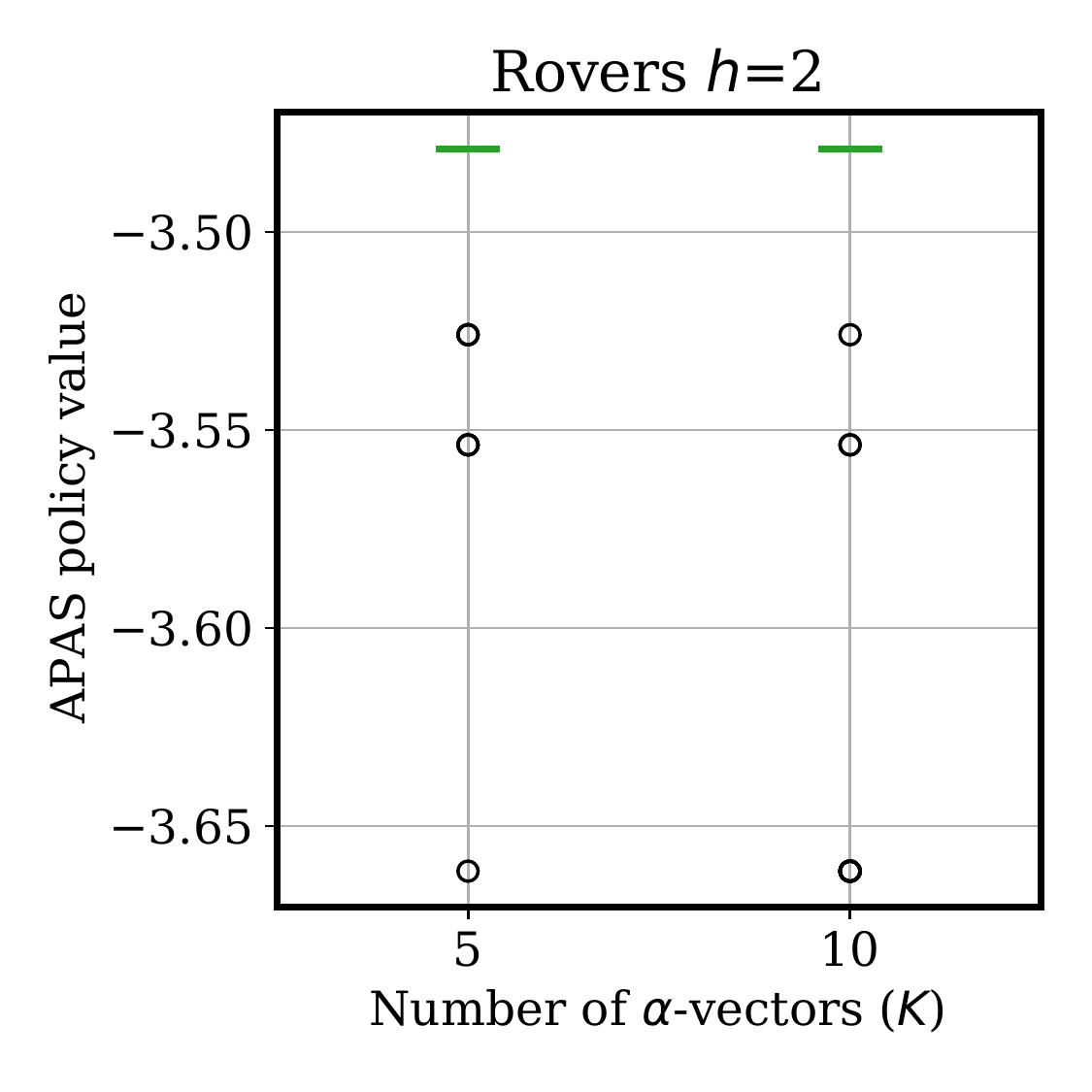} & \includegraphics[width=0.3\textwidth]{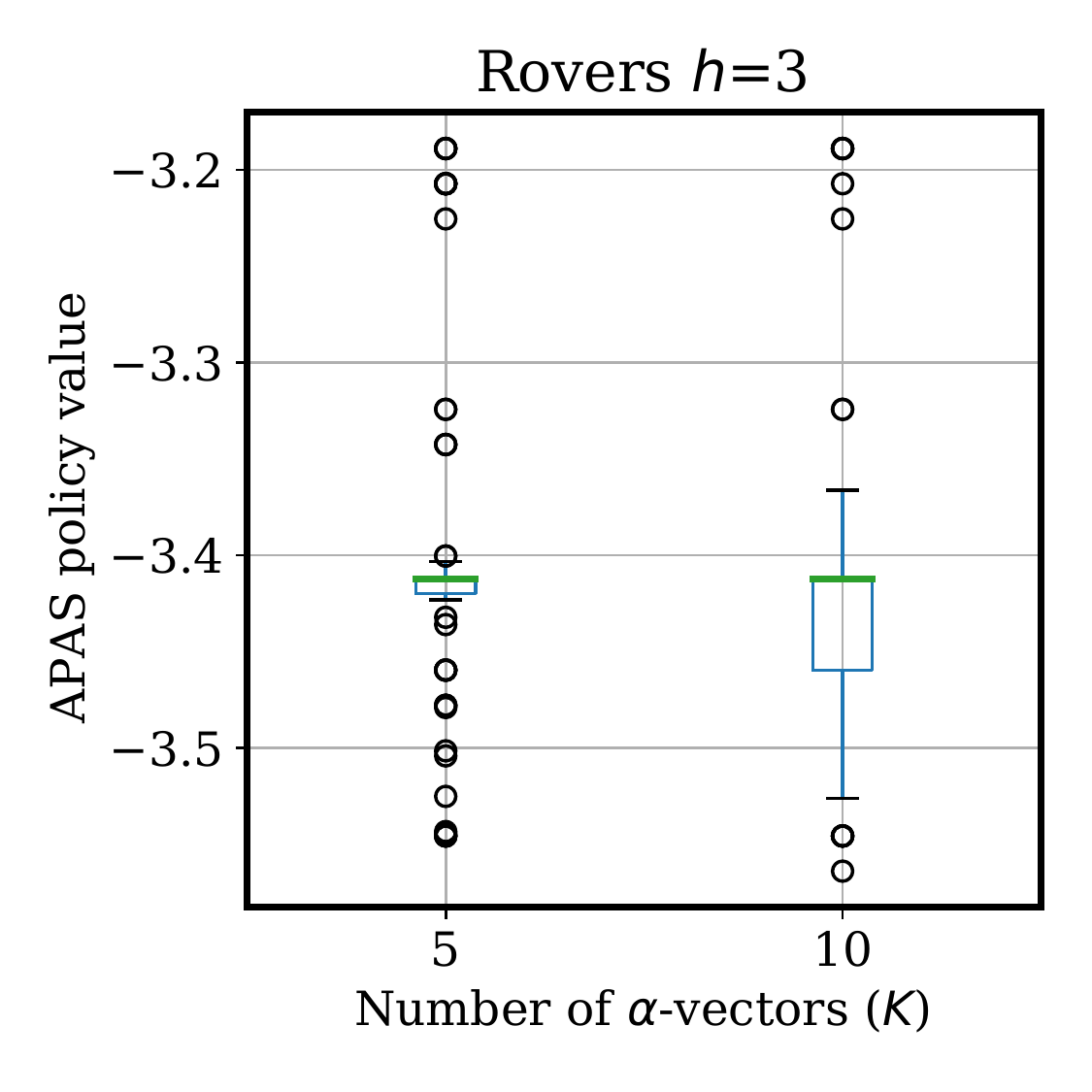} & \includegraphics[width=0.3\textwidth]{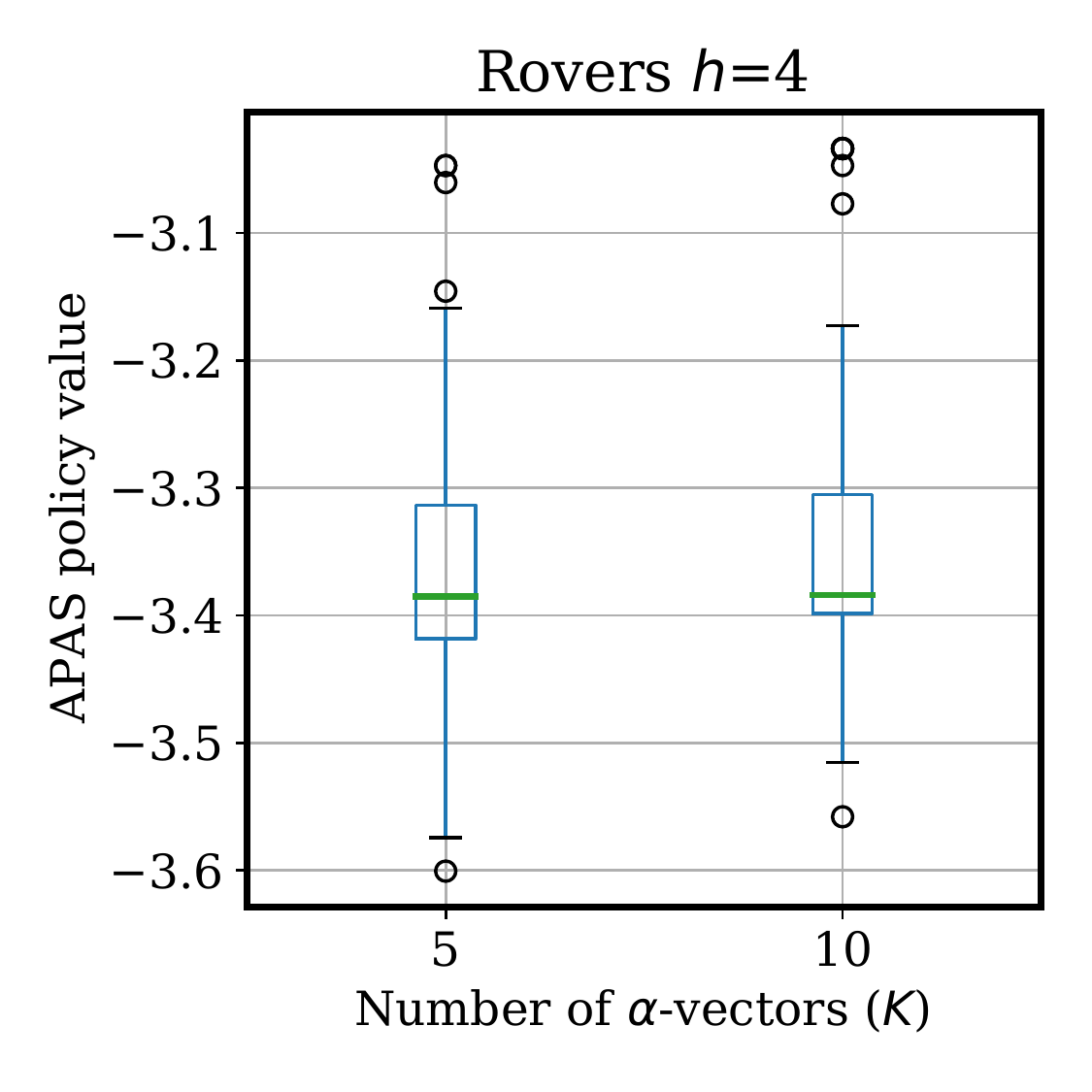} \\
    \includegraphics[width=0.3\textwidth]{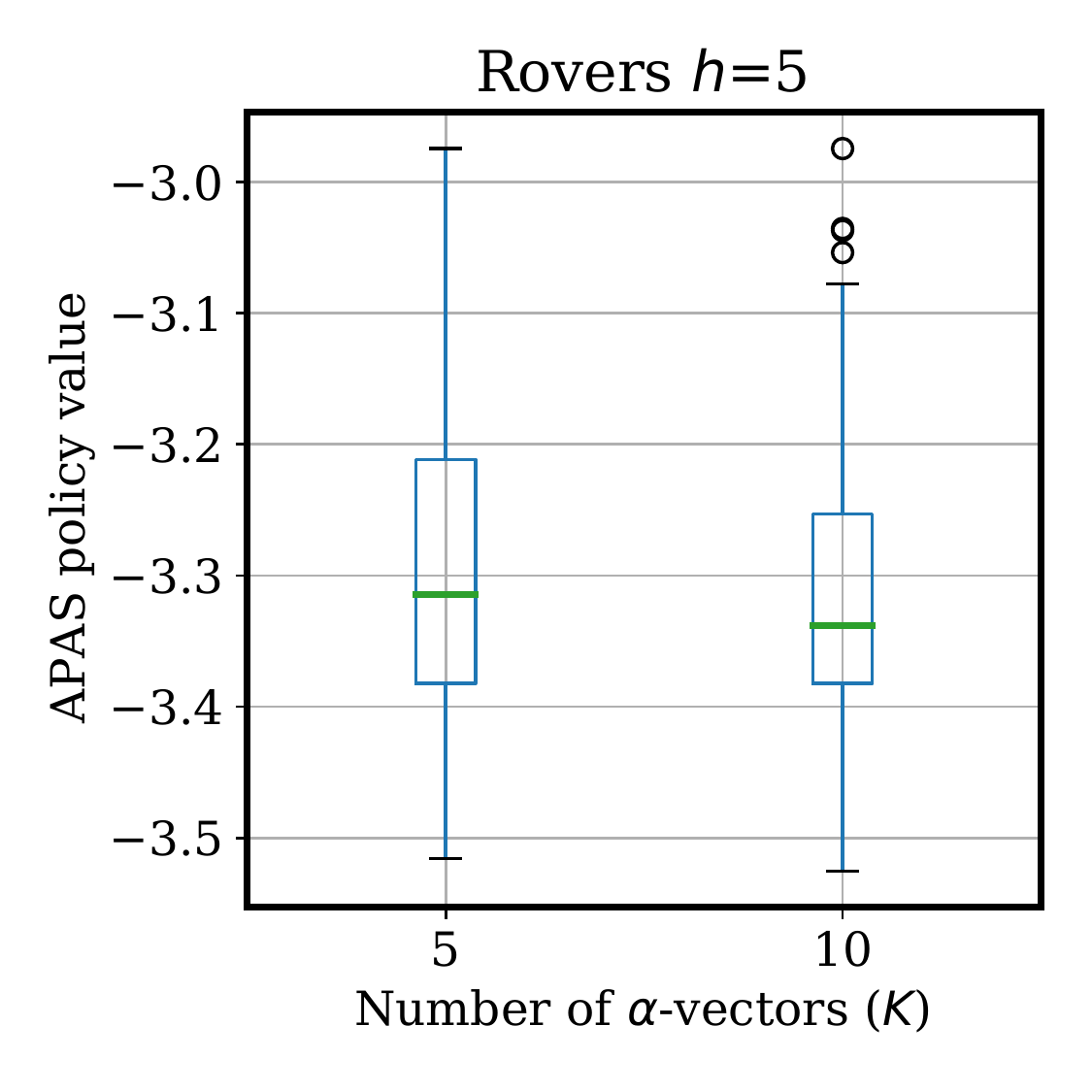}& \includegraphics[width=0.3\textwidth]{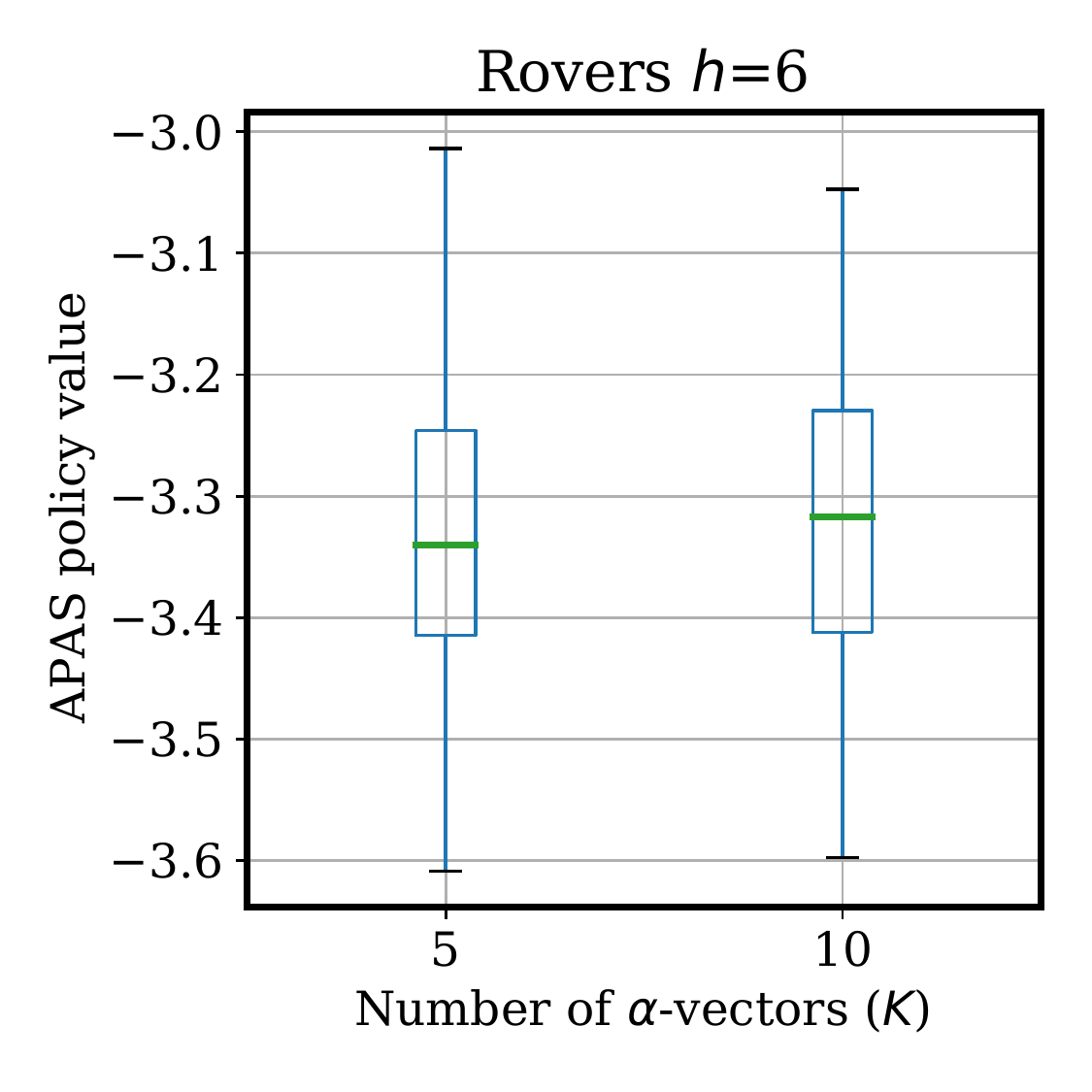}& \includegraphics[width=0.3\textwidth]{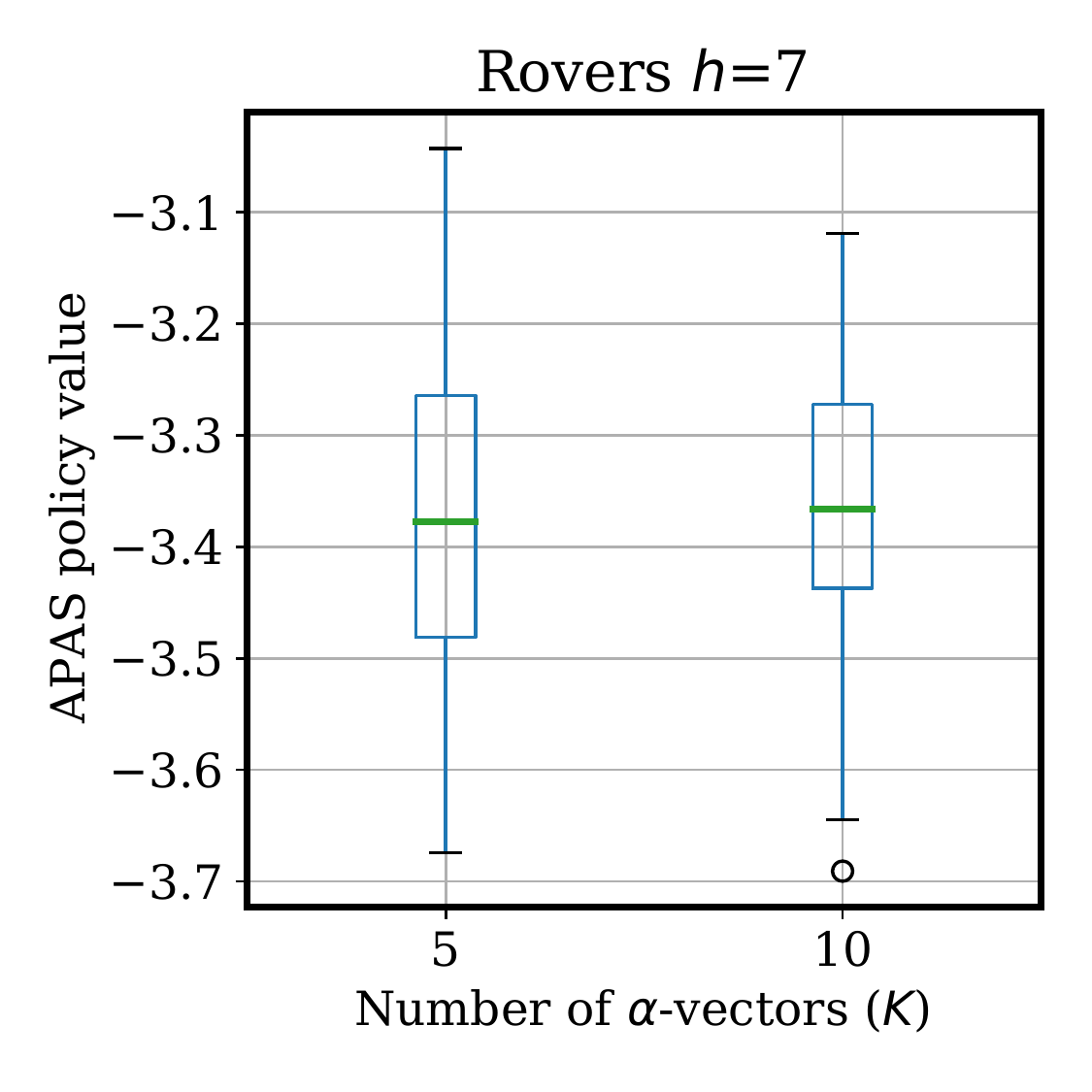} \\
    \includegraphics[width=0.3\textwidth]{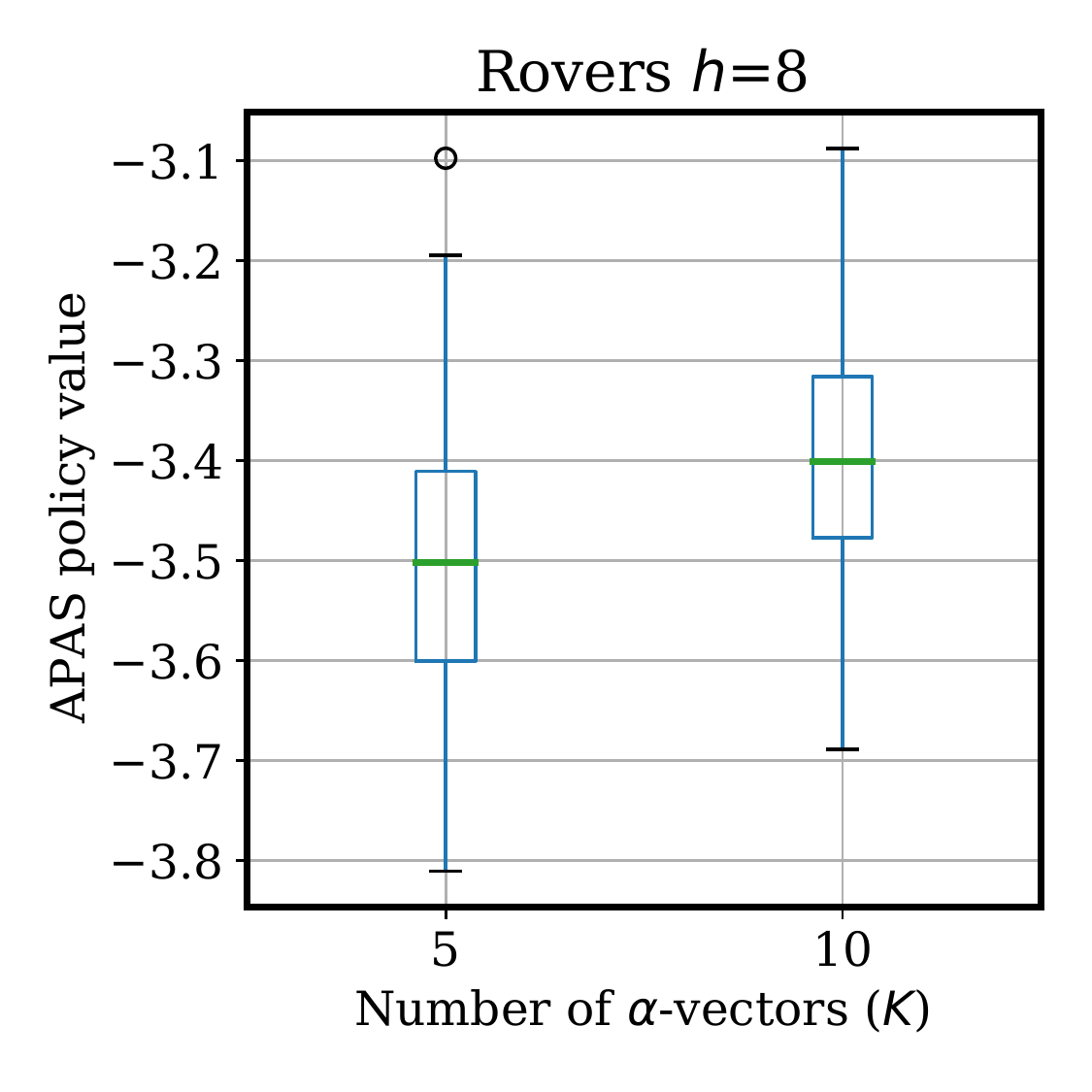}& \includegraphics[width=0.3\textwidth]{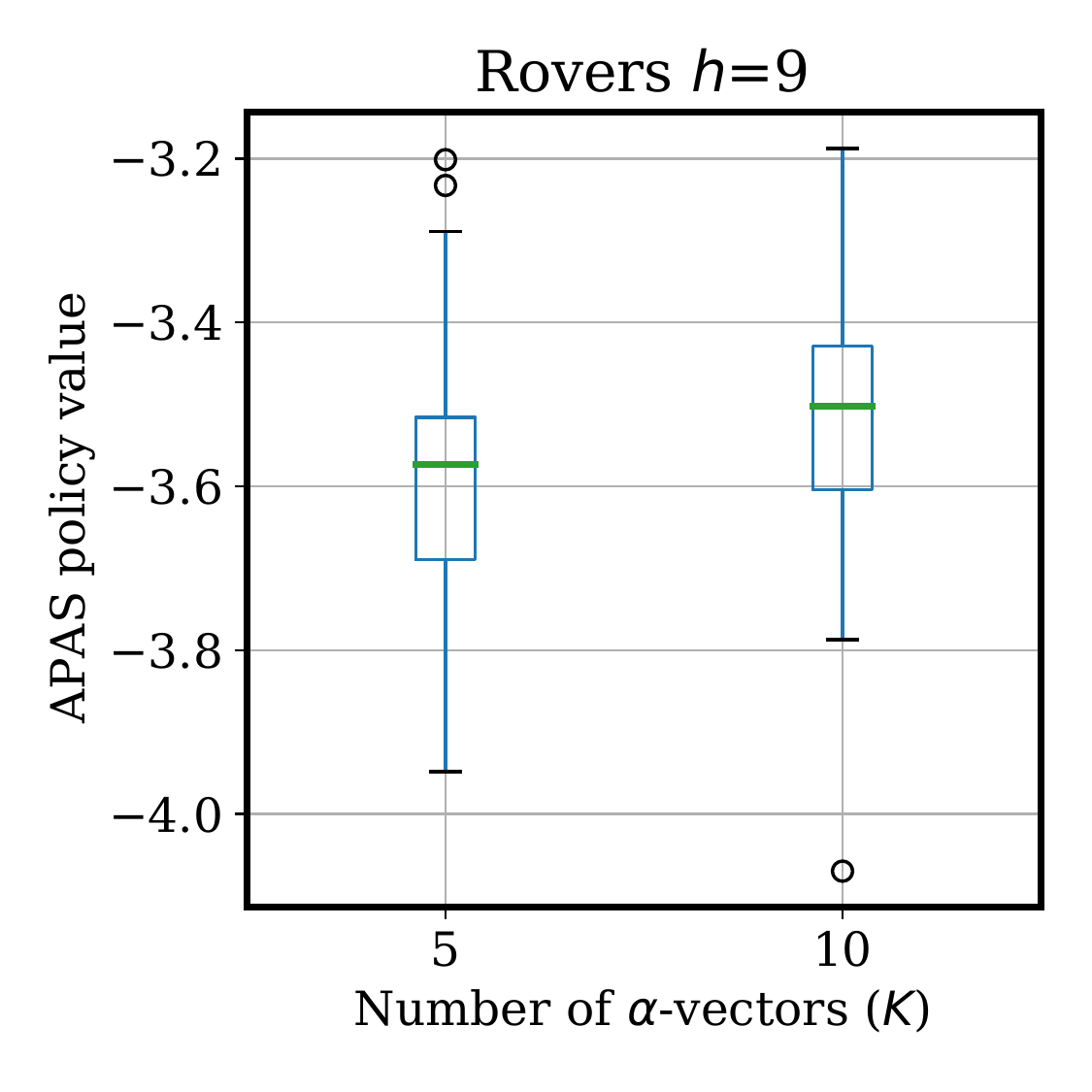}& \includegraphics[width=0.3\textwidth]{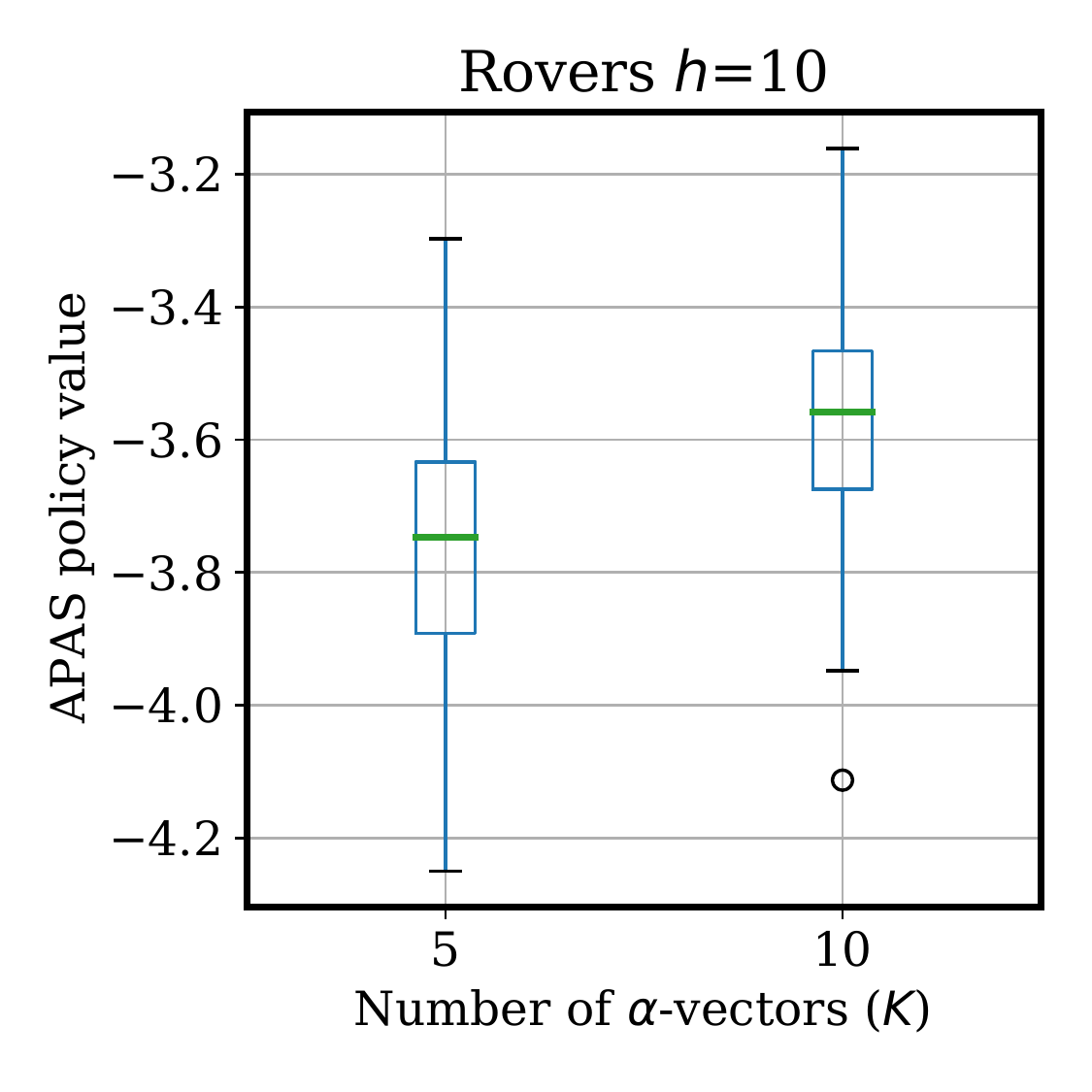} 
  \end{tabular}
  \caption{Boxplots of values of policies found by APAS in the Rovers domain as a function of the number $K$ of $\alpha$-vectors (number of individual prediction actions).}
  \label{fig:apas_k_rovers}
\end{figure}

\paragraph{Timing results.}
\begin{table}[t]
\centering
\caption{Average total runtime of APAS $\pm$ standard deviation in seconds in the MAV domain $(K=2)$ and the Rovers domain $(K=5)$.}
\label{tab:timing}
\begin{tabular}{@{}ccc@{}}
\toprule
Horizon $h$  & MAV (seconds) & Rovers (seconds) \\ \midrule
2  & 1.44 $\pm$ 0.15    & 21.05 $\pm$ 0.26       \\
3  & 2.01 $\pm$ 0.12    & 21.87 $\pm$ 0.28       \\
4  & 2.73 $\pm$ 0.14    & 22.96 $\pm$ 0.48       \\
5  & 3.99 $\pm$ 0.18    & 24.33 $\pm$ 0.58       \\
6  & 14.59 $\pm$ 0.21   & 27.39 $\pm$ 0.91       \\
7  & 190.14 $\pm$ 1.65  & 35.18 $\pm$ 2.62       \\
8  & 2909.11 $\pm$ 334.29 & 65.31 $\pm$ 4.63       \\
9  & ---    & 177.51 $\pm$ 19.41      \\
10 & ---    & 607.16 $\pm$ 90.82       \\ \bottomrule
\end{tabular}
\end{table}

Table~\ref{tab:timing} shows the average total duration of APAS runs for the MAV domain and Rovers domain using $K=2$ and $K=5$ prediction actions, respectively.
The average runtime increases strongly as the horizons increases.
Upon inspection, we noted that the majority of the time for long horizons was spent evaluating the policies (Line~\ref{line:evaluate} of Algorithm~\ref{alg:apas}).
We evaluate policies exactly, by computing all reachable state estimates to evaluate the final reward $f$ at them.
This suggests that further scaling in terms of the planning horizon is possible if switching to an approximate evaluation of policy value, e.g., by sampling and evaluating trajectories.
We did not explore the effect of the resulting noisy value estimates on the solution quality.
Somewhat surprisingly, Table~\ref{tab:timing} indicates lower runtimes for horizons $h\geq 7$ for the larger Rovers domain.
This is due to the domain structure.
In any state, most observations in Rovers have zero probability as the agent always correctly observes its own location.
These zero-probability observations do not need to be considered in value computation.

Finally, we can compare the time required by APAS and NPGI.
While a direct comparison is not fully informative due to differing implementations, we note that in the MAV domain with $h=5$ a runtime of around 30 seconds per \emph{single backward pass} is reported for NPGI in~\cite{Lauri_JAAMAS2020}.
As shown in Table~\ref{tab:timing}, for the same domain and horizon, APAS completes the planning (which in our case includes \emph{20 backward passes}) in about 4 seconds.

\section*{References}
\bibliographystyle{plainnat}
\begingroup
\renewcommand{\section}[2]{}%
\small
\bibliography{refs}

\begin{thebibliography}{27}
\providecommand{\natexlab}[1]{#1}
\providecommand{\url}[1]{\texttt{#1}}
\expandafter\ifx\csname urlstyle\endcsname\relax
  \providecommand{\doi}[1]{doi: #1}\else
  \providecommand{\doi}{doi: \begingroup \urlstyle{rm}\Url}\fi

\bibitem[Araya-L{\'o}pez et~al.(2010)Araya-L{\'o}pez, Buffet, Thomas, and
  Charpillet]{Araya2010}
Mauricio Araya-L{\'o}pez, Olivier Buffet, Vincent Thomas, and Francois
  Charpillet.
\newblock {A POMDP Extension with Belief-dependent Rewards}.
\newblock In \emph{Advances in Neural Information Processing Systems}, pages
  64--72, 2010.

\bibitem[Bajcsy et~al.(2018)Bajcsy, Aloimonos, and Tsotsos]{Bajcsy2018}
Ruzena Bajcsy, Yiannis Aloimonos, and John~K Tsotsos.
\newblock Revisiting active perception.
\newblock \emph{Autonomous Robots}, 42\penalty0 (2):\penalty0 177--196, 2018.

\bibitem[Bernstein et~al.(2002)Bernstein, Givan, Immerman, and
  Zilberstein]{Bernstein2002}
Daniel~S Bernstein, Robert Givan, Neil Immerman, and Shlomo Zilberstein.
\newblock The complexity of decentralized control of {Markov} decision
  processes.
\newblock \emph{Mathematics of Operations Research}, 27\penalty0 (4):\penalty0
  819--840, 2002.

\bibitem[Best et~al.(2019)Best, Cliff, Patten, Mettu, and Fitch]{best2019dec}
Graeme Best, Oliver~M Cliff, Timothy Patten, Ramgopal~R Mettu, and Robert
  Fitch.
\newblock {Dec-MCTS: Decentralized planning for multi-robot active perception}.
\newblock \emph{The International Journal of Robotics Research}, 38\penalty0
  (2-3):\penalty0 316--337, 2019.

\bibitem[Boyd and Vandenberghe(2004)]{boyd2004convex}
Stephen Boyd and Lieven Vandenberghe.
\newblock \emph{Convex Optimization}.
\newblock Cambridge University Press, 2004.

\bibitem[Capitan et~al.(2013)Capitan, Spaan, Merino, and Ollero]{Capitan2013}
Jesus Capitan, Matthijs~T.J. Spaan, Luis Merino, and Anibal Ollero.
\newblock Decentralized multi-robot cooperation with auctioned {POMDPs}.
\newblock \emph{The International Journal of Robotics Research}, 32\penalty0
  (6):\penalty0 650--671, 2013.

\bibitem[Corah and Michael(2019)]{Corah2019}
Micah Corah and Nathan Michael.
\newblock Distributed matroid-constrained submodular maximization for
  multi-robot exploration: Theory and practice.
\newblock \emph{Autonomous Robots}, 43\penalty0 (2):\penalty0 485--501, 2019.

\bibitem[Gharesifard and Smith(2017)]{Gharesifard2017}
Bahman Gharesifard and Stephen~L Smith.
\newblock Distributed submodular maximization with limited information.
\newblock \emph{IEEE Transactions on Control of Network Systems}, 5\penalty0
  (4):\penalty0 1635--1645, 2017.

\bibitem[Jain et~al.(2009)Jain, Taylor, Tambe, and Yokoo]{Jain2009}
Manish Jain, Matthew Taylor, Milind Tambe, and Makoto Yokoo.
\newblock {DCOPs meet the real world: Exploring unknown reward matrices with
  applications to mobile sensor networks}.
\newblock In \emph{Intl. Joint Conference on Artificial Intelligence (IJCAI)},
  pages 181--186, 2009.

\bibitem[{Lauri} et~al.(2017){Lauri}, {Hein\"{a}nen}, and
  {Frintrop}]{Lauri2017}
Mikko {Lauri}, Eero {Hein\"{a}nen}, and Simone {Frintrop}.
\newblock Multi-robot active information gathering with periodic communication.
\newblock In \emph{IEEE Intl. Conf. on Robotics and Automation (ICRA)}, pages
  851--856, 2017.

\bibitem[Lauri et~al.(2019)Lauri, Pajarinen, and
  Peters]{Lauri:2019:IGD:3306127.3331815}
Mikko Lauri, Joni Pajarinen, and Jan Peters.
\newblock {Information Gathering in Decentralized POMDPs by Policy Graph
  Improvement}.
\newblock In \emph{Intl. Conf. on Autonomous Agents and Multiagent Systems
  (AAMAS)}, pages 1143--1151, 2019.

\bibitem[Lauri et~al.(2020)Lauri, Pajarinen, and Peters]{Lauri_JAAMAS2020}
Mikko Lauri, Joni Pajarinen, and Jan Peters.
\newblock Multi-agent active information gathering in discrete and
  continuous-state decentralized {POMDPs} by policy graph improvement.
\newblock \emph{Autonomous Agents and Multi-Agent Systems}, 34\penalty0
  (42):\penalty0 1--44, 2020.
\newblock Issue no. 2.

\bibitem[Nguyen et~al.(2014)Nguyen, Yeoh, Lau, Zilberstein, and
  Zhang]{Nguyen2014}
Duc~Thien Nguyen, William Yeoh, Hoong~Chuin Lau, Shlomo Zilberstein, and
  Chongjie Zhang.
\newblock Decentralized multi-agent reinforcement learning in average-reward
  dynamic {DCOPs}.
\newblock In \emph{AAAI Conference on Artificial Intelligence}, pages
  1447--1455, 2014.

\bibitem[Oliehoek(2013)]{Oliehoek13IJCAI}
Frans~A. Oliehoek.
\newblock Sufficient plan-time statistics for decentralized {POMDPs}.
\newblock In \emph{Intl. Joint Conference on Artificial Intelligence (IJCAI)},
  pages 302--308, 2013.

\bibitem[Oliehoek and Amato(2016)]{Oliehoek2016}
Frans~A. Oliehoek and Christopher Amato.
\newblock \emph{{A Concise Introduction to Decentralized POMDPs}}.
\newblock Springer, 2016.

\bibitem[Oliehoek et~al.(2008)Oliehoek, Spaan, and
  Vlassis]{oliehoek2008optimal}
Frans~A. Oliehoek, Matthijs~TJ Spaan, and Nikos Vlassis.
\newblock Optimal and approximate {Q}-value functions for decentralized
  {POMDPs}.
\newblock \emph{Journal of Artificial Intelligence Research}, 32:\penalty0
  289--353, 2008.

\bibitem[Oliehoek et~al.(2012)Oliehoek, Witwicki, and
  Kaelbling]{Oliehoek2012influence}
Frans~A. Oliehoek, Stefan~J. Witwicki, and Leslie~P. Kaelbling.
\newblock Influence-based abstraction for multiagent systems.
\newblock In \emph{AAAI Conference on Artificial Intelligence}, pages
  1422--1428, 2012.

\bibitem[Oliehoek et~al.(2017)Oliehoek, Spaan, Terwijn, Robbel, and
  Messias]{oliehoek2017madp}
Frans~A. Oliehoek, Matthijs~TJ Spaan, Bas Terwijn, Philipp Robbel, and
  Jo{\~a}o~V Messias.
\newblock The {MADP} toolbox: an open source library for planning and learning
  in (multi-) agent systems.
\newblock \emph{The Journal of Machine Learning Research}, 18\penalty0
  (1):\penalty0 3112--3116, 2017.

\bibitem[Pajarinen and Peltonen(2011)]{Pajarinen2011}
Joni~K Pajarinen and Jaakko Peltonen.
\newblock {Periodic Finite State Controllers for Efficient POMDP and DEC-POMDP
  Planning}.
\newblock In \emph{Advances in Neural Information Processing Systems}, pages
  2636--2644. 2011.

\bibitem[Papadimitriou and Tsitsiklis(1987)]{papadimitriou1987complexity}
Christos~H Papadimitriou and John~N Tsitsiklis.
\newblock The complexity of markov decision processes.
\newblock \emph{Mathematics of operations research}, 12\penalty0 (3):\penalty0
  441--450, 1987.

\bibitem[Satsangi et~al.(2018)Satsangi, Whiteson, Oliehoek, and
  Spaan]{Satsangi2018}
Yash Satsangi, Shimon Whiteson, Frans~A. Oliehoek, and Matthijs T.~J. Spaan.
\newblock Exploiting submodular value functions for scaling up active
  perception.
\newblock \emph{Autonomous Robots}, 42\penalty0 (2):\penalty0 209--233, 2018.

\bibitem[Satsangi et~al.(2020)Satsangi, Lim, Whiteson, Oliehoek, and
  White]{Satsangi2020}
Yash Satsangi, Sungsu Lim, Shimon Whiteson, Frans~A. Oliehoek, and Martha
  White.
\newblock Maximizing information gain in partially observable environments via
  prediction rewards.
\newblock In \emph{Intl. Conf. on Autonomous Agents and Multiagent Systems
  (AAMAS)}, pages 1215--1223, 2020.

\bibitem[Singh et~al.(2009)Singh, Krause, Guestrin, and Kaiser]{Singh2009}
Amarjeet Singh, Andreas Krause, Carlos Guestrin, and William~J Kaiser.
\newblock Efficient informative sensing using multiple robots.
\newblock \emph{Journal of Artificial Intelligence Research}, 34:\penalty0
  707--755, 2009.

\bibitem[Smith and Tromble(2004)]{smith2004sampling}
Noah~A Smith and Roy~W Tromble.
\newblock Sampling uniformly from the unit simplex.
\newblock \emph{Johns Hopkins University, Techical Report}, 2004.

\bibitem[Spaan and Lima(2009)]{spaan2009decision}
Matthijs~TJ Spaan and Pedro~U Lima.
\newblock A decision-theoretic approach to dynamic sensor selection in camera
  networks.
\newblock In \emph{Intl. Conf. on Automated Planning and Scheduling (ICAPS)},
  pages 297--304, 2009.

\bibitem[Spaan et~al.(2015)Spaan, Veiga, and Lima]{spaan2015decision}
Matthijs~TJ Spaan, Tiago~S Veiga, and Pedro~U Lima.
\newblock Decision-theoretic planning under uncertainty with information
  rewards for active cooperative perception.
\newblock \emph{Autonomous Agents and Multi-Agent Systems}, 29\penalty0
  (6):\penalty0 1157--1185, 2015.

\bibitem[Taylor et~al.(2010)Taylor, Jain, Jin, Yokoo, and Tambe]{Taylor2010}
Matthew~E Taylor, Manish Jain, Yanquin Jin, Makoto Yokoo, and Milind Tambe.
\newblock When should there be a ``me'' in a ``team''?: Distributed multi-agent
  optimization under uncertainty.
\newblock In \emph{Intl. Conf. on Autonomous Agents and Multiagent Systems
  (AAMAS)}, pages 109--116, 2010.

\end{thebibliography}
\endgroup

\end{document}